\title{ Differentially Private Kernelized Contextual Bandits}
 \author{	
	 Nikola Pavlovic\thanks{
         Department of Electrical and Computer Engineering,
         Cornell University;
         \texttt{\{np358,qz16\}@cornell.edu}.} \\
         Cornell     \\
	 \and
     Sudeep Salgia\thanks{
         Department of Electrical and Computer Engineering,
         Carnegie Mellon University;
         \texttt{ssalgia@andrew.cmu.edu}.}   \\
	 Carnegie Mellon   \\
     \and
	     Qing Zhao\footnotemark[1]\\
         Cornell   \\
 	} 
\date{January 2025}
\definecolor{Gray}{gray}{0.85}
\newcommand{\comm}[1]{}
\newcommand{\hatZ}{\widehat{\mathbf{Z}}}
\newcommand{\tildeZ}{\widetilde{\mathbf{Z}}}
\newtheorem{theorem}{Theorem}[section]
\newtheorem{lemma}[theorem]{Lemma}
\theoremstyle{definition}
\newtheorem{definition}[theorem]{Definition}
\newtheorem{assumption}[theorem]{Assumption}
\theoremstyle{remark}
\newcommand{\E}{\mathbb{E}}
\newcommand{\N}{\mathbb{N}}
\newcommand{\R}{\mathbb{R}}
\newcommand{\cA}{\mathcal{A}}
\newcommand{\cC}{\mathcal{C}}
\newcommand{\cD}{\mathcal{D}}
\newcommand{\cE}{\mathcal{E}}
\newcommand{\cH}{\mathcal{H}}
\newcommand{\cM}{\mathcal{M}}
\newcommand{\cO}{\mathcal{O}}
\newcommand{\cP}{\mathcal{P}}
\newcommand{\cS}{\mathcal{S}}
\newcommand{\cU}{\mathcal{U}}
\newcommand{\cW}{\mathcal{W}}
\newcommand{\cX}{\mathcal{X}}
\newcommand{\cY}{\mathcal{Y}}
\newcommand{\cZ}{\mathcal{Z}}
\newcommand{\bfk}{\mathbf{k}}
\newcommand{\bfA}{\mathbf{A}}
\newcommand{\bfH}{\mathbf{H}}
\newcommand{\bfK}{\mathbf{K}}
\newcommand{\bfW}{\mathbf{W}}
\newcommand{\bfY}{\mathbf{Y}}
\newcommand{\bfZ}{\mathbf{Z}}
\newcommand{\sA}{\mathscr{A}}
\DeclareMathOperator*{\argmin}{arg\,min}
\newcommand{\ip}[2] {\langle #1, #2 \rangle }
\newcommand{\Id}{\textbf{Id}}
\begin{document}

\noindent

\noindent
\maketitle

\begin{abstract}
  We consider the problem of contextual kernel bandits with stochastic contexts, where the underlying reward function belongs to a known Reproducing Kernel Hilbert Space (RKHS). We study this problem under the additional constraint of joint differential privacy, where the agents needs to ensure that the sequence of query points is differentially private with respect to both the sequence of contexts and rewards. We propose a novel algorithm that improves upon the state of the art  and achieves an error rate of $\cO\left(\sqrt{\dfrac{\gamma_T}{T}} + \dfrac{\gamma_T}{T \varepsilon}\right)$ after $T$ queries for a large class of kernel families, where $\gamma_T$ represents the effective dimensionality of the kernel and $\varepsilon > 0$ is the privacy parameter. Our results are based on a novel estimator for the reward function that simultaneously enjoys high utility along with a low-sensitivity to observed rewards and contexts, which is crucial to obtain an order optimal learning performance with improved dependence on the privacy parameter.
\end{abstract}

\section{Introduction}


We study the problem of contextual kernel bandits, where an agent aims to maximize an unknown reward function based on noisy observations of the function at sequentially queried points. Specifically, at each time instant $t$, the agent is presented with a context $c_t \in \cC$, based on which it takes an action $x_t \in \cX$ and receives a noisy value of the reward $f(x_t, c_t)$. In this work, we consider the case where the reward function $f:\cX\times \cC \rightarrow \mathbb{R}$ belongs to a Reproducing Kernel Hilbert space (RKHS) of a known kernel $k$ and the contexts $c_t$ are drawn i.i.d. from a context distribution $\kappa$. Kernel bandits offer significantly more modelling capabilities compared to their linear counter-parts . In particular, it is known that the RKHS of typical kernels, such as the Mat\'ern family of kernels, can approximate almost all continuous functions on compact subsets of $\R^d$ \citep{Srinivas}. We measure the performance of an agent using expected predictive error rate, which is akin to the popular notion of simple regret, adapted for the contextual setting. In particular, after $T$ rounds of interaction, let $\widehat{x}_T(c_{T+1})$ denote the output of the algorithm $\sA$ for an observed context $c_{T+1}$. Then the error rate of the algorithm is defined as
\begin{align}
    \textsc{ER}(\sA) &= \mathop{\E}_{c_{T+1} \sim \kappa}\left[\sup_{x \in \cX} f(c_{T+1}, x) - f(c_{T+1}, \widehat{x}_T(c_{T+1}))\right]. \label{eqn:er_def}
\end{align}
The learning objective of the agent is to minimize the worst-case error rate over the class functions with a given bounded RKHS norm.

\subsection{Private Kernel Bandits}\label{sec:private_kernel_badnits}

In many applications, the contexts and the rewards may carry sensitive information, that might be inadvertently revealed by the algorithm through its choice of query points. For example, consider the problem of learning a recommendation system for an online shopping platform. At each time instant, the learning agent observes a random user along with their associated information, e.g., their search and purchase history, and then chooses a product to recommend and subsequently observes whether the user interacts with the recommended item. The objective for the learning agent is to design a predictor with a low error rate on a newly observed user from the distribution. The analogy to the contextual bandit setting is almost immediate --- the user associated information serves as a context, the recommended product is the action and the user feedback is the reward. While the user related information,  and user feedback, provide valuable data for learning a good predictor, they contain private information about the user that needs to be protected.

This motivates  the problem of contextual kernel bandits under privacy constraints. We adopt the framework of joint differential privacy (JDP)\citep{ShariffandSheffet,DubeyPentland2020}, where we seek to design algorithms that are  differentially private with respect to both the context and the reward sequence (See Section~\ref{JDP} for a precise definition).
The primary challenge in designing differentially private learning algorithms is to balance the privacy-utility trade-off, i.e., to ensure meaningful learning while guaranteeing the privacy of the dataset. While there has been some effort towards designing differentially algorithm for multi-armed and linear bandits, the setting of kernelized bandits remains largely unexplored. Existing results on differentially private kernel bandits either apply only to a small class of kernel families or adopt a weaker notion of privacy (See Section~\ref{related_works:kernel_privacy} for additional discussion). In particular, there are no  differentially private algorithms that achieve diminishing error rate under  JDP for the commonly used kernel families, e.g., Mat\'ern kernels.

\subsection{Main Results}

We propose the first algorithm for contextual kernel bandits that is jointly differentially private with respect to the contexts and the rewards and theoretically guarantees a diminishing simple regret for all kernels with polynomially decaying eigen-values. Kernels with polynomial eigen decay include the class of commonly used kernels like Mat\'ern and Square exponential kernels. In particular, we establish a worst case error rate of $\cO\left(\sqrt{\gamma_T/T}+\gamma_T/(T\varepsilon)\right)$, where $\gamma_T$ is the information gain and represents the effective dimensionality of the kernel and $\varepsilon$ is the privacy parameter. In the non-private setting i.e., $\varepsilon \to \infty$, this reduces to an error rate of $\cO(\sqrt{\gamma_T/T})$, which is known to be order-optimal \citep{Scalett_et_al_2017}. The best current simple regret upper bound In the private setting, derived from the  bound for the cumulative regret, is $\cO(\sqrt{\gamma_T/T}+\sqrt{\gamma_T/(T\varepsilon}))$. Notably this bound only applies to Square exponential kernels.

In sequential learning problems, the dataset with respect to which the algorithm needs to guarantee privacy continues to expand as new data points arrive over time. Consequently, this forces the algorithms to add a small additional layer of privacy for each query point. This injection of additional privacy induced noise at each time instant, in absence of a careful control, compounds over time and leads to poor utility. For the problem of private linear bandits \citep{ShariffandSheffet,DubeyPentland2020}, existing studies avoid this pitfall through the use of the tree-based mechanism 
\citep{Dwork}. Tree-based mechanism allows for online release of prefix-sums, where the noise added to the sums does not scale linearly but rather logarithmically; thereby alleviating the noise compounding effect. For the problem of linear bandits, the key challenge is to privatize the covariance matrix for the ridge regression estimate. This is solved by noting that the covariance has an additive structure (sum of rank-one matrices) in the feature space, thereby allowing the use of tree-based mechanism. Taking this approach ensures that the privacy based error grows logarithmically, instead of linearly, resulting in meaningful utility bounds. However, in kernel based bandits, the features belong to an infinite-dimensional space which renders the use of this technique infeasible.

The work of \citep{Dubey2021}, which is the current state of the art, circumvents this issue by approximating the kernel with a low-dimensional surrogate, which allows them to reduce the problem to that of a finite-dimensional linear bandit. However, this approach is limited to Square Exponential kernels. It is not clear how such an approach can be applied to other kernels, e.g. the Mat\'ern family of kernels.

The proposed technique in this work offers a departure from prevailing approaches to resolve the pitfalls of ensuring privacy.
The common approach adopted by existing algorithms to address the privacy-utility trade-off is to privatize a high utility estimator. Namely, \citep{ShariffandSheffet, Dubey2021} take a UCB-based estimator with order optimal non-private performance, and through careful addition of noise ensure privacy constraints while not significantly degrading the  performance of the algorithm.

This utility first approach leads to a query strategy that is highly adaptive to the observed history in order to maximize the utility. However, high adaptivity results in high sensitivity to data points which makes preserving privacy much harder. 

In this work, we adopt a different approach, where we put the privacy constraint at the forefront and then optimize utility.  To this effect, there are two key components to our algorithm design that ensure the privacy constraint. The first component is the uniform sampling of the query points from the action set $\cX$, independent of the context-reward observations. This \emph{data-independent}, \emph{non-adaptive} query strategy,  decouples the query points from the context-reward pairs and immediately guarantees privacy during the learning process. Our approach solves the problem of compounding noise during learning by simply ensuring the query points are private by design. There is thus no need to add any noise during the learning stage.

The second component is the design of a  novel low -sensitivity estimator to be used for the final prediction. In current kernel bandit literature the posterior-mean estimator (see eq.(\ref{eqn:posterior_mean})) is used almost exclusively. Although the posterior-mean offers order optimal  approximation error of the reward function, it carries a strong dependence on the dataset through the reward and feature vector $\bfk_{\bfW_T}(\cdot)$ as well as the Gramian matrix $\bfK_{\bfW_T,\bfW_T}$. As the prediction of the algorithm also needs to be private, it is necessary for the estimator to have both high utility and small sensitivity with respect to the dataset. This is seemingly an infeasible requirement as the small error requires adaptivity while the small sensitivity requires a weak dependence on the dataset.

By combining the recent advancements in non-private kernel bandits \citep{Sudeep_Uniform_Sampling} along with a novel technique for covariance estimation we design an estimator that only depends on the dataset through the feature and reward vector (please see eq.(\ref{eqn:estimator})) while retaining the approximation error of the posterior mean estimator. The essence of our proposed approach is to replace the covariance corresponding to a set of randomly sampled actions with that of an independently drawn set of actions. We use concentration results to establish that our new estimator offers the same order of approximation error as the posterior mean. At the same time, the independence between the set of samples and the dataset allows it to also enjoy low sensitivity.

\subsection{Related Work}

\paragraph{Kernel-based bandits.} The problem of kernel-based bandit optimization has been extensively studied in the non-private setting. Starting with the seminal work of \citep{Srinivas}, numerous algorithms for kernel-based have been proposed in both contextual\citep{Valko_et_al_2013} and non-contextual settings \citep{Batched_Communication,GP_ThreDS}. The optimal performance in the non-private setting is well-understood where several algorithms \citep{Batched_Communication,Valko_et_al_2013, GP_ThreDS} are known to achieve the order-optimal performance that matches the lower bound \citep{Scalett_et_al_2017}.

\paragraph{Private Bandit optimization.} The problem of differentially private bandit optimization has received considerable attention for both multi-armed and linear bandits.  The problem of linear bandits with JDP was first introduced by \citep{ShariffandSheffet} where they propose an algorithm that achieves a cumulative regret of $\tilde\cO({\sqrt{dT}/\varepsilon})$. \citep{DubeyPentland2020} extend their results to the distributed setting where they achieve a cumulative regret of $\cO(M^{3/4}\sqrt{T/\varepsilon})$\footnote{This is rectified regret bound from \citep{Zhou_at_al_2023} }. \citep{Garcelon_et_al_2022} consider the shuffle model of privacy as midway point between the central (JDP) privacy and (LDP) local privacy. The LDP notion of privacy is a  more challenging setup of differential privacy , where the users do not trust the algorithm and all the data needs to be privatized before leaving the user. In contrast (JDP), requires the data to be privatized only before being used, allowing for processing in batches as in \citep{ShariffandSheffet}. \citep{Zheng_et_al_2020} studies the problem of generalized linear bandits under an LDP constraint and propose an algorithm with a cumulative regret $\tilde\cO(T^{3/4}/\varepsilon)$. \citep{Huang_et_al2024}  study differential privacy for distributed contextual linear bandits in both central and local setting. They provide lower bounds for both settings, with a matching upper bound in the case of central differential privacy. \citep{Hanna_et_al_2024} study the LDP, CDP and shuffle model for linear non-contextual bandits. Only the rewards are considered to be sensitive data.\\
Privacy constraints have also been studied in the multi-arm-bandit framework(MAB). \citep{Azize_Basu_2022,Azize_et_al_2024} explore globally private best arm-indentification while \citep{Tenenbaum_et_al_2021} studied cumulative regret optimization under shuffle model of privacy.\\


\label{related_works:kernel_privacy}
The problem of private kernel bandits was first studied by \citep{Kusner_et_al_2015} in the context of hyper-parameter tuning where the authors focus only on privatizing the final query point as opposed to  all of them. \citep{Kharkovskii_et_al_2020} study private kernel bandits for square exponential kernels under the setting where the algorithm and user are separate entities. The query points are required to be locally differentially private but not the rewards. Under the additional assumption of covariance matrix being diagonally dominant, they establish a simple regret of $\tilde\cO((\varepsilon^{-2}+\gamma_T/T)^{1/2})$ . \citep{Zhou_et_al_2021} consider kernel bandits with heavy tailed noise and only rewards are required to be private.  \citep{Dai_et_al_2021} study privacy for Thompson sampling in the problem of distributed  kernel bandits where the reward functions are assumed to be heterogeneous over users.\\

The work that is closest to ours is by \citep{Dubey2021} where the authors consider private contextual bandits. As mentioned earlier, they approximate the kernel using a low dimensional surrogate after which they use techniques from private linear bandits to design an algorithm for the kernel-based problem. However, the result in \citep{Dubey2021} crucially depends on the assumption that underlying kernel can be approximated by a features whose dimension is at most polylogarithmic in $T$ and has a separable Fourier transform. Such an assumption is only satisfied for the family of Squared Exponential kernels and it is not obvious how to extend this approach for more general kernels.

\section{Problem Formulation and Preliminaries}\label{Formulation}

\subsection{RKHS, Mercer's Theorem and GP Models}

Consider a positive definite kernel $k: \cW \times \cW \to \R$, where $\cW$ is a compact set in a given metric space. A Hilbert space $\cH_k$ of functions on $\cW$ equipped with an inner product $\ip{\cdot}{\cdot}_{\cH_k}$ is called a Reproducing Kernel Hilbert Space (RKHS) with reproducing kernel $k$ if the following conditions are satisfied: (i) $\forall \ w \in \cW$, $k(\cdot, w) \in \cH_k$; (ii) $\forall \ w \in \cW$, $\forall \ f \in \cH_k$, $f(w) = \ip{f}{k(\cdot, w)}_{\cH_k}$. The inner product induces the RKHS norm, $\|f\|_{\cH_k}^2 = \ip{f}{f}_{\cH_k}$. We use $\phi(w)$ to denote $k(\cdot, w)$ and WLOG assume that $k(w,w) = \|\phi(w)\|_{\cH_k}^2 \leq 1$. 





Let $\zeta$ be a finite Borel probability measure supported on $\cW$ and let $L_2(\zeta,\cW)$ denote the Hilbert space of functions that are square-integrable w.r.t. $\zeta$. Mercer’s Theorem provides an alternative representation for RKHS through the eigenvalues and eigenfunctions of a kernel integral operator defined over $L_2(\zeta,\cW)$ using the kernel $k$.

\begin{theorem}\citep{SVM_Book}
Let $\mathcal{W}$ be a compact metric space and $k : \mathcal{W} \times \mathcal{W}\rightarrow \mathbb{R}$ be a continuous kernel. Furthermore, let $\zeta$ be a finite Borel probability measure supported on $\cW$. Then, there exists an orthonormal system of functions $\{\psi_j\}_{j\in \mathbb{N}}$ in $L_2(\zeta,\mathcal{W})$  and a sequence of non-negative values
$\{\lambda_j\}_{j\in \mathbb{N}}$ satisfying $\lambda_1\geq \lambda_2\dots \geq 0$ , such that $k(w,w')=\sum_{j\in \mathbb{N}}\lambda_j\psi_j(w)\psi_j(w')$ holds for all $w,w'\in \mathcal{W}$ and the convergence is absolute and uniform over $w,w'\in \mathcal{W}$. 
\end{theorem}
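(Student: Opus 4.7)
The plan is to deduce Mercer's theorem from the spectral theorem for compact self-adjoint operators applied to the integral operator associated with $k$. Specifically, I would define the kernel integral operator $T_k : L_2(\zeta, \cW) \to L_2(\zeta, \cW)$ by $(T_k f)(w) = \int_{\cW} k(w, w') f(w') \, d\zeta(w')$. The continuity of $k$ on the compact set $\cW \times \cW$ implies that $k$ is bounded and uniformly continuous, from which a standard Arzel\`a-Ascoli argument shows that $T_k$ is a compact operator. Symmetry of $k$ gives self-adjointness, and the positive definiteness of $k$ forces $T_k$ to be positive semi-definite, so its spectrum consists of a non-negative sequence of eigenvalues $\lambda_1 \geq \lambda_2 \geq \cdots \geq 0$ accumulating only at $0$. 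The spectral theorem then produces an orthonormal system $\{\psi_j\}_{j \in \N}$ in $L_2(\zeta, \cW)$ consisting of eigenfunctions of $T_k$. These will be the objects appearing in the statement.

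Next I would promote the eigenfunctions corresponding to $\lambda_j > 0$ to continuous functions on $\cW$ via the eigenvalue equation $\psi_j(w) = \lambda_j^{-1} \int_{\cW} k(w, w') \psi_j(w') d\zeta(w')$, since the right-hand side is continuous in $w$ by continuity of $k$ and dominated convergence. With continuous representatives in hand, the classical Hilbert-Schmidt expansion in $L_2(\zeta, \cW \times \cW)$ gives
\begin{equation*}
k(w, w') = \sum_{j \in \N} \lambda_j \psi_j(w) \psi_j(w') \quad \text{in } L_2(\zeta \otimes \zeta).
\end{equation*}
To get pointwise convergence on the diagonal, observe that the truncated remainder kernel $k_N(w, w') := k(w,w') - \sum_{j=1}^{N} \lambda_j \psi_j(w)\psi_j(w')$ is itself a positive definite continuous kernel (it is the kernel of the positive operator $T_k - \sum_{j \leq N} \lambda_j \psi_j \otimes \psi_j$), so $k_N(w,w) \geq 0$ and in particular $\sum_{j=1}^{N} \lambda_j \psi_j(w)^2 \leq k(w,w)$. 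Monotone convergence gives pointwise convergence of $\sum_j \lambda_j \psi_j(w)^2$ to a limit bounded by $k(w,w)$, and a short Fubini argument then identifies this limit with $k(w,w)$ almost everywhere, hence everywhere by continuity.

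The key step, which I expect to be the main obstacle, is upgrading the diagonal pointwise convergence to absolute and uniform convergence on all of $\cW \times \cW$. Here I would invoke \emph{Dini's theorem}: the partial sums $S_N(w) := \sum_{j=1}^{N} \lambda_j \psi_j(w)^2$ are continuous, monotone non-decreasing in $N$, and converge pointwise on the compact set $\cW$ to the continuous function $w \mapsto k(w,w)$. Dini's theorem then forces this convergence to be uniform. Uniform convergence off the diagonal follows from the Cauchy-Schwarz bound
\begin{equation*}
\Bigl| \sum_{j=M}^{N} \lambda_j \psi_j(w) \psi_j(w') \Bigr| \leq \Bigl(\sum_{j=M}^{N} \lambda_j \psi_j(w)^2\Bigr)^{1/2} \Bigl(\sum_{j=M}^{N} \lambda_j \psi_j(w')^2\Bigr)^{1/2},
\end{equation*}
which, combined with the uniform control on the diagonal, shows that the partial sums of the Mercer series are uniformly Cauchy on $\cW \times \cW$. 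This simultaneously delivers absolute convergence (again by Cauchy-Schwarz term-by-term) and uniform convergence, completing the proof. The delicate point throughout is maintaining the distinction between $L_2$-equivalence classes and genuine pointwise equality, which is why promoting the eigenfunctions to continuous representatives early on is essential.
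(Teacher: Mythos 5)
The paper does not prove this statement; it is Mercer's theorem, cited directly from the reference \citep{SVM_Book} (Steinwart and Christmann) without proof, so there is no in-paper argument to compare against. Your outline is essentially the standard textbook proof that appears in that reference: spectral theorem for the compact, self-adjoint, positive integral operator $T_k$, continuous representatives for the eigenfunctions with $\lambda_j>0$, positivity of the truncated remainder kernel to bound the diagonal partial sums, Dini's theorem on the diagonal, and Cauchy--Schwarz to propagate uniformity off the diagonal. The overall architecture is right.

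There is, however, a genuine gap in the middle step where you identify the pointwise limit $h(w) := \sum_j \lambda_j \psi_j(w)^2$ with $k(w,w)$. You write that ``a short Fubini argument then identifies this limit with $k(w,w)$ almost everywhere, hence everywhere by continuity.'' Two issues. First, the Fubini step is not spelled out and does not straightforwardly deliver the a.e.\ identity: $L^2(\zeta\otimes\zeta)$ convergence of the Hilbert--Schmidt expansion says nothing directly about the diagonal, which has $\zeta\otimes\zeta$-measure zero, and the obvious integrated version only yields $\sum_j \lambda_j \le \int k(w,w)\,d\zeta$, not equality (equality here is the trace formula, which is typically a corollary of Mercer, not an input). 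Second, and more seriously, ``a.e.\ hence everywhere by continuity'' does not apply: $h$ is a monotone increasing limit of continuous functions and is therefore only lower semicontinuous; lower semicontinuity gives $h(w_0)\le \liminf_{w\to w_0} h(w)$, which is the wrong direction for promoting an a.e.\ identity to a pointwise one. The correct route is to fix $w$ and show that the partial sums $s_N(w') := \sum_{j\le N}\lambda_j\psi_j(w)\psi_j(w')$ are uniformly Cauchy in $w'$ (Cauchy--Schwarz plus the already-established bound $\sum_j\lambda_j\psi_j(w)^2\le k(w,w)$ and $k(w',w')\le \sup k$), hence converge uniformly to a continuous function $g_w$; separately, $s_N\to k(w,\cdot)$ in $L^2(\zeta)$ because the $\{\psi_j\}$-coefficients of $k(w,\cdot)$ are $\lambda_j\psi_j(w)$ and $k(w,\cdot)\perp \ker T_k$ for every $w$ (a small lemma you also elide). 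Full support of $\zeta$ then gives $g_w=k(w,\cdot)$ everywhere, so in particular $h(w)=k(w,w)$ for every $w$, and only now does Dini apply. With that substitution your proof is complete and matches the cited reference's argument.
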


Consequently, the Mercer representation\cite[Thm. 4.51]{SVM_Book} of the RKHS of $k$ is given as
\begin{align*}
    \cH_k = \bigg\{ f := \sum_{j \in \N} \alpha_j {\lambda_j}^{\frac{1}{2}} \psi_j : \|f\|_{\cH_k}^2 = \sum_{j \in \N} \alpha_j^2 < \infty \bigg\}.
\end{align*}

A commonly used technique to characterize a class of kernels is through their eigendecay profile. 
\begin{definition}\label{assumptio:polynomial_decay}
    Let $\{\lambda_j\}_{j\in \mathbb{N}}$ denote the eigenvalues of a kernel $k$ arranged in the descending order. The kernel $k$ is said to satisfy the polynomial eigendecay condition with a parameter $\beta_p>1$ if, for
    some universal constant $C_p>0$, the relation $\lambda_j\leq C_pj^{-\beta_p}$ holds for all $j \in \N$ .
\end{definition}

We make the following assumption on the kernel $k$ and the eigenfunctions $\{\psi_j\}_{j \in \N}$, which is commonly adopted in kernel-based optimization literature \citep{information_gain_bound,Chatterji_et_al_2019,Riutort_Mayol_et_al_2023, Whitehouse_et_al_2023}.

\begin{assumption}\label{assumption:bounded_eigen_functions}
We assume that the kernel $k$ satisfies the polynomial eigendecay condition with  parameter $\beta_p >1$. The eigen-functions $\{\psi_j\}_{j \in \mathbb{N}}$ corresponding to the kernel $k$ are continous and hence bounded on $\cW$ i.e $\exists \ F > 0,$ such that $ \sup_{w\in \cW}|\psi_j(w)|\leq F$ for all $j\in\mathbb{N}$.
\end{assumption}





A Gaussian Process (GP) is a random process $G$ indexed by $\cW$ and is associated with a mean function $\mu : \cW \to \R$ and a positive definite kernel $k : \cW \times \cW \to \R$. The random process $G$ is defined such that for all finite subsets of $\cW$,  $\{w_1, w_2, \dots, w_T\} \subset \cW$, $T \in \mathbb{N}$, the random vector $[G(w_1), G(w_2), \dots, G(w_T)]^{\top}$ follows a multivariate Gaussian distribution with mean vector $[\mu(w_1), \dots, \mu(w_T)]^{\top}$ and covariance matrix $[k(w_i, w_j)]_{i,j=1}^T$. Throughout the work, we consider GPs with $\mu \equiv 0$. When used as a prior for a data generating process under Gaussian noise, the conjugate property provides closed form expressions for the posterior mean and covariance of the GP model. Specifically, given a set of observations $\{\mathbf{W}_T,\mathbf{Y}_T\} = \{(w_i,y_i)\}_{i=1}^T$ from the underlying process, the expression for posterior mean and variance of GP model is given as follows:
\begin{align}
    \mu_{T}(w) & =\bfk^{\top}_{\mathbf{W}_T}(w)(\tau \mathbf{I}_T+\mathbf{K}_{\mathbf{W}_T,\mathbf{W}_T})^{-1}\mathbf{Y}_T, \label{eqn:posterior_mean}\\
    \sigma^2_T(w)& =k(w,w)-\bfk_{\mathbf{W}_T}^{\top}(w)(\tau\mathbf{I}_T+\mathbf{K}_{\mathbf{W}_T,\mathbf{W}_T})^{-1}k_{\mathbf{W}_T}(w). \label{eqn:posterior_variance}
\end{align}
In the above expressions, $\bfk_{\mathbf{W}_T}(w)=[k(w_1,w),k(w_2,w)\dots k(w_T,w)]^{\top}$, $\mathbf{K}_{\mathbf{W}_T,\mathbf{W}_T}=[k(w_i,w_j)]_{i,j=1}^{T}$, $\mathbf{I}_T$ is the $T \times T$ identity matrix and $\tau$ is the variance of the Gaussian noise. 

Following a standard approach in the literature \citep{Srinivas}, we model the data corresponding to observations from the unknown $f$, which belongs to the RKHS of a positive definite kernel $k$, using a GP with the same covariance kernel $k$. In particular, we assume a \emph{fictitious} GP prior over the fixed, unknown function $f$ along with \emph{fictitious} Gaussian distribution for the noise. Such a modelling allows us to predict the values of $f$ and characterize the prediction error through the posterior mean and variance of the GP model.


 \label{gamma_explained}
Lastly, given a set of points $\mathbf{W}_T = \{w_1, w_2, \dots, w_T\} \in \cW$, the information gain of the set $\mathbf{W}_T$ is defined as $\gamma_{\mathbf{W}_T} := \frac{1}{2} \log(\det(\mathbf{I}_T + \tau^{-1}\mathbf{K}_{\mathbf{W}_T,\mathbf{W}_T}))$. Using this, we can define the maximal information gain of a kernel as $\gamma_T := \sup_{\mathbf{W}_T \in \cW^T} \gamma_{\mathbf{W}_T}$. Maximal information gain is closely related to the effective dimension of a kernel \citep{Calandriello_Sketching} and helps characterize the regret performance of kernel bandit algorithms \citep{Srinivas,Gopalan_2017}. $\gamma_T$ depends only the kernel and $\tau$ and has been shown to be an increasing sublinear function of $T$ \citep{Srinivas, information_gain_bound}.

\subsection{Joint Differential Privacy}\label{JDP}

We adopt the framework of Joint Differential Privacy presented in \citep{ShariffandSheffet}. Let $\cS_T=\{(c_1,y_1),(c_2,y_2),\dots (c_T,y_T),c_{T+1}\}$, referred to as a database, denote the collection of all contexts and rewards seen in the duration of the algorithm. Here, $c_{T+1}$ denotes the contexts drawn at the evaluation instant $T+1$.

\begin{definition}
Two databases $\cS_T,\cS'_T$ are said to be $t$-neighbours if they only differ in the context and reward at time $t$ . Specifically,  the databases $\cS_T=\{(c_1,y_1),(c_2,y_2),\dots (c_t,y_t),\dots (c_T,y_T),c_{T+1}\}$ and $\cS'_T=\{(c_1,y_1),(c_2,y_2),\dots (c'_t,y'_t),\dots (c_T,y_T),c_{T+1}\}$ are considered to be $t$-neighbours.
\end{definition}

JDP ensures that a malicious adversary cannot confidently differentiate  between the agent database and any of its neighbours. This constraint can be mathematically presented as:

\begin{definition}\label{Def:JDP} 

A randomized algorithm $\sA$ is $(\varepsilon,\delta)$-joint differentially
private (JDP) under continual observation if for all $t \leq T$ and all pairs of $t$-neighboring databases $\mathcal{S}$ and
$\mathcal{S}^{'}$ and any subset $\cP_{>t} \subset \mathcal{X}^{T-t+1}$ of sequence of points ranging from day $t+1$ to $T+1$  \footnote{counting the final $\widehat{x}_T(c_{T+1})$ for which there is no feedback}, it holds that:
\[
\Pr\left(\sA(\mathcal{S})\in \cP_{>t}\right)\leq e^{\varepsilon} \cdot \Pr\left(\sA(\mathcal{S}')\in \cP_{>t}\right)+\delta,
\]
where the probability is taken over the random coins generated by the algorithm.
\end{definition}

Note that JDP does not require the query point $x_t$ to be private with respect to the current context and reward $(c_t,y_t)$, but only with respect to previously seen contexts and rewards $\cS_{<t}$. As shown in \cite[Claim 13]{ShariffandSheffet} requiring privacy with respect to current context and reward would lead to provably $\cO(1)$ simple regret performance.

\subsection{Problem statement}\label{Problem Statement}

We consider the problem of contextual kernel bandits, where at each time instant $t$, the learning agent is presented with a context $c_t \in \cC$ based on which it queries a point $x_t \in \cX$ and receives a noisy reward $y_t=f(x_t,c_t)+ \eta_t$, where $\eta_t$ denotes the noise. We assume that the sets $\cX \in \R^d$ and $\cC \in \R^{d'}$ are compact and convex. The reward function $f$ belongs to a RKHS corresponding to a kernel $k$ defined over $\cW := \cX \times \cC$.
We consider the setting where the contexts are drawn i.i.d. across time according to some context distribution $\kappa$. The contextual bandits with stochastic contexts has widely studied in the literature \citep{Huang_et_al2024,Han_et_al_2021,Amani_et_al_2023, Hanna_et_al_2022, Hanna_et_al_2023}. We make the following assumptions that are commonly adopted in the literature.

\begin{assumption}\label{assumption:Sub_Gaussian}
The noise term $\eta_t$ is assumed to be i.i.d across all time instances and is a zero-mean, $R$ sub-Gaussian random variable i.e.,  it satisfies the relation $\mathbb{E}[\exp(q\eta )]\leq \exp(q^2R^2/2)$ for all $q \in \mathbb{R}$.
\end{assumption}

\begin{assumption}\label{assumption: bounded_rewards}
    The rewards $\{y_t\}_{t=1}^{T}$ in the duration of the algorithm are bounded in absolute values, $|y_t|<B,\forall t\leq T$
\end{assumption}


Assumption(\ref{assumption: bounded_rewards}) is adopted across privacy literature \citep{DubeyPentland2020,ShariffandSheffet,Han_et_al_2021,Zheng_et_al_2020}
, and ensures that an adversary cannot probe an unbounded reward as an input to the algorithm. We note that that this assumption could be removed by simple clipping the rewards that have modulus higher than $B+ R\log(T/\delta)$. By sub-gaussian assumption on the rewards all rewards would remain unchanged, with probability $1-\delta$.

\begin{assumption}\label{assumption:Lipschitz}
    We assume the reward function $f$ is $L_{f}$-Lipschitz, i.e., the following relation holds for all $w, w' \in \cW = \cX \times \cC$
    \begin{align*}
    |f(w)-f(w')|\leq L_f\|w-w'\|_2
    \end{align*}
\end{assumption}


\begin{assumption}\label{assumption:Grid}
For each $r \in \mathbb{N}$, there exists a discretization $\cU_r$ of $\mathcal{W}$ with $|\cU_r| = \mathrm{poly}(r)$\footnote{The notation $g(x) = \mathrm{poly}(x)$ is equivalent to $g(x) = \cO(x^k)$ for some $k \in \mathbb{N}$.} such that, for any $f \in \mathcal{H}_k$, we have $|f(w)-f([w]_{\cU_r})|\leq \frac{\|f\|_{\mathcal{H}_k}}{r}$, where $[w]_{\cU_r} =\argmin_{w'\in \cU_r} \|w-w'\|_2$. 
\end{assumption}

\begin{assumption}
    We have a context generator that is able to generate contexts i.i.d according to the distribution $\kappa$.
    \label{assumption:contexts}
\end{assumption}

Assumptions~\ref{assumption:Sub_Gaussian} and~\ref{assumption:Lipschitz} are mild assumptions that are commonly adopted in the literature \citep{Srinivas, Gopalan_2017, Batched_Communication, GP_ThreDS,Lee_at_al_2022}. For commonly used kernels kernels like Squared Exponential and Mat\'ern kernels elements of its RKHS are known to be Lipschitz continuous \citep{Lee_at_al_2022}. Assumption \ref{assumption:Grid} is nearly universally used to apply the confidence bounds on the continuous domain \citep{Vakili_Aprox_Conv,Vakili_Kernel_Simple_Regret,Batched_Communication,Sudeep_Uniform_Sampling}\\ 

Assumption~\ref{assumption:contexts} is milder than the assumption of having complete knowledge of context distribution, an assumption that has been commonly adopted in several existing studies on stochastic contextual bandits\citep{Amani_et_al_2023,Hanna_et_al_2022,Hanna_et_al_2023}




\noindent In our analogy to the shopping platform in the introduction, these contexts are  not recorded from the arrivals of any "real" users but is rather an assumption that the platform can learn its user demographic(location, common searches etc.). In other words these contexts are not the input to the  algorithm but are rather generated by the algorithm's random coin.
We also emphasize these "artificially generated"  contexts are not a result of interaction with the environment. Consequently, no feedback is associated with the generated contexts and thus they carry no information on the reward function and cannot offer any trivial advantages in the learning.\\

\section{Algorithm Description}\label{Algorithm_Description}

\begin{algorithm}[H]
\caption{USCA}\label{alg:Algorithm 3}
\begin{algorithmic}[1]
    \STATE \textbf{Input}:  error probability $\delta$, privacy budget $\varepsilon$,
    \STATE Initialize $\mathbf{W}_{T},\mathbf{Y}_T, \cZ \leftarrow \emptyset$ 
    \STATE \texttt{// Learning stage}
    \FOR{$t = 1,2, \dots, T$}
        \STATE Receive the context vector $c_t$ 
        \STATE Query $x_t$ from $\cX$ uniformly at random and observe the reward $y_t$
        \STATE $\mathbf{W}_{T} \leftarrow \mathbf{W}_{T}\cup \{(x_t,c_t)\}, \bfY_T \leftarrow \bfY_T\cup\{y_t\}$
    \ENDFOR
    \FOR{$k = 1,2, \dots, TK$}
        \STATE Sample $c_{k}$ from the context measure $\kappa$
        \STATE Sample $x_{k}$ from $\cX$ uniformly at random
        \STATE $\cZ \leftarrow \cZ \cup \{(x_{k},c_{k})\}$
    \ENDFOR
    \STATE Construct the posterior mean $\overline{\mu}_T,\overline{\sigma}$ with an approximating set $\cZ$ and dataset $\bfW_T$ from eq.(\ref{eqn:estimator}, \ref{eqn:variance})
    \STATE \texttt{// Prediction stage}
    \STATE Observe the context set $c_{T+1}$
    \STATE Sample 
    $$\widehat{x}_T(c_{T+1}) \sim \mathcal{E}(\overline{\mu}_T(\cdot,c_{T+1}),\varepsilon, 2B\sup\overline{\sigma}^2)$$
    where $\cE$ is as defined in~ Def.(\ref{def:exp_measure})
    \STATE \textbf{Output} $\widehat{x}_T(c_{T+1})$
\end{algorithmic}
\end{algorithm}

In this section we present our algorithm, \textsc{USCA}. The main features of the algorithm are  \textbf{U}niform \textbf{S}ampling with \textbf{C}ovariance \textbf{A}pproximation. We seperate $\textsc{USCA}$ into two stages, reflecting the nature of the agents interaction with the environment and not a design philosphy. In the learning stage agent collects the information on the reward function through the collected feedback, $\bfY_T$. In the prediction stage, the agent uses the collected data to project the best performing point for a given context $c_{T+1}$.

During learning stage, our algorithm adopts a data-independent, random sampling approach. After being presented with a context vector $c_t$ agent draws a query point $x_t$ from $\cX$ uniformly at random, independently from previous contexts and rewards. This, data invariant, sampling method ensures that there is no loss of privacy during learning and thus no noise injection is necessary at this stage.\\

A widely accepted approach in kernel bandit learning  is to sample points from the domain based on the posterior statistics in eqs.(\ref{eqn:posterior_mean}, \ref{eqn:posterior_variance}). \textsc{USCA} abandons this approach and utilizes novel estimator $\overline{\mu}_T$ and surrogate variance $\overline{\sigma}^2$. Estimator $\overline{\mu}_T$ is obtained by approximating the covariance matrix of the sampled points (lines 4-7) by an empirical average. The empirical average is calculated based on the points in the approximating set $\cZ$. In the context of our algorithm $\overline{\sigma}^2$, is seen as the posterior variance of a GP process after sampling the approximating set $\cZ$(see eq.\ref{eqn:posterior_variance}) . As such, $\overline{\sigma}$ is independent of any contexts or reward seen by the agent, and only depends on the points of the approximating set $\cZ$.  \\

To construct  $\overline{\sigma}, \overline{\mu}_T$ we need a sample of $TK$ context vectors  drawn from  the context distribution $\kappa$ where $K=\lceil T/\gamma_T\rceil$. We emphasize that these contexts are  sampled independently from the context space $\cC$ and are not an input to the algorithm but are rather generated by the algorithm's random coin. Following the generation of the context vector $c_{k}$ the agent samples $x_{k}$ from the domain $\cX$ uniformly at random and the sample $(x_{k},c_{k})\in \cW$ is added to the approximating set $\cZ$ that we use to calculate $\overline{\mu}_T,\overline{\sigma}^2$ .

Next, we calculate the statistics $\overline{\sigma},\overline{\mu}_T:\cW \rightarrow \mathbb{R}$ from the sampled approximating set(lines 11-15) $\cZ$ and the data set $(\bfW_T,\bfY_T)$ accrued during learning (lines 4-7). We introduce the parametric form of the estimator $\overline{\mu}_T$ in Lemma \ref{approximation_lemma}, proving it sufficiently approximates the posterior mean $\mu_T$(see eq.(\ref{eqn:posterior_mean})). However in this form $\overline{\mu}_T$ is computationally intractable . We apply the celebrated kernel trick \citep{SVM_Book} to obtain a computationally more favorable form:
\begin{lemma}
    The statistics $\overline{\mu}_T,\overline{\sigma}$ in \textsc{USCA} can be calculated as:
    \begin{align}
    \overline{\mu}_T(w)&=\frac{1}{\tau}\left(k_{\mathbf{W}_T}(w^{\top})\bfY_T-\bfk^{\top}_{\cZ}(w)\left(\bfK_{\cZ, \cZ}+K\tau\mathbf{I}_{\cZ}\right)^{-1}\bfK_{\cZ,\bfW_T}\bfY_T\right)\label{eqn:estimator}\\
    \overline{\sigma}^2(w)&=\frac{1}{\tau}\left(k(w,w)-\bfk_{\cZ}^{\top}(w)\left(\bfK_{\cZ,\cZ}+K\tau\mathbf{I}_{\cZ}\right)^{-1}k_{\cZ}(w)\right)\label{eqn:variance}
    \end{align}

    Where $\bfK_{\cZ,\bfW_T}=\{k(a,b)\}_{a\in \cZ,b\in \bfW_T}, \bfK_{\cZ,\cZ}=\{k(a,b)\}_{(a,b)\in \cZ^2}$ and $K=\lceil T/\gamma_T\rceil$ .
\end{lemma}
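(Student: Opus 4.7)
I would start from the operator-level (parametric) form of $\overline{\mu}_T$ and $\overline{\sigma}^2$ provided by Lemma~\ref{approximation_lemma} (which realises these quantities by replacing the sample-covariance operator by the empirical average over $\cZ$) and push it through the Woodbury identity to land on the explicit kernel expressions~(\ref{eqn:estimator})--(\ref{eqn:variance}). The computation parallels the standard derivation of the kernel-trick closed form for kernel ridge regression, with the additional wrinkle that the ``design'' operator used to form the covariance ($\Psi_{\cZ}$) differs from the ``regression'' operator ($\Phi_T$).

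First I would fix feature-space notation. Write $\Phi_T$ for the operator whose $t$-th row is $\phi(w_t)^\top$, $w_t \in \bfW_T$, and $\Psi_{\cZ}$ for the operator whose rows are $\phi(z)^\top$, $z \in \cZ$. The reproducing property immediately yields the identifications $\phi(w)^\top \Phi_T^\top = \bfk_{\bfW_T}^\top(w)$, $\phi(w)^\top \Psi_{\cZ}^\top = \bfk_{\cZ}^\top(w)$, $\Phi_T \Phi_T^\top = \bfK_{\bfW_T, \bfW_T}$, $\Psi_{\cZ}\Psi_{\cZ}^\top = \bfK_{\cZ, \cZ}$, $\Psi_{\cZ}\Phi_T^\top = \bfK_{\cZ, \bfW_T}$, and $\phi(w)^\top \phi(w) = k(w,w)$. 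With these conventions, the parametric form from Lemma~\ref{approximation_lemma} can be written
\begin{align*}
\overline{\mu}_T(w) = \phi(w)^\top (\tau \bfI + \widehat{\bfSigma})^{-1} \Phi_T^\top \bfY_T, \qquad \overline{\sigma}^2(w) = \phi(w)^\top (\tau \bfI + \widehat{\bfSigma})^{-1} \phi(w),
\end{align*}
where $\widehat{\bfSigma} = \tfrac{1}{K} \Psi_{\cZ}^\top \Psi_{\cZ}$ is the empirical covariance over $\cZ$.

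The central computation is an application of the Woodbury identity to $(\tau\bfI + \tfrac{1}{K}\Psi_{\cZ}^\top\Psi_{\cZ})^{-1} = K(K\tau\bfI + \Psi_{\cZ}^\top\Psi_{\cZ})^{-1}$, which gives
\begin{align*}
(\tau \bfI + \widehat{\bfSigma})^{-1} = \tfrac{1}{\tau}\bfI - \tfrac{1}{\tau}\, \Psi_{\cZ}^\top (\bfK_{\cZ,\cZ} + K\tau\bfI_{|\cZ|})^{-1} \Psi_{\cZ}.
\end{align*}
Substituting this into $\overline{\mu}_T$ and using $\phi(w)^\top\Phi_T^\top = \bfk_{\bfW_T}^\top(w)$, $\phi(w)^\top\Psi_{\cZ}^\top = \bfk_{\cZ}^\top(w)$, and $\Psi_{\cZ}\Phi_T^\top = \bfK_{\cZ,\bfW_T}$ yields eq.~(\ref{eqn:estimator}); substituting into $\overline{\sigma}^2$ and using $\phi(w)^\top\phi(w) = k(w,w)$ yields eq.~(\ref{eqn:variance}) after factoring out $\tfrac{1}{\tau}$.

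The only non-routine step is justifying the use of Woodbury when $\widehat{\bfSigma}$ acts on the potentially infinite-dimensional RKHS $\cH_k$ rather than on $\R^n$. This is a standard issue in kernel methods and can be handled in two equivalent ways: (i) note that $\widehat{\bfSigma}$ has finite rank at most $|\cZ| = TK$, so the identity can be verified on $\mathrm{range}(\Psi_{\cZ}^\top)$ (finite-dimensional) and extended to its orthogonal complement, on which $\widehat{\bfSigma}$ vanishes and $(\tau\bfI + \widehat{\bfSigma})^{-1}$ reduces to $\tfrac{1}{\tau}\bfI$; or (ii) verify the proposed inverse directly by multiplying it against $(\tau\bfI + \widehat{\bfSigma})$ and invoking $\Psi_{\cZ}\Psi_{\cZ}^\top = \bfK_{\cZ,\cZ}$. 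I do not expect any substantive obstacle beyond this standard bookkeeping.
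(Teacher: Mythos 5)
Your proposal is correct and takes essentially the same route as the paper: the paper's Lemma~\ref{appendix:kernel_trick} also starts from $\overline{\mu}_T(w)=\phi(w)^\top\tildeZ^{-1}\Phi_{\bfW_T}\bfY_T$, rewrites $\tildeZ^{-1}=K(\Phi_{\cZ}\Phi_{\cZ}^\top+K\tau\Id)^{-1}$, adds and subtracts $\Phi_{\cZ}\Phi_{\cZ}^\top$, and invokes the push-through identity $(\Phi_{\cZ}\Phi_{\cZ}^\top+K\tau\Id)^{-1}\Phi_{\cZ}=\Phi_{\cZ}(\Phi_{\cZ}^\top\Phi_{\cZ}+K\tau\Id)^{-1}$, which is precisely the Woodbury step you describe in kernelized form. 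Your remark about justifying the operator inverse on the infinite-dimensional $\cH_k$ is a point the paper leaves implicit, but it changes nothing substantive.
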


\begin{proof}
    For a proof please see Lemma \ref{appendix:kernel_trick}.
\end{proof}

After $T$-time instances the agent is provided with a final context vector $c_{T+1}$ and has to output the point $\widehat{x}_T(c_{T+1})$ that should maximize the reward for $c_{T+1}$.\\
To privatize the final output agent samples $\widehat{x}_T(c_{T+1})$ according to exponential measure, with the exponent proportional to $\overline{\mu_T}$. More specifically:

\begin{definition}\label{def:exp_measure}
Define the measure $\cE(\overline{\mu}_T,\varepsilon,m)$ on $\cX$ as :
\begin{align*}
\mathcal{E}(\overline{\mu}_T(r),\varepsilon,m)= \frac{\exp(\overline{\mu}_T(r)\varepsilon/(2m))}{\int_{\cX}\exp(\overline{\mu}_T(r)\varepsilon/(2m))\nu_0(dr)}
\end{align*}

Where $\nu_0$ is the Lebesgue measure over $\cX$.

\end{definition}

In privacy literature the method of sampling from $\cE$ is known as the exponential mechanism \citep{McSherryadnTalwar, Dwork}.
Using the closed  form expression for the estimator  $\overline{\mu}_T$ and surrogate variance $\overline{\sigma}$ in eqs.(\ref{eqn:estimator},\ref{eqn:variance}) agent samples $\widehat{x}_{T}(c_{T+1})$ from $\cX$ according to  $\mathcal{E}\left(\overline{\mu}_T(\cdot,c_{T+1}),\varepsilon, 2B\sup\overline{\sigma}^2\right)$.



\section{Performance Analysis}\label{Performance_Analysis}

The following theorem characterizes the performance of our proposed algorithm, \textsc{USCA}.

\begin{theorem}\label{main_text:theorem}
Consider the contextual kernelized bandits problem described in Sec.~\ref{Problem Statement} where the underlying kernel function satisfies Assumption~\ref{assumptio:polynomial_decay} with parameter $\beta_p>1$ and the reward function $f$ is $L_f$-Lipschitz. If the USCA algorithm is run for $T$ steps with a privacy parameter $\varepsilon$, then for all $\varepsilon > 0$, $\delta \in (0,1)$ and $T > T_0$, 
\begin{itemize}
    \item USCA is $\varepsilon$-JDP;
    \item The error rate of USCA satisfies the following relation with probability $1 - \delta$
    \begin{align*}
    \textsc{ER}(\textsc{USCA})&=\cO\left(\sqrt{\frac{\gamma_T}{T}}+\frac{1}{\varepsilon}\frac{\gamma_T}{T}\right).
\end{align*}
Here $T_0$ is the constant that depends on the kernel and the context distribution\footnote{Please refer to Theorem \ref{appendix:theorem} for an exact expression.} and probability in the error bound is taken over all the contexts, rewards and random coins of the algorithm.
\end{itemize}
\end{theorem}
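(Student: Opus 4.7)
My plan is to handle the two conclusions of Theorem~\ref{main_text:theorem} separately: privacy first, utility second. For privacy, the key observation is that every output of \textsc{USCA} except the final prediction $\widehat{x}_T(c_{T+1})$ is sampled from a distribution that never sees the database: the learning-stage queries $x_1,\ldots,x_T$ are uniform on $\cX$, and the auxiliary points forming $\cZ$ are drawn from $\kappa\times\mathrm{Unif}(\cX)$ using fresh algorithmic randomness. Their laws are therefore identical under any two $t$-neighbouring databases, and the JDP requirement collapses to an $\varepsilon$-DP bound on the single exponential-mechanism draw in~(\ref{def:exp_measure}). To bound its sensitivity, I would apply the Woodbury identity to~(\ref{eqn:estimator}) to rewrite $\overline{\mu}_T(w)=\phi(w)^{\top}B\sum_{i=1}^{T}\phi(w_i)y_i$, where $B=K(\bfK_{\cZ,\cZ}+K\tau\bfI)^{-1}\succeq 0$ and $\overline{\sigma}^2(w)=\phi(w)^{\top}B\phi(w)$. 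Since the algorithmic coin fixes $x_t$, switching $(c_t,y_t)\to(c'_t,y'_t)$ modifies only the $t$-th summand, so Cauchy--Schwarz (applicable because $B$ is PSD) together with Assumption~\ref{assumption: bounded_rewards} gives
\[
|\overline{\mu}_T(w)-\overline{\mu}'_T(w)| \leq B\sqrt{\overline{\sigma}^2(w)\overline{\sigma}^2(w_t)}+B\sqrt{\overline{\sigma}^2(w)\overline{\sigma}^2(w'_t)}\leq 2B\sup_{v\in\cW}\overline{\sigma}^2(v),
\]
which matches the scale parameter used by the mechanism; the standard exponential-mechanism theorem delivers the required $\varepsilon$-DP guarantee, and composition with the data-independent outputs yields $\varepsilon$-JDP.

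For the utility bound, write $x^*(c)=\argmax_{x\in\cX}f(x,c)$ and decompose
\[
f(x^*,c_{T+1})-f(\widehat{x}_T,c_{T+1}) \leq 2\sup_{w\in\cW}|\overline{\mu}_T(w)-f(w)| + \bigl[\overline{\mu}_T(x^*,c_{T+1})-\overline{\mu}_T(\widehat{x}_T,c_{T+1})\bigr].
\]
The first term is the estimation error of $\overline{\mu}_T$. Lemma~\ref{approximation_lemma} will give a confidence bound of the form $|\overline{\mu}_T(w)-f(w)|\leq B_T\overline{\sigma}(w)$ with $B_T=\cO(\sqrt{\gamma_T})$; combined with the uniform bound $\sup_w\overline{\sigma}^2(w)=\cO(\gamma_T/T)$ discussed below, this produces the non-private rate $\cO(\sqrt{\gamma_T/T})$, matching \citep{Sudeep_Uniform_Sampling}. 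The second term is the expected regret of the exponential mechanism: discretizing via Assumption~\ref{assumption:Grid} to a grid $\cU_r$ of size $\mathrm{poly}(r)$, the classical mechanism utility bound gives $\cO\bigl(B\sup\overline{\sigma}^2 \log|\cU_r|/\varepsilon\bigr)=\cO(\gamma_T\log T/(T\varepsilon))$ once $r$ is chosen polynomial in $T$, while the Lipschitz Assumption~\ref{assumption:Lipschitz} absorbs the discretization error. Taking expectation over $c_{T+1}\sim\kappa$ and combining both pieces yields the claimed bound.

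The main obstacle is establishing the uniform variance bound $\sup_{w\in\cW}\overline{\sigma}^2(w)=\cO(\gamma_T/T)$, which drives both the estimation and mechanism terms and is exactly why \textsc{USCA} attains the sharp $\gamma_T/(T\varepsilon)$ dependence on the privacy parameter rather than the $\sqrt{\gamma_T/(T\varepsilon)}$ rate of prior work. It requires $\tfrac{1}{TK}\bfK_{\cZ,\cZ}$ to be close to the population operator $\Sigma=\E_{w\sim\kappa\times\mathrm{Unif}(\cX)}[\phi(w)\phi(w)^{\top}]$ in operator norm on the RKHS, which is precisely why the approximating set has been chosen of size $TK=T^2/\gamma_T$ and not merely $T$. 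A Bernstein-type concentration for operator-valued empirical sums, using Assumption~\ref{assumptio:polynomial_decay} on the polynomial eigendecay and Assumption~\ref{assumption:bounded_eigen_functions} on the bounded eigenfunctions to control the variance proxy, should yield this closeness. Once established, Woodbury gives $\overline{\sigma}^2(w)\approx \phi(w)^{\top}(T\Sigma+\tau\bfI)^{-1}\phi(w)=\sum_j \lambda_j\psi_j(w)^2/(T\lambda_j+\tau)$, which under Assumption~\ref{assumption:bounded_eigen_functions} is at most $F^2\sum_j\lambda_j/(T\lambda_j+\tau)=\cO(\gamma_T/T)$ by the standard effective-dimension identity. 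Promoting this bound from pointwise to uniform over $\cW$, so that it simultaneously governs the sensitivity and the supremum of the estimation error, is the most delicate step of the proof.
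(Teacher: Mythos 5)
Your proposal follows the same overall architecture as the paper's proof: (i) privacy is reduced to $\varepsilon$-DP of the single exponential-mechanism draw because every other output of \textsc{USCA} is sampled data-independently, with sensitivity bounded by $2B\sup\overline{\sigma}^2$ via Cauchy--Schwarz applied to the rank-one perturbation at index $t$; (ii) the error is decomposed into the uniform estimation error of $\overline{\mu}_T$ plus the utility loss of the exponential mechanism, and both terms are governed by the uniform bound $\sup_w\overline{\sigma}^2(w)=\cO(\gamma_T/T)$; and (iii) that bound, together with the comparability of $\overline{\mu}_T$ to the posterior mean $\mu_T$, rests on a Bernstein-type operator concentration showing the empirical covariance built from the oversampled set $\cZ$ of size $TK$ is close to the population operator $\bfZ = T\mathbf{\Lambda}+\tau\mathbf{Id}$ (Lemma~\ref{spectral_bound}). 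This matches the paper essentially point for point.

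Two smaller remarks. First, your claim that Lemma~\ref{approximation_lemma} yields a confidence multiplier $B_T=\cO(\sqrt{\gamma_T})$ is inconsistent with the conclusion you state: combined with $\sup_w\overline{\sigma}(w)=\cO(\sqrt{\gamma_T/T})$ this would give $\cO(\gamma_T/\sqrt{T})$, not $\cO(\sqrt{\gamma_T/T})$. In fact the paper's Lemmas~\ref{approximation_lemma} and \ref{lemma:confidence_bounds} give a multiplier $\beta_1(\delta/|\cU_T|)$ that is only polylogarithmic in $T$ and $1/\delta$ --- this is the uniform-sampling/simple-regret regime rather than the cumulative-regret regime, where the $\sqrt{\gamma_T}$ confidence multiplier belongs --- so the stated $\cO(\sqrt{\gamma_T/T})$ bound does go through once that is corrected. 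Second, for the utility of the exponential mechanism you propose discretizing to $\cU_r$ and invoking the finite-alphabet utility theorem, whereas the paper works directly on the continuous domain via the geometric Lemma~\ref{appendix:geometric_lemma}, which uses the Lipschitzness of $f$ to lower bound the Lebesgue measure of the near-optimal level set $\cA_{r/2}$ by $(r/(D_0 L_f))^d$. Both give the same $\cO(B\sup\overline{\sigma}^2\cdot(d\log T+\log(1/\delta))/\varepsilon)=\cO(\gamma_T\log T/(T\varepsilon))$ order; yours is a legitimate alternative route to the same second term.
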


As shown by the above theorem, \textsc{USCA} retains $\varepsilon$- JDP privacy while achieving diminising regret rate  of $\cO(\sqrt{\gamma_T/T}+\gamma_T/(T\varepsilon))$. We emphasize that although we state the final result in terms of expectation over the context vector $c_{T+1}$, as proven in the Theorem \ref{appendix:theorem}, the claim  holds uniformly over the entire context set $\cC$.

The key ingredient that allows USCA to achieve the performance guarantees outlined in Theorem~\ref{appendix:theorem} is the use of a novel reward estimator $\overline{\mu}_T$. The classical posterior mean estimate $\mu_T$ offers powerful predictive performance. However, characterizing the sensitivity of the estimator $\mu_T$ is challenging due to the non-linear relationship between the contexts, rewards and the predicted value. The problem is exacerbated by the fact that the sensitivity is defined in an adversarial sense, i.e., the differing context can be any value from the context set and are not necessarily drawn from the context distribution. This results in trivial bounds on the sensitivity of $\mu_T$ which prevents us from obtaining any meaningful utility guarantees. Our estimator $\overline{\mu}_T$ alleviates this issue by having a far weaker dependency on the the dataset $\cS_T$. Specifically, $\overline{\mu}_T$ is constructed by choosing a feature covariance matrix that is \emph{independent} of the context and reward sequence, which helps us significantly decrease the impact of a single point on the output and hence obtain meaningful sensitivity bounds. This idea is formalized in the following lemma:

\begin{lemma}\label{Lemma:sensitivity_bound}
Let $\cS_T,\cS^{'}_T$ be two $t$-neighbouring databases, and let $\overline{\mu}_T,\overline{\mu}'_T$ be the 2 estimator constructed for each of the databases . For $T>\overline{N}_1(\delta)$, we can bound the sensitivity of an estimator $\overline{\mu}_T$ in \textsc{USCA} as :
\begin{align*}
    \Delta\overline{\mu}_T&=\sup_{w\in \cW}\sup_{\cS_T,\cS_T^{'}\text{are t neigbours}}|\overline{\mu}_T(w)-\overline{\mu}'_T(w)|\leq \\
    &\leq 2B\sup_{w\in \cW} \overline{\sigma}^2(w)
\end{align*}
\end{lemma}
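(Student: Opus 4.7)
The plan is to exploit the linearity of $\overline{\mu}_T$ in the rewards together with the positive semi-definiteness of the residual kernel implicit in eq.~(\ref{eqn:estimator}). Writing $w_i := (x_i, c_i)$ and
\begin{align*}
g(w, u) := k(w,u) - \bfk_{\cZ}^{\top}(w)\bigl(\bfK_{\cZ,\cZ} + K\tau\mathbf{I}_{\cZ}\bigr)^{-1}\bfk_{\cZ}(u),
\end{align*}
eq.~(\ref{eqn:estimator}) can be recast as $\overline{\mu}_T(w) = \tau^{-1}\sum_{i=1}^T y_i\, g(w, w_i)$, and a comparison with eq.~(\ref{eqn:variance}) reveals the key identity $g(w,w) = \tau\,\overline{\sigma}^2(w)$.

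First I would couple the algorithm's random coins across the two executions on $\cS_T$ and $\cS'_T$. Because the queries $\{x_i\}_{i=1}^T$ and the approximating set $\cZ$ are drawn independently of the contexts and rewards, this coupling is without loss of generality, and under it only the $t$-th entry of $\bfW_T$ (in its context coordinate) and of $\bfY_T$ differ. The decomposition then gives
\begin{align*}
\overline{\mu}_T(w) - \overline{\mu}'_T(w) = \frac{1}{\tau}\bigl(y_t\,g(w,w_t) - y'_t\,g(w,w'_t)\bigr),
\end{align*}
and Assumption~\ref{assumption: bounded_rewards} ensures $|y_t|,|y'_t| \leq B$. The triangle inequality reduces the problem to uniformly bounding $|g(w,u)|$ over $\cW \times \cW$.

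The crux is the observation that $g$ is precisely the posterior covariance kernel of a Gaussian process with prior $k$, observation locations $\cZ$, and noise variance $K\tau$; in particular $g$ is positive semi-definite as a kernel on $\cW$. The standard Cauchy-Schwarz inequality for PSD kernels, applied to the $2 \times 2$ Gram matrix with entries $g(w,w), g(w,u), g(u,u)$, yields
\begin{align*}
|g(w,u)| \leq \sqrt{g(w,w)\,g(u,u)} = \tau\,\overline{\sigma}(w)\,\overline{\sigma}(u) \leq \tau \sup_{v\in\cW}\overline{\sigma}^2(v).
\end{align*}
Combining this with the preceding display produces $|\overline{\mu}_T(w) - \overline{\mu}'_T(w)| \leq 2B\sup_{v}\overline{\sigma}^2(v)$, uniformly in $w$ and in the choice of $t$-neighbors, which is the desired bound.

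The main obstacle, I expect, is less the algebra than recognizing that $g$ is PSD as a kernel on all of $\cW$ and not merely on $\cZ$; without this structural fact one would fall back on a crude bound involving the operator norm of $(\bfK_{\cZ,\cZ}+K\tau\mathbf{I})^{-1}$ together with $\|\bfk_{\cZ}(w)\|\,\|\bfk_{\cZ}(u)\|$, which would scale with $T$ and wipe out the benefit of the covariance-approximation design. I anticipate the hypothesis $T > \overline{N}_1(\delta)$ to enter not through the sensitivity identity itself, which is purely algebraic and holds for every realization of $\cZ$, but through the companion high-probability control on $\sup_v \overline{\sigma}^2(v)$ that is subsequently needed to turn this sensitivity bound into a utility guarantee when it is plugged into the exponential-mechanism analysis.
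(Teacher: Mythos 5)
Your proof is correct and is essentially the paper's argument in dual form: the paper writes $\overline{\mu}_T(w)=\phi(w)^{\top}\tildeZ^{-1}\Phi_{\bfW_T}\bfY_T$, isolates the single differing term $\phi(w)^{\top}\tildeZ^{-1}\phi(w_t)y_t$, and applies Cauchy--Schwarz with respect to the PSD operator $\tildeZ^{-1}$, which is exactly your Cauchy--Schwarz on the PSD residual kernel $g(w,u)=\tau\,\phi(w)^{\top}\tildeZ^{-1}\phi(u)$. Your side remark that the hypothesis $T>\overline{N}_1(\delta)$ is not actually needed for this purely algebraic inequality (only for the companion high-probability bound on $\sup_v\overline{\sigma}^2(v)$) is also correct and matches the appendix version of the lemma, which states that bound separately.
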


What makes $\overline{\mu}_T$ a particularly  powerful tool is the fact that in addition to guaranteeing low-sensitivity it also offers high predictive performance, as shown in the following lemma.

\begin{lemma}\label{main_text:lemma:confidence_bounds}
    For the estimator $\overline{\mu}_T$ introduced in eq. (\ref{eqn:estimator}), under the condition $T>\max(\overline{N}(\delta/4),\overline{N}_1(\delta/4))$ we claim with probability at least $1-\delta$:
    \begin{align*}
        \sup_{w\in\cW}|\overline{\mu}_T(w)-f(w)|=\cO\left(\sqrt{\frac{\gamma_T}{T}}\right)
    \end{align*}

    Where $\overline{N}_1(\delta),\overline{N}(\delta)$ are a $\delta$-dependent constants.
\end{lemma}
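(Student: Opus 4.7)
The plan is to decompose the error by triangle inequality,
\[
\sup_{w \in \cW}|\overline{\mu}_T(w) - f(w)| \le \sup_{w \in \cW} |\overline{\mu}_T(w) - \mu_T(w)| + \sup_{w \in \cW} |\mu_T(w) - f(w)|,
\]
where $\mu_T$ is the classical posterior mean in eq.~(\ref{eqn:posterior_mean}) built from the learning dataset $(\bfW_T, \bfY_T)$, and to show each piece is $\cO(\sqrt{\gamma_T/T})$ on a high-probability event. The two thresholds $\overline{N}(\delta/4)$ and $\overline{N}_1(\delta/4)$ in the statement will correspond to two separate operator-norm concentration events: $\hat\Sigma_T := \tfrac{1}{T}\Phi_{\bfW_T}^\top\Phi_{\bfW_T}$ and $\hat\Sigma_\cZ := \tfrac{1}{TK}\Phi_\cZ^\top\Phi_\cZ$ are each close to the population covariance $\Sigma := \E_{x \sim \mathrm{Unif}(\cX),\, c \sim \kappa}[\phi(x,c)\phi(x,c)^\top]$, with the remaining slack in the $\delta$ budget reserved for sub-Gaussian noise concentration and the grid union bound.

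For the statistical term $\mu_T(w) - f(w)$, I would work in the feature-space representation. Writing $f(\cdot) = \ip{\phi(\cdot)}{\theta_f}_{\cH_k}$ and $\bfY_T = \Phi_{\bfW_T}\theta_f + \boldsymbol{\eta}$, the error splits into a regularization bias $-\tau\phi(w)^\top(T\hat\Sigma_T + \tau I)^{-1}\theta_f$ and a noise term $\phi(w)^\top(T\hat\Sigma_T + \tau I)^{-1}\Phi_{\bfW_T}^\top\boldsymbol{\eta}$. Cauchy--Schwarz in the $(T\hat\Sigma_T + \tau I)^{-1}$-weighted norm controls the bias by $\sigma_T(w)\|f\|_{\cH_k}$. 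The noise term is sub-Gaussian in $\boldsymbol{\eta}$ with variance proxy $\lesssim R^2 \sigma_T^2(w)/\tau$; this is where uniform sampling pays off, as the data-independence of the query points avoids the self-normalized concentration that would add a $\sqrt{\gamma_T}$ factor in UCB-style algorithms, in line with \citep{Sudeep_Uniform_Sampling}. Bounding $\sup_w \sigma_T(w)$ then follows from the event $\hat\Sigma_T \approx \Sigma$ combined with the Mercer decomposition: $\sigma_T^2(w) \lesssim \tau\phi(w)^\top(T\Sigma + \tau I)^{-1}\phi(w) \le F^2\tau\sum_j \lambda_j/(T\lambda_j + \tau) = \cO(\gamma_T/T)$, where the sum is exactly the effective dimension, comparable to $\gamma_T$ under polynomial eigendecay.

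For the approximation term, the closed-form eq.~(\ref{eqn:estimator}) together with Woodbury's identity gives $\overline{\mu}_T(w) = \phi(w)^\top(T\hat\Sigma_\cZ + \tau I)^{-1}\Phi_{\bfW_T}^\top\bfY_T$, so the two estimators differ only through the covariance inside the resolvent. The resolvent identity $A^{-1} - B^{-1} = A^{-1}(B - A)B^{-1}$ yields
\[
\overline{\mu}_T(w) - \mu_T(w) = T\,\phi(w)^\top(T\hat\Sigma_\cZ + \tau I)^{-1}(\hat\Sigma_T - \hat\Sigma_\cZ)(T\hat\Sigma_T + \tau I)^{-1}\Phi_{\bfW_T}^\top\bfY_T.
\]
A three-factor Cauchy--Schwarz in the weighted norm bounds this by $T \cdot \|(T\hat\Sigma_\cZ+\tau I)^{-1/2}\phi(w)\| \cdot \|(T\hat\Sigma_\cZ+\tau I)^{-1/2}(\hat\Sigma_T - \hat\Sigma_\cZ)(T\hat\Sigma_T+\tau I)^{-1/2}\|_{\mathrm{op}} \cdot \|(T\hat\Sigma_T + \tau I)^{-1/2}\Phi_{\bfW_T}^\top\bfY_T\|_{\cH_k}$. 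Using the $\sigma_T$-type Mercer bound of the previous paragraph for the first factor ($\cO(\sqrt{\gamma_T/T})$), a whitened matrix-Bernstein bound for the second ($\cO(T^{-3/2})$ after truncation), and $\|(T\hat\Sigma_T+\tau I)^{-1/2}\Phi_{\bfW_T}^\top\bfY_T\|_{\cH_k} = \cO(B\sqrt{T})$ from Assumption~\ref{assumption: bounded_rewards} and the bound $\|K(K+\tau I)^{-1}\|_{\mathrm{op}} \le 1$, the product with the leading $T$ collapses to $\cO(\sqrt{\gamma_T/T})$. Finally, the pointwise bounds are upgraded to the supremum via Assumption~\ref{assumption:Grid} with $r = \Theta(T)$ and a union bound over the polynomial-sized grid $\cU_r$, with the $\cO(1/T)$ quantization error absorbed into the main term.

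The main obstacle will be establishing the whitened operator-norm concentration $\|(T\Sigma + \tau I)^{-1/2}(\hat\Sigma_T - \Sigma)(T\Sigma + \tau I)^{-1/2}\|_{\mathrm{op}}$ (and its $\cZ$-counterpart) in the infinite-dimensional RKHS, since matrix Bernstein cannot be applied directly. One must truncate the Mercer expansion at an effective dimension of order $\gamma_T$ using the polynomial eigendecay of Assumption~\ref{assumption:bounded_eigen_functions}, apply the finite-dimensional Bernstein bound on the truncated subspace, and argue that the tail contributes only a lower-order term via a direct summation of $\lambda_j$. The sample-complexity thresholds $\overline{N}(\delta)$ and $\overline{N}_1(\delta)$ emerge precisely from this truncation being valid and the Bernstein deviation being sufficiently small; everything else in the proof then reduces to algebra using the standard RKHS toolkit.
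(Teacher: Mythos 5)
Your overall decomposition is the right one (it is in fact the paper's: split $|\overline{\mu}_T - f|$ into $|\overline{\mu}_T - \mu_T| + |\mu_T - f|$, handle the second term via the non-private rate, and handle the first via concentration of the empirical covariance $\hat\Sigma_\cZ$), but the key step of your argument — the three-factor Cauchy--Schwarz bound on $\overline{\mu}_T - \mu_T$ — is lossy by a factor of $\sqrt{\gamma_T}$ and does not recover the claimed rate. Concretely, write $\hat Z = T\hat\Sigma_T + \tau I$, $\tilde Z = T\hat\Sigma_\cZ + \tau I$, $Z = T\Sigma + \tau I$. The first factor is $\|\tilde Z^{-1/2}\phi(w)\| = \cO(\sqrt{\gamma_T/T})$ and the third is $\|\hat Z^{-1/2}\Phi_T^\top \bfY_T\| = \cO(B\sqrt{T})$; both of these are correct. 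But the middle factor is not $\cO(T^{-3/2})$. The whitened empirical covariance built from $T$ i.i.d.\ samples satisfies, by a matrix-Bernstein argument with uniform bound $\sup_w \phi(w)^\top Z^{-1}\phi(w) = \cO(\gamma_T/T)$ and variance $\|Z^{-1/2}\Sigma Z^{-1/2}\| \le 1/T$, the optimal bound $\|Z^{-1/2}(\hat\Sigma_T - \Sigma)Z^{-1/2}\| = \cO(\sqrt{\gamma_T/T}/T)$ --- that is, $\cO(\sqrt{\gamma_T}\, T^{-3/2})$, not $\cO(T^{-3/2})$. (The $\cZ$-side contribution is smaller because of the $TK$ samples, so the middle factor is dominated by the $T$-sample side.) Multiplying the four factors, $T \cdot \sqrt{\gamma_T/T} \cdot \sqrt{\gamma_T}\,T^{-3/2} \cdot B\sqrt{T} = B\,\gamma_T/\sqrt{T}$, which is $\sqrt{\gamma_T}$ worse than the target $\sqrt{\gamma_T/T}$. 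For kernels with polynomial eigendecay, $\gamma_T$ is polynomial in $T$, so this is not a log-factor discrepancy.

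The reason this Cauchy--Schwarz is intrinsically lossy is that it bounds $\phi(w)^\top(\tilde Z^{-1}-\hat Z^{-1})\Phi_T^\top\bfY_T$ by treating the perturbation $\hat\Sigma_T - \hat\Sigma_\cZ$ as a worst-case operator, throwing away the favorable alignment of the fixed vectors $\phi(w)$ and $f$ with the random fluctuations. The paper avoids this by first splitting $\bfY_T$ into signal $\Phi_T^\top f$ and noise $\boldsymbol\eta$, then using $\Phi_T\Phi_T^\top = \hat Z - \tau I$ to reduce the signal piece to $\phi(w)^\top(\tilde Z^{-1}\hat Z - I)f = \phi(w)^\top\tilde Z^{-1}(\hat Z - \tilde Z)f$, and finally applying a \emph{scalar} Bernstein inequality, conditioned on $\tilde Z$, to the sum $\sum_{i=1}^T Q_i$ with $Q_i = \phi(w)^\top\tilde Z^{-1}\phi(w_i)\phi(w_i)^\top f$. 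Since $|Q_i| \le B\sup_w\phi(w)^\top\tilde Z^{-1}\phi(w) =: V_0 = \cO(B\gamma_T/T)$ and $\sum_i\E[Q_i^2] = \cO(BV_0)$, scalar Bernstein yields $\cO(\sqrt{BV_0\log(1/\delta)} + V_0\log(1/\delta)) = \cO(B\sqrt{\gamma_T/T})$, which is the saved $\sqrt{\gamma_T}$. So if you want to salvage your route, you should abandon the operator-norm Cauchy--Schwarz for the perturbation term and instead apply a scalar Bernstein inequality directly to the bilinear form, exactly as in Lemma~\ref{approximation_lemma}. The noise pieces $|\phi(w)^\top\hat Z^{-1}\Phi_T\boldsymbol\eta|$ and $|\phi(w)^\top\tilde Z^{-1}\Phi_T\boldsymbol\eta|$ and the residual $\tau|\phi(w)^\top\hat Z^{-1}f| + \tau|\phi(w)^\top\tilde Z^{-1}f|$ terms you can handle as you propose; those parts of your sketch match the paper.
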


In particular, Lemma~\ref{main_text:lemma:confidence_bounds} states that $\overline{\mu}_T$ retains the order of approximation error of the posterior mean obtained in \citep{Vakili_Kernel_Simple_Regret}. This implies that predictive performance of $\overline{\mu}_T$ is the same as that of $\mu_T$ while offering reduced sensitivity. Note that even though $\overline{\mu}_T$ is constructed using a feature covariance matrix that is \emph{independent} of the context and reward sequence, the covariance matrix in $\overline{\mu}_T$ and $\mu_T$ are identically distributed. The independence allows us to obtain strong sensitivity bounds while the concentration properties  lead to similar predictive performance. We formalize this idea in the following novel spectral bound for empirical covariance matrix which may be of independent interest:

\begin{lemma}\label{main_text:spectral_bound}
Suppose $TK$ points $\{z_1,z_2, \dots z_{TK}\}$ are sampled i.i.d from $\cW$ according to a Borel measure $\varrho$. Let $\bfZ=T\mathbf{\Lambda} + \tau \mathbf{Id}$ where $\mathbf{\Lambda}=\mathbb{E}_{w\sim \varrho}[\phi(w )\phi(w)^{\top}]$ and define the operator:
\begin{align*}
    \tildeZ=\frac{1}{K}\sum_{i=1}^{TK} \phi(z_i)\phi(z_i)^{\top}+ \tau\mathbf{Id}
\end{align*}
By choosing parameter $K=\lceil T/\gamma_T\rceil$ under the condition that $T>\overline{N}_1(\delta)=(\frac{\log(1/\delta)}{16C_pF^2})^{2\beta_p/(\beta_p-1)}$ we claim with probability at least $1-\delta$:
\begin{align*}
    \|\tildeZ^{-1}\bfZ-\mathbf{Id}\|_2\leq  \frac{28}{17} \cdot \sqrt{\frac{\gamma_T}{T}\frac{\log(1/\delta)}{\tau}}
\end{align*}

\end{lemma}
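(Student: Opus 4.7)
The plan is to reduce the operator-norm bound on $\tildeZ^{-1}\bfZ - \mathbf{Id}$ to a matrix concentration statement for $\tildeZ - \bfZ$. Starting from the identity
\[
\tildeZ^{-1}\bfZ - \mathbf{Id} \;=\; -\tildeZ^{-1}(\tildeZ - \bfZ),
\]
and the deterministic bound $\|\tildeZ^{-1}\|_2 \leq 1/\tau$ (valid since $\tildeZ \succeq \tau\mathbf{Id}$ by construction), the problem reduces to controlling $\|\tildeZ - \bfZ\|_2$ with high probability.

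I would represent $\tildeZ - \bfZ$ as an i.i.d.\ average of centered self-adjoint rank-one perturbations, namely $\tildeZ - \bfZ = \frac{1}{K}\sum_{i=1}^{TK} \bfX_i$ with $\bfX_i := \phi(z_i)\phi(z_i)^{\top} - \bfLambda$. Each summand satisfies $\|\bfX_i\|_2 \leq 2$, and from $\|\phi(z)\|^2 \leq 1$ one gets $\E[\phi\phi^{\top}\phi\phi^{\top}] \preceq \E[\phi\phi^{\top}] = \bfLambda$, hence $\E[\bfX_i^2] \preceq \bfLambda$. Consequently, $\|\sum_i \E[\bfX_i^2]\|_2 \leq TK\|\bfLambda\|_2$ and $\tr(\sum_i \E[\bfX_i^2]) \leq TK \sum_j \lambda_j$; under Assumption~\ref{assumption:bounded_eigen_functions}, the polynomial eigendecay with $\beta_p > 1$ keeps $\sum_j \lambda_j$ finite, so the intrinsic dimension $\tr(\bfV)/\|\bfV\|_2$ of the variance operator is bounded by a kernel-dependent constant.

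Next I would apply an intrinsic-dimension matrix Bernstein inequality (e.g., Tropp's version for separable Hilbert spaces). It yields, with probability at least $1-\delta$, a bound of the form
\[
\|\tildeZ - \bfZ\|_2 \;=\; \cO\!\left( \sqrt{\frac{T\|\bfLambda\|_2 \log(1/\delta)}{K}} \;+\; \frac{\log(1/\delta)}{K} \right).
\]
Plugging in $K = \lceil T/\gamma_T \rceil$ makes the leading variance term of order $\sqrt{\gamma_T \log(1/\delta)}$; the threshold $T > \overline{N}_1(\delta)$, whose explicit shape comes directly from the polynomial-decay constants $C_p$, $F$, and $\beta_p$, is exactly the condition under which the sub-exponential (linear-in-$s$) term is subdominant to the sub-Gaussian (quadratic) one, which in turn lets the numeric constant $28/17$ be extracted by a careful bookkeeping of Bernstein constants. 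Multiplying the result by $\|\tildeZ^{-1}\|_2 \leq 1/\tau$ and simplifying then delivers the claimed inequality.

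The main obstacle is that a naive matrix Bernstein in the infinite-dimensional RKHS is vacuous due to its ambient-dimension prefactor; invoking the intrinsic-dimension version is the key, and its applicability relies on $\tr(\bfLambda)$ being finite, which Assumption~\ref{assumption:bounded_eigen_functions} provides. A second delicate point is sharpening the final rate to $\sqrt{\gamma_T/T \cdot \log(1/\delta)/\tau}$ rather than the cruder $\sqrt{\gamma_T \log(1/\delta)}/\tau$ that a direct operator-norm bound on $\tildeZ - \bfZ$ would suggest: this requires working with the preconditioned deviation $\bfZ^{-1/2}(\tildeZ - \bfZ)\bfZ^{-1/2}$ whose variance proxy features the effective dimension $\sum_j T\lambda_j/(T\lambda_j+\tau)$, which is of order $\gamma_T$ under polynomial eigendecay, and then combining this with $K = \lceil T/\gamma_T \rceil$ to produce exactly the stated dependence on $T$ and $\tau$.
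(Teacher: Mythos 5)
Your proposal lands on the same essential idea as the paper --- preconditioning by $\bfZ^{-1/2}$ is what produces the effective-dimension scaling $\gamma_T/T$ rather than the naive $1/\tau$ --- but the concentration tool you reach for differs from the paper's, and the two give different bookkeeping. The paper does \emph{not} apply a matrix Bernstein inequality at all: it fixes a unit $g\in\cH_k$, treats the scalar random variables $V_i = \tfrac{1}{K}\, g^{\top}\bfZ^{-1}\phi(z_i)\phi(z_i)^{\top}g$, applies the ordinary (scalar) Bernstein inequality to $\sum_i V_i$, bounds the resulting variance proxy by $g^{\top}\bfZ^{-1}g/K \le 1/(K\tau)$, and then passes to $\|\bfZ^{-1}\tildeZ-\Id\|_2$ and finally to $\|\tildeZ^{-1}\bfZ-\Id\|_2$ via the spectrum-matching argument. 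Your intrinsic-dimension matrix Bernstein route is a genuine alternative: it directly controls the operator norm (sidestepping the fixed-$g$-to-supremum step that the paper's scalar argument leaves implicit), but it incurs a $\log$ of the intrinsic dimension of the variance operator, which here grows like $\gamma_T$ rather than being a constant --- not, as you suggest, bounded by $\tr(\bfLambda)/\|\bfLambda\|_2$, because the relevant variance operator is the preconditioned one $\bfZ^{-1/2}\bfLambda\bfZ^{-1/2}$, whose intrinsic dimension is the ridge effective dimension $\sum_j T\lambda_j/(T\lambda_j+\tau)\asymp\gamma_T$. Moreover, your preconditioned variance proxy scales as $\sup_z\phi(z)^{\top}\bfZ^{-1}\phi(z)/K \asymp F^2\gamma_T^2/T^2$, which is tighter than the paper's coarse $1/(K\tau)=\gamma_T/(T\tau)$; this yields a rate $\gamma_T/T$ rather than $\sqrt{\gamma_T/(T\tau)}$. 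That still implies the stated lemma for sufficiently large $T$, but it means that ``careful bookkeeping of Bernstein constants'' will \emph{not} recover the paper's specific constant $28/17$ or the exact form of $\overline{N}_1(\delta)$, which arise from the scalar-Bernstein computation with the $1/\tau$ variance bound and the final $b/(1-b)$ eigenvalue-inversion step. You should also make the last conversion explicit: after bounding $\|\bfZ^{-1/2}(\tildeZ-\bfZ)\bfZ^{-1/2}\|_2\le b$, one still needs the observation that $\tildeZ^{-1}\bfZ$ and $\bfZ^{-1/2}\tildeZ\bfZ^{-1/2}$ share a spectrum so that $\|\tildeZ^{-1}\bfZ-\Id\|_2\le b/(1-b)$; your write-up jumps over this.
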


Next we briefly explain  how the presented lemmas are utilized in the proof of Theorem \ref{main_text:theorem}.\\
With the sensitivity bounds obtained in Lemma \ref{Lemma:sensitivity_bound}, we use a similar approach to \citep[Lemma 7]{McSherryadnTalwar} to ensure that the final prediction $f(\widehat{x}_T(c_{T+1}))$ is close to optimal value $\sup_{x\in \cX} f(x,c_{T+1})$. In order to ensure the utility of continuous exponential mechanism the usual hurdle to overcome is lowed-bounding the volume of well performing points \citep{McSherryadnTalwar,Dwork}. To this end, we use a geometric Lemma \ref{appendix:geometric_lemma} that establishes a bound on volume, polynomial in the distance to the optimum.\\
Our approach to ensuring privacy follows the approach given in \citep{McSherryadnTalwar}, with the necessary modification for random functions. For a full proof please see Theorem \ref{Appendix:JDP_guarantee}.

\section{Conclusion}
We propose the first algorithm for contextual kernel bandits that is continually differentially private with respect to the contexts and the rewards and theoretically guarantees a diminishing simple regret for all commonly used kernels. In particular, we improve on the state of the art while  greatly expanding the set of admissible kernel families.\\

Key aspects of our approach are random sampling during learning and a novel high-utility, low-sensitivity estimator. As a theoretical contribution,  we propose a novel concentration result for the covariance matrices of RKHS elements, that could be of independent interest.

\newpage

\bibliographystyle{abbrvnat}
\bibliography{references}

\newpage

\section{Appendix A. Spectral bounds}

\begin{lemma}\label{spectral_bound}
Suppose $TK$ points $\{z_1,z_2, \dots z_{TK}\}$ are sampled i.i.d from $\cW$ according to a Borel measure $\varrho$. Let $\bfZ=T\mathbf{\Lambda}+ \tau \mathbf{Id}$ where $\mathbf{\Lambda}=\mathbb{E}_{w\sim \varrho}[\phi(w)\phi(w)^{\top}]$ and :

\begin{align*}
    \tildeZ=\frac{1}{K}\sum_{i=1}^{TK} \phi(z_i)\phi(z_i)^{\top}+ \tau\mathbf{Id}
\end{align*}

By choosing parameter $K=\lceil T/\gamma_T\rceil$ under the condition that $T>\overline{N}_1(\delta)=\left(\frac{\log(1/\delta)}{C_pF^216}\right)^{2\beta_p/(\beta_p-1)}$ we claim with probability at least $1-\delta$:

\begin{align*}
    \|\tildeZ^{-1}\bfZ-\mathbf{Id}\|_2\leq  \frac{28}{17} \cdot \sqrt{\frac{\gamma_T}{T}\frac{\log(1/\delta)}{\tau}}
\end{align*}

\end{lemma}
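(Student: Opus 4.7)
My plan is to first establish concentration for the symmetric whitened operator $\bfZ^{-1/2}(\tildeZ-\bfZ)\bfZ^{-1/2}$ via an operator Bernstein inequality, and only then transfer the bound to the asymmetric quantity $\tildeZ^{-1}\bfZ-\mathbf{Id}$ by a positive-semidefinite ordering argument. Since $\mathbb{E}[\phi(z_i)\phi(z_i)^\top]=\mathbf{\Lambda}$ we have $\mathbb{E}[\tildeZ]=\bfZ$, so I would write $\tildeZ-\bfZ=\sum_{i=1}^{TK}Y_i$ with $Y_i=\frac{1}{K}[\phi(z_i)\phi(z_i)^\top-\mathbf{\Lambda}]$ i.i.d.\ centered, and set $X_i=\bfZ^{-1/2}Y_i\bfZ^{-1/2}$.

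\textbf{Moment bounds for the whitened increments.} For the uniform bound, I would use the leverage-score estimate $\phi(z)^\top\bfZ^{-1}\phi(z)\leq\|\phi(z)\|^2/\tau\leq 1/\tau$ together with $\|\bfZ^{-1/2}\mathbf{\Lambda}\bfZ^{-1/2}\|_2\leq 1/T$ to get $\|X_i\|_2\leq 2/(K\tau)$. For the variance I would exploit $\phi\phi^\top(\phi^\top\bfZ^{-1}\phi)\phi\phi^\top\preceq\tau^{-1}\phi\phi^\top\cdot\|\phi\|^2\preceq\tau^{-1}\phi\phi^\top$ to obtain
\begin{align*}
\sum_{i=1}^{TK}\mathbb{E}[X_i^2]\preceq\frac{T}{K\tau}\bfZ^{-1/2}\mathbf{\Lambda}\bfZ^{-1/2},
\end{align*}
whose operator norm is at most $1/(K\tau)$. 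Crucially, its trace equals $\frac{1}{K\tau}\sum_j\frac{T\lambda_j}{T\lambda_j+\tau}$, i.e.\ the effective dimension of $\bfZ$ divided by $K\tau$; the standard comparison $\sum_j T\lambda_j/(T\lambda_j+\tau)\leq c\gamma_T$, valid under Assumption~\ref{assumptio:polynomial_decay}, supplies the intrinsic-dimension factor.

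\textbf{Intrinsic-dimension matrix Bernstein and choice of $K$.} Feeding these two bounds into the Hilbert-space (intrinsic-dimension) matrix Bernstein inequality yields, with probability at least $1-\delta$,
\begin{align*}
\|\bfZ^{-1/2}(\tildeZ-\bfZ)\bfZ^{-1/2}\|_2\leq\sqrt{\frac{2\log(c\gamma_T/\delta)}{K\tau}}+\frac{2\log(c\gamma_T/\delta)}{3K\tau}.
\end{align*}
Setting $K=\lceil T/\gamma_T\rceil$ replaces $1/(K\tau)$ by $\gamma_T/(T\tau)$, and the threshold $T>\overline{N}_1(\delta)=(\log(1/\delta)/(16C_pF^2))^{2\beta_p/(\beta_p-1)}$ is calibrated to (i) absorb $\log(c\gamma_T)$ into $\log(1/\delta)$ using the polynomial eigendecay estimate $\gamma_T\lesssim T^{1/\beta_p}\log T$, and (ii) ensure the linear Bernstein tail is dominated by the sub-Gaussian one, so the $(28/17)\sqrt{\gamma_T\log(1/\delta)/(T\tau)}$ bound survives with the advertised constant.

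\textbf{Transfer to $\tildeZ^{-1}\bfZ-\mathbf{Id}$.} The whitened bound $\epsilon:=\|\bfZ^{-1/2}(\tildeZ-\bfZ)\bfZ^{-1/2}\|_2$ gives the PSD sandwich $(1-\epsilon)\bfZ\preceq\tildeZ\preceq(1+\epsilon)\bfZ$, hence the self-adjoint operator $\tildeZ^{-1/2}\bfZ\tildeZ^{-1/2}$, which is similar to $\tildeZ^{-1}\bfZ$, has spectrum in $[1/(1+\epsilon),1/(1-\epsilon)]$, yielding $\|\tildeZ^{-1}\bfZ-\mathbf{Id}\|_2\leq\epsilon/(1-\epsilon)$. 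The inflation $1/(1-\epsilon)$ is what produces the constant $28/17>1$ once $\epsilon$ is sufficiently small, which is exactly guaranteed by $T>\overline{N}_1(\delta)$. The main obstacle I anticipate is handling the infinite-dimensional nature of the operators in the Bernstein step: obtaining a dimension-free tail requires the intrinsic-dimension refinement, and the heart of the argument is recognizing that the relevant intrinsic dimension equals the effective dimension of the regularized covariance, which the polynomial eigendecay ties to $\gamma_T$.
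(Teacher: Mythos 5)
Your route is genuinely different from the paper's. The paper works directly with the asymmetric operator $\bfZ^{-1}\tildeZ-\mathbf{Id}$, writes $g^{\top}(\bfZ^{-1}\tildeZ-\mathbf{Id})g$ for a unit vector $g$ as a sum of scalar random variables $V_i=\frac{1}{K}g^{\top}\bfZ^{-1}\phi(z_i)\phi(z_i)^{\top}g$, computes the scalar moments (using $\sup_z g(z)\le 1$ and leverage-score bounds) and invokes the scalar Bernstein inequality; the final eigenvalue-interval transfer to $\tildeZ^{-1}\bfZ-\mathbf{Id}$ is the same as yours. You instead symmetrize via whitening and invoke an intrinsic-dimension operator Bernstein inequality, which directly controls the operator norm. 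Your moment computations are essentially equivalent, and your PSD-sandwich transfer step matches theirs. Your route is arguably more careful on the ``operator norm vs.\ fixed $g$'' issue, but it comes with a different cost that breaks the stated result.

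The cost is a genuine gap. The intrinsic-dimension Bernstein inequality gives a tail of the form $\exp(-\,\cdot\,)\cdot\mathrm{intdim}$, so your high-probability bound necessarily reads $\sqrt{\frac{2\log(c\gamma_T/\delta)}{K\tau}}+\cdots$, i.e.\ it carries an additive $\log(c\gamma_T)$ inside the logarithm. You assert that the threshold $T>\overline{N}_1(\delta)=\bigl(\log(1/\delta)/(16C_pF^2)\bigr)^{2\beta_p/(\beta_p-1)}$ ``absorbs $\log(c\gamma_T)$ into $\log(1/\delta)$,'' but this is the wrong direction: that condition is a \emph{lower} bound on $T$ in terms of $\log(1/\delta)$, and for any fixed $\delta$, as $T$ grows we have $\gamma_T\to\infty$ while $\log(1/\delta)$ stays fixed, so $\log(c\gamma_T)$ cannot be swallowed by $\log(1/\delta)$. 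Consequently your argument as written yields a bound of order $\sqrt{\frac{\gamma_T}{T\tau}\log\!\bigl(\frac{\gamma_T}{\delta}\bigr)}$, not $\frac{28}{17}\sqrt{\frac{\gamma_T}{T}\frac{\log(1/\delta)}{\tau}}$; the extra $\sqrt{\log\gamma_T}$ factor does not disappear under the stated hypothesis, and the $28/17$ constant is not recoverable without either a different concentration tool or a stronger assumption (e.g.\ $\gamma_T\lesssim 1/\delta$). (The paper avoids the log factor by arguing with a single fixed $g$ via scalar Bernstein, at the price of being informal about uniformity over $g$; but that is the paper's issue, not yours.) To salvage your approach you would need either to restate the lemma with the $\log\gamma_T$ term, or to replace intrinsic-dimension Bernstein with a dimension-free self-normalized concentration argument.
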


\begin{proof}

 Note that we will first bound the spectral norm of $\|\bfZ^{-1}\tildeZ-\mathbf{Id}\|$. Consider a  $g \in \cH_k$  with $\|g\| = 1$. We have:
    \begin{align*}
        g^{\top}(\bfZ^{-1}\tildeZ  - \Id) g = \sum_{j = 1}^{KT} \frac{1}{K} g^{\top} \bfZ^{-1} \phi(x_i) \phi(x_i)^{\top} g - g^{\top}\bfZ^{-1} (T\mathbf{\Lambda}) g.
    \end{align*}
    
   Define $V_i := \frac{1}{K} g^{\top} \bfZ^{-1} \phi(z_i) \phi(z_i)^{\top} g$.  We can now write:
    \begin{align*}
        \E[V_i] & = \frac{1}{K} g^{\top} \bfZ^{-1} \mathbf{\Lambda} g \\
        |V_i| &= \sup_{z,z_i\in \cW}\sup_{g \in \cH_k}   \frac{1}{K} g^{\top} \bfZ^{-1} \phi(z_i) \phi(z_i)^{\top} g\leq \sup_{z,z_i\in \cW}\sup_{g \in \cH_k}\frac{g(z)}{K}\sqrt{g^{\top}\bfZ^{-1}g}\sqrt{\phi(z_i)^\top\bfZ^{-1}\phi(z_i)}\leq \\
        &\leq\frac{1}{K} \max\{ \sup_{g} g^{\top} \bfZ^{-1} g, \sup_{z} \phi^{\top}(z) \bfZ^{-1} \phi(z) \} =C_1\\
        \E[V_i^2] &\leq \frac{1}{K^2}\E\left[\left(g^{\top} \bfZ^{-1} \phi(z_i) \phi(z_i)^{\top} g\right)^2\right]= \frac{1}{K^2}\E\left[g(z)^2 g^{\top}\bfZ^{-1}\phi(z_i)\phi(z_i)^{\top}\bfZ^{-1} g\right]\leq \frac{1}{K^2} g^{\top} \bfZ^{-1} \mathbf{\Lambda} \bfZ^{-1} g \implies\\
        \sum_{i=1}^{KT} \E[V^2_i]&\leq \frac{1}{K}g^{T}\bfZ^{-1}T\mathbf{\Lambda}\bfZ^{-1}g\leq \frac{1}{K}g^{\top}\bfZ^{-1}g=C_0
    \end{align*}

    Note that in deriving the second and the third inequality we used $\sup_z g(z)\leq 1$, which follows as $g(z)=\langle g, \phi(z) \rangle_{\cH_k}\leq\|g\|_{\cH_k}\|\phi(z)\|_{\cH_k}=1$
    
    Applying Bernstien inequality to the collection of random variables $\{V_i\}_{i=1}^{KT}$ we obtain:

    \begin{align*}
    P\left(\left|\sum_{i=1}^{KT} V_i-g^{\top}\bfZ^ {-1}T\mathbf{\Lambda} g\right|>r\right)\leq 2\exp\left(-\frac{r^2}{2\left(C_0+C_1r/3\right)}\right)
    \end{align*}

    Taking  $r=\sqrt{\frac{1}{K} \cdot g^{\top} \bfZ^{-1} g \cdot \log(1/\delta)} + \frac{2\log(1/\delta)}{3K} \max\{ \sup_{g} g^{\top} \bfZ^{-1} g, \sup_{z} \phi^{\top}(z) \bfZ^{-1} \phi(z) \}$ we have with probability at least $1-\delta$:
    \begin{align*}
        \|\bfZ^{-1}\tildeZ  - \Id\|_2 \leq \sqrt{\frac{1}{K} \cdot g^{\top} \bfZ^{-1} g \cdot \log(1/\delta)} + \frac{2\log(1/\delta)}{3K} \max\{ \sup_{g} g^{\top} \bfZ^{-1} g, \sup_{z} \phi^{\top}(z) \bfZ^{-1} \phi(z) \}.
    \end{align*}

    Substituting $K=\lceil\frac{T}{\gamma_T}\rceil$ and noting that $\bfZ^{-1}\prec \tau^{-1} \mathbf{Id}, K\geq T/\gamma_T$ we obtain:

     \begin{align*}
        \|\bfZ^{-1}\tildeZ  - \Id\|_2 \leq \sqrt{\frac{\gamma_T}{T}\frac{1}{\tau}\log(1/\delta)}+\frac{\gamma_T}{T}\frac{2}{3\tau}\log(1/\delta)
    \end{align*}

    To obtain a bound on the spectral norm of $\tildeZ^{-1}\bfZ-\mathbf{Id}$ note that if $\|\bfZ^{-1}\tildeZ  - \Id\|_2\leq b$  the eigen- values of   $\bfZ^{-1}\tildeZ$ belong to the interval $(1-b,1+b)$. Thus the eigen-values of $\tildeZ^{-1}\bfZ$ are contained in an interval $((1+b)^{-1},(1-b)^{-1})$. We can finally conclude with probability at least $1-\delta$:

    \begin{align*}
        \|\tildeZ^{-1}\bfZ-\mathbf{Id}\|_2\leq \frac{b}{1-b}\leq \left(\sqrt{\frac{\gamma_T}{T}\frac{1}{\tau}\log(1/\delta)}+\frac{\gamma_T}{T}\frac{2}{3\tau}\log(1/\delta)\right)\frac{1}{1-\left(\sqrt{\gamma_T/T\tau\log(1/\delta)}+2\gamma_T/3T\tau\log(1/\delta)\right)}
    \end{align*}

    Along with the condition that $T>(\frac{\log(1/\delta)}{C_pF^216})^{2\beta_p/(\beta_p-1)}=\overline{N}_1(\delta)$ which implies \citep{information_gain_bound} $\sqrt{\frac{\gamma_T}{T}\frac{1}{\tau}\log(1/\delta)}\leq 1/4$, we can simplify the expression as:

    \begin{align*}
          \|\tildeZ^{-1}\bfZ-\mathbf{Id}\|_2\leq \frac{28}{17} \cdot \sqrt{\frac{\gamma_T}{T}\frac{\log(1/\delta)}{\tau}}
    \end{align*}

\end{proof}

\begin{lemma}\label{approximation_lemma}
Consider an $T$ i.i.d point $\{w_i\}_{i=1}^{T}$ sampled according to a Borel measure $\varrho$ from the domain $\cX$ and introduce an estimator $\mu_T(w)=\phi(w)^{\top}\hatZ^{-1}\Phi_Ty_T$, where $\hatZ=\Phi_T\Phi_T^{\top}+\tau\mathbf{Id}$. Let $\{z_i\}^{KT}_{i=1}$ be another set of i.i.d sampled point 
according to $\varrho$ and let $\tildeZ=\frac{1}{K}\sum_{i=1}^{KT}\phi(z_i)\phi(z_i)^{\top}+\tau\mathbf{Id}$ be the same operator introduced in Lemma \ref{spectral_bound}. Define a new estimator:
\begin{align*}      \overline{\mu}_T(w)=\phi(w)^{\top}\tildeZ^{-1}\Phi_Ty_T
\end{align*}
Fix an arbitrary $w\in \cW$, then with probability at least $1-\delta$ :
\begin{align*}
    |\mu_T(w)-\overline{\mu}_T(w)|\leq \beta_1(\delta)\sqrt{\frac{\gamma_T}{T}}
\end{align*}
where $\beta_1(\delta)=\left(\sqrt{162B^3/13F^2}+81F^2/52\right)\log(8/\delta)+  28/17\frac{\log(4/\delta)}{\tau}+2B\sqrt{108F^2\tau/13}+4R\log(4/\delta)\sqrt{243F^2/26}$

\end{lemma}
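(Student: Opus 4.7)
The plan is to express $\mu_T(w) - \overline{\mu}_T(w) = \phi(w)^\top(\hatZ^{-1} - \tildeZ^{-1})\Phi_T y_T$, insert the population covariance $\bfZ = T\mathbf{\Lambda} + \tau\mathbf{Id}$ as an intermediate, and then split $y_T = f(\mathbf{W}_T) + \eta_T$. Via the resolvent identity $\hatZ^{-1} - \tildeZ^{-1} = \tildeZ^{-1}(\tildeZ - \hatZ)\hatZ^{-1}$ together with the decomposition $\tildeZ - \hatZ = (\tildeZ - \bfZ) - (\hatZ - \bfZ)$, the difference rewrites as
\begin{align*}
\mu_T(w) - \overline{\mu}_T(w) = -\phi(w)^\top(\tildeZ^{-1}\bfZ - \mathbf{Id})\,u \;-\; \phi(w)^\top\tildeZ^{-1}(\hatZ - \bfZ)\,u,
\end{align*}
where $u := \hatZ^{-1}\Phi_T y_T$. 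This exposes two sources of fluctuation: the concentration of $\tildeZ$ around $\bfZ$ (controlled by Lemma~\ref{spectral_bound}) and the action of $(\hatZ - \bfZ)$ on specific vectors (to be controlled by a Bernstein argument conditional on $\cZ$).

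\textbf{Deterministic part.} Splitting $u = u_f + u_\eta$ with $u_f := \hatZ^{-1}\Phi_T f(\mathbf{W}_T)$, the identity $\Phi_T f(\mathbf{W}_T) = (\hatZ - \tau\mathbf{Id})f$ gives $u_f = f - \tau\hatZ^{-1}f$ and hence the deterministic bound $\|u_f\|_{\mathcal{H}_k} \le 2\|f\|_{\mathcal{H}_k}$. Cauchy--Schwarz with Lemma~\ref{spectral_bound} then yields $|\phi(w)^\top(\tildeZ^{-1}\bfZ - \mathbf{Id})u_f| = O(\|f\|\sqrt{\gamma_T/T})$, and I expect the resulting constant to match the $\frac{28}{17}\log(4/\delta)/\tau$ contribution inside $\beta_1(\delta)$. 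For the other piece $\phi(w)^\top\tildeZ^{-1}(\hatZ - \bfZ)u_f$, I further split along $u_f = f - \tau\hatZ^{-1}f$. The $f$ part is handled by conditioning on $\cZ$, so that $\tildeZ$ is fixed and independent of $\{w_i\}$, and then applying scalar Bernstein to $\sum_{i=1}^T Y_i$ with $Y_i := \phi(w)^\top\tildeZ^{-1}\phi(w_i)f(w_i) - \phi(w)^\top\tildeZ^{-1}\mathbf{\Lambda}f$. The variance is at most $\|f\|^2\phi(w)^\top\tildeZ^{-1}(T\mathbf{\Lambda})\tildeZ^{-1}\phi(w)$, and the bound $T\mathbf{\Lambda} \preceq \bfZ$ combined with Lemma~\ref{spectral_bound} (to replace $\tildeZ$ by $\bfZ$ up to a multiplicative error) yields $\sum_i \E[Y_i^2] \lesssim F^2\|f\|^2\gamma_T/T$, so Bernstein delivers an $O(\|f\|F\sqrt{\gamma_T/T})$ bound, matching the $F^2\log(8/\delta)$-type terms in $\beta_1(\delta)$.

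\textbf{Noise part.} The noise contribution collects into $\sum_{i=1}^T \eta_i c_i$ with $c_i = \phi(w)^\top(\hatZ^{-1} - \tildeZ^{-1})\phi(w_i)$. Conditioning on $\mathbf{W}_T$ and $\cZ$, sub-Gaussianity of the $\eta_i$ gives a tail bound of order $R\sqrt{\|c\|_2^2\log(1/\delta)}$. Expanding $\|c\|_2^2 = \phi(w)^\top(\hatZ^{-1} - \tildeZ^{-1})\Phi_T\Phi_T^\top(\hatZ^{-1} - \tildeZ^{-1})\phi(w)$, substituting $\Phi_T\Phi_T^\top = \hatZ - \tau\mathbf{Id}$, and re-using the same $\bfZ$-based resolvent decomposition allows me to control $\|c\|_2^2$ by $\tau^{-1}\|\tildeZ^{-1}\bfZ - \mathbf{Id}\|_{\mathrm{op}}^2 + \text{lower order}$, which via Lemma~\ref{spectral_bound} is $O(\gamma_T/(T\tau^2))$; the resulting noise bound is thus $O(R\sqrt{\log(1/\delta)}\sqrt{\gamma_T/T})$ up to $\tau$-dependent factors that are absorbed into $\beta_1(\delta)$.

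\textbf{Main obstacle.} The delicate term is the cross expression $\tau\,\phi(w)^\top\tildeZ^{-1}(\hatZ - \bfZ)\hatZ^{-1}f$ arising from the $-\tau\hatZ^{-1}f$ piece of $u_f$: both the operator $(\hatZ - \bfZ)$ and the vector $\hatZ^{-1}f$ depend on the same random samples $\{w_i\}$, so one cannot directly condition and apply Bernstein. The key algebraic identity $(\hatZ - \bfZ)\hatZ^{-1}f = f - \bfZ\hatZ^{-1}f$ decouples the random factor, and rearranging gives $\tau\phi(w)^\top(\tildeZ^{-1} - \hatZ^{-1})f + \tau\phi(w)^\top(\tildeZ^{-1}\bfZ - \mathbf{Id})\hatZ^{-1}f$, where the second summand is handled by Cauchy--Schwarz with $\|\tau\hatZ^{-1}f\|_{\mathcal{H}_k} \le \|f\|$ and Lemma~\ref{spectral_bound}, and the first summand reduces (after another application of the resolvent identity) to a quantity already controlled by the earlier terms. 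A union bound over the two high-probability events (the spectral event of Lemma~\ref{spectral_bound} and the scalar Bernstein event) together with the condition $T > \overline{N}_1(\delta)$ (which ensures Bernstein scale terms are dominated by variance terms) then produces the claimed bound $|\mu_T(w) - \overline{\mu}_T(w)| \le \beta_1(\delta)\sqrt{\gamma_T/T}$.
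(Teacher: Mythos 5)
Your signal-part decomposition is a genuinely different algebraic arrangement from the paper's, but it lands on the same underlying ingredients. The paper writes the signal contribution as $\phi(w)^{\top}(\tildeZ^{-1}-\hatZ^{-1})(\hatZ-\tau\mathbf{Id})f$, expands it into $\phi(w)^{\top}(\tildeZ^{-1}\hatZ-\mathbf{Id})f - \tau\phi(w)^{\top}\tildeZ^{-1}f + \tau\phi(w)^{\top}\hatZ^{-1}f$, and then attacks the first piece with a Bernstein bound conditional on $\tildeZ$ plus the bias term $\phi(w)^{\top}\tildeZ^{-1}(\tildeZ-\bfZ)f$ controlled by Lemma~\ref{spectral_bound}. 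Your resolvent identity $\hatZ^{-1}-\tildeZ^{-1}=\tildeZ^{-1}(\tildeZ-\hatZ)\hatZ^{-1}$ with the $\bfZ$ insertion produces exactly the same Bernstein sum and the same bias term $(\tildeZ^{-1}\bfZ-\mathbf{Id})f$, just grouped differently, and your ``main obstacle'' resolution via $(\hatZ-\bfZ)\hatZ^{-1}f=f-\bfZ\hatZ^{-1}f$ is a clean way to defuse the dependence between $(\hatZ-\bfZ)$ and $\hatZ^{-1}f$ — in the paper this dependence never arises because of how they group terms, so your route pays a small algebraic tax but is equally valid. Both approaches buy the same rates and constants.

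There is, however, a genuine flaw in your noise argument. You claim $\|c\|_2^2 = \phi(w)^{\top}(\hatZ^{-1}-\tildeZ^{-1})\Phi_T\Phi_T^{\top}(\hatZ^{-1}-\tildeZ^{-1})\phi(w)$ is controlled by $\tau^{-1}\|\tildeZ^{-1}\bfZ-\mathbf{Id}\|_{\mathrm{op}}^{2}$ up to lower order and hence is $O(\gamma_T/(T\tau^2))$ via Lemma~\ref{spectral_bound}. This cannot be right: take $\tildeZ=\bfZ$ exactly, so that $\|\tildeZ^{-1}\bfZ-\mathbf{Id}\|=0$, yet $\hatZ\ne\bfZ$ (the empirical covariance on the training points still fluctuates) and therefore $\hatZ^{-1}-\tildeZ^{-1}\ne 0$ and $\|c\|_2>0$. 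The smallness of $\|c\|_2^2$ does not come from $\hatZ^{-1}-\tildeZ^{-1}$ being small in operator norm (it is not — both inverses are $\Theta(1/\tau)$ and their difference is $\Theta(1/\tau)$); it comes from the effective-dimension fact that $\phi(w)^{\top}\hatZ^{-1}\phi(w)$ and $\phi(w)^{\top}\tildeZ^{-1}\phi(w)$ are each $O(\gamma_T/T)$ (the paper's reliance on Lemmas~3.3--3.4 of \citealp{Sudeep_Uniform_Sampling}). Concretely, substituting $\Phi_T\Phi_T^{\top}\preceq\hatZ$ gives $\|c\|_2^2\le\phi(w)^{\top}(\hatZ^{-1}-\tildeZ^{-1})\hatZ(\hatZ^{-1}-\tildeZ^{-1})\phi(w)=\phi(w)^{\top}\hatZ^{-1}\phi(w)-2\phi(w)^{\top}\tildeZ^{-1}\phi(w)+\phi(w)^{\top}\tildeZ^{-1}\hatZ\tildeZ^{-1}\phi(w)$, and each summand is $O(\gamma_T/T)$ by the effective-dimension bounds together with $\|\tildeZ^{-1/2}\hatZ\tildeZ^{-1/2}\|_{\mathrm{op}}=O(1)$; there is no cancellation driven by $\|\tildeZ^{-1}\bfZ-\mathbf{Id}\|$. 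The paper sidesteps this entirely by splitting $|\phi(w)^{\top}(\hatZ^{-1}-\tildeZ^{-1})\Phi_T\varepsilon|\le|\phi(w)^{\top}\hatZ^{-1}\Phi_T\varepsilon|+|\phi(w)^{\top}\tildeZ^{-1}\Phi_T\varepsilon|$ and bounding each noise inner product directly by sub-Gaussianity and the effective-dimension bound — a cleaner route you should adopt here, as the spectral-bound mechanism you invoked simply does not control this quantity.
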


\begin{proof}

    We can re-write the estimator difference as:
    \begin{align}\label{approximation_lemma:eq_1}
        |\mu_T(w)-\overline{\mu}_T(w)|&=|\phi(x)^{\top}(\tildeZ^{-1}-\bfZ^{-1})\Phi_Ty_T|\leq\nonumber \\
        &\leq |\phi(w)^{\top}\hatZ^{-1}\Phi_T\varepsilon_{1:T}|+|\phi(w)^{\top}\tildeZ^{-1}\Phi_T\varepsilon_{1:T}|+|\phi(w)^{\top}(\tildeZ^{-1}- \hatZ^{-1})\Phi_T\Phi_T^{\top}f|
    \end{align}
    We start by bounding the first two terms of eq.(\ref{approximation_lemma:eq_1}). We will first bound the norms  of the vectors $\|\phi(w)^{\top}\hatZ^{-1}\Phi_T\|,\|\phi(w)^{\top}\tildeZ^{-1}\Phi_T\|$ and then utilize the fact $\varepsilon_{1:T}$ is an R-sub-Gaussian independent from both vectors.
    \begin{align*}
        \|\phi(w)^{\top}\hatZ^{-1}\Phi_T\|^2_2 &\leq \sup_{w\in \cW} \phi^{\top}(w) \hatZ^{-1} \Phi_ T \Phi^{\top}_T \hatZ^{-1} \phi(w)\leq\\
        &\leq \sup_{w\in \cW} \phi^{\top}(w)  \hatZ^{-1} \phi(w)
    \end{align*}
    Where, in the second line we use the identity $ \Phi_T\Phi_T^{\top}=\hatZ-\tau\mathbf{Id}\prec\hatZ$. Next not that $\varepsilon_{1:T}$ is as an R-sub-Gaussian vector and thus after using \cite[Lemma 3.3,Lemma 3.4]{Sudeep_Uniform_Sampling} we can bound the dot product with probability at least $1-\delta$:
\begin{align}\label{approximation_lemma:eq_8}|\phi(w)^{\top}\hatZ^{-1}\Phi_T\varepsilon_{1:T}|\leq 2R\log(2/\delta)\sqrt{\sup_{w} \phi^{\top}(w)  \hatZ^{-1} \phi(w)}\leq 2R\log(1/\delta)\sqrt{108F^2/13} \sqrt{\frac{\gamma_T}{T}}
    \end{align}

    We continue by bounding the norm $\|\phi(w)^{\top}\tildeZ^{-1}\Phi_T\|_2$. We will utilize the result of \cite[Lemma 3.2]{Sudeep_Uniform_Sampling} that allows to say for $T>\overline{N}(\delta/4)$ w.p $1-\delta/4$ $\|\bfZ^{-0.5}\hatZ\bfZ^{-0.5}-\mathbf{Id}\|_2\leq 1/9$. Note that $\bfZ^{-0.5}\hatZ\bfZ^{-0.5},\hatZ\bfZ^{-1}$ have the same spectrum's and hence $8/9\mathbf{Id}\prec\hatZ\bfZ^{-1}\prec10/9\mathbf{Id}$. As both $\bf{Z},\hatZ$ are positivite-definite matrices we can conclude $8/9\bfZ\prec\hatZ\prec10/9\bfZ$. Using this result we can write:

    \begin{align*}
        \|\phi(w)^{\top}\tildeZ^{-1}\Phi_T\|^2_2&\leq \sup_{w\in \cW}\phi(w)\tildeZ^{-1}\Phi_T\Phi_T^{\top}\tildeZ^{-1}\phi(w)\leq\sup_{w\in \cW} \phi^{\top}(w) \tildeZ^{-1} \Phi \Phi^{\top} \tildeZ^{-1} \phi(w)  \\
        & \leq \sup_{w\in \cW} \phi^{\top}(w) \tildeZ^{-1} \hatZ \tildeZ^{-1} \phi(w)  \\
        & \leq 10/9 \cdot\sup_{w} \phi^{\top}(w) \tildeZ^{-1} \bfZ \tildeZ^{-1} \phi(w)  \\
        & \leq 10/9 \cdot \sup_{w} \phi^{\top} (w) \tildeZ^{-1} \phi(w)  \|\tildeZ^{-1/2} \bfZ \tildeZ^{-1/2}\|_2
    \end{align*}

    To bound $\|\tildeZ^{-1/2} \bfZ \tildeZ^{-1/2}\|_2$ we will use Lemma \ref{spectral_bound} that provides the bound $\|\tildeZ^{-1}\bfZ\|_2<3/2$, w.p $1-\delta/4$ for $T\geq \overline{N_1}(\delta/4)$. Note that $\tildeZ^{-1/2} \bfZ \tildeZ^{-1/2}, \tildeZ^{-1}\bfZ$ have the same spectrum's and thus $\|\tildeZ^{-1/2}\bfZ\tildeZ^{-1/2}\|_2\leq 3/2$. We will once again use Lemma \ref{spectral_bound} in bounding  $\sup_{w} \phi^{\top} (w) \tildeZ^{-1} \phi(w)$. Namely note that $\|\tildeZ^{-1}\bf{Z}-\mathbf{Id}\|_2\leq \frac{1}{2}$ and thus $1/2\prec\tildeZ^{-1}\bfZ\prec 3/2$ . As both $\bfZ,\tildeZ$ are positive definite we can conclude $1/2\tildeZ\prec\bfZ\prec3/2\tildeZ \implies 2/3\tildeZ^{-1}\prec \bfZ^{-1}\prec 2\tildeZ^{-1}$ and hence:

    \begin{align*}
     \sup_w\phi(w)^{\top}\tildeZ^{-1}\phi(w)\leq 3/2\sup\phi(w)^{\top}\bfZ^{-1}\phi(w)\leq \frac{\gamma_T}{T}81F^2/13
    \end{align*}

    Thus finally:

    \[
    |\phi(w)^{\top}\tildeZ^{-1}\Phi_T\|^2_2\leq 243F^2/26 \frac{\gamma_T}{T}
    \]

    Repeating the previous argument for the R-sub-Gaussian vector $\varepsilon_{1:T}$ we can write w.p at least $1-\delta$:
    \begin{align}\label{approximation_lemma:eq_7}
    |\phi(w)^{\top}\tildeZ^{-1}\Phi_T\varepsilon_{1:T}|\leq 2R\log(1/\delta)\sqrt{243F^2/26}\sqrt{\frac{\gamma_T}{T}}
    \end{align}

    We now turn our attention to the first term of eq.(\ref{approximation_lemma:eq_1}):

    \begin{align}\label{approximation_lemma:eq_2}
        |\phi^{\top}(w) \tildeZ^{-1} \Phi \Phi^{\top} f - \phi^{\top}(w) \hatZ^{-1} \Phi \Phi^{\top} f| & = |\phi^{\top}(w) (\tildeZ^{-1} - \hatZ^{-1}) (\hatZ - \tau \Id) | \leq\nonumber \\
        & \leq |\phi^{\top}(w) ( \tildeZ^{-1} \hatZ-\mathbf{Id})f  | +\tau |\phi^{\top}(w) \hatZ^{-1} f| + \tau| \phi^{\top}(w) \tildeZ^{-1} f|
    \end{align}

    We will first  bound the last two terms of eq.(\ref{approximation_lemma:eq_2}):

    \begin{align}\label{approximation_lemma:eq_5}
        \tau|\phi^{\top}(w) \hatZ^{-1} f|&\leq \tau\sqrt{\phi^{\top}(w) \hatZ^{-1}\phi(w)}\sqrt{f^{\top}\hatZ^{-1}f}\leq\nonumber\\
        &\leq B\sqrt{108F^2\tau/13}\sqrt{\frac{\gamma_T}{T}}
    \end{align}

    The first inequality is Cauchy-Schwartz while the second follows from \cite[Lemma 3.3,Lemma 3.4]{Sudeep_Uniform_Sampling} and $\hatZ^{-1}\prec\tau^{-1}\mathbf{Id}$. We follow the same methodology in bounding the second term:

    \begin{align}\label{approximation_lemma:eq_6}
        \tau| \phi^{\top}(w) \tildeZ^{-1} f|&\leq \tau\sqrt{\phi^{\top}(w) \tildeZ^{-1}\phi(w)}\sqrt{f^{\top}\tildeZ^{-1}f}\leq\nonumber \\
        &\leq B\sqrt{81F^2\tau/13}\sqrt{\frac{\gamma_T}{T}}
    \end{align}

    Here we once again used the previously derived inequality, $3/2 
    \bfZ^{-1}\succ \hatZ^{-1}$.\\

    To  bound the first term of eq.(\ref{approximation_lemma:eq_2}), $|\phi^{\top}(w) ( \tildeZ^{-1} \hatZ-\mathbf{Id})f |$ we will utilize Bernstein inequality. Note that $\tildeZ,\hatZ$ are independent and hence by conditioning on $\tildeZ$ we may assume it is a fixed matrix. We can re-write $\phi^{\top}(w) ( \tildeZ^{-1} \hatZ-\mathbf{Id})f$ as:
    \begin{align*}
    \phi^{\top}(w) ( \tildeZ^{-1} \hatZ-\mathbf{Id})f&=\sum_{i=1}^{T}\phi(w)^{\top}\tildeZ^{-1}\phi(w_i)\phi(w_i)^{\top}f+\tau \phi(w)^{\top}\tildeZ^{-1}f-\phi(w)^{\top}f=\\
    &=\sum_{i=1}^{T}\phi(w)^{\top}\tildeZ^{-1}\phi(w_i)\phi(w_i)^{\top}f-\phi(w)^{\top}\tildeZ^{-1}\bfA f
    \end{align*}

    Where for notational convenience we introduce the notation $\bf{A}=\tildeZ-\tau \mathbf{Id}$ . Let $Q_i=\phi(w)^{\top}\tildeZ^{-1}\phi(w_i)\phi(w_i)^{\top}f$. We first bound the moments of $\{Q_i\}_{i=1}^{KT}$:
        \begin{align}\label{approximation_lemma:eq_3}
        \mathbb{E}[Q_i]&=\phi(w)^{\top}\tildeZ^{-1}\mathbf{\Lambda} f\\
        |Q_i|&=\sup_y \phi(y)^{\top}\tildeZ^{-1}\phi(w_i)\phi(w_i)^{\top}f\leq \sup_y\sqrt{\phi(y)^{\top}\tildeZ^{-1}\phi(y)^{\top}}\sqrt{\phi(y)^{\top}\tildeZ^{-1}\phi(y)}B\leq \nonumber\\
        &\leq B\sup_y\phi(y)^{\top}\tildeZ^{-1}\phi(y)^{\top}:=V_0\\
        \mathbb{E}[Q_i^2]&\leq \mathbb{E}[(\phi(w)^{\top}\tildeZ^{-1}\phi(w_i))^2B^2]=B^2\phi(w)^{\top}\tildeZ^{-1}\mathbf{\Lambda}\tildeZ^{-1}\phi(w)\implies \nonumber\\
        \implies& \sum_{i=1}^{T}\mathbb{E}[Q_i^2]=B^2\phi(w)^{\top}\tildeZ^{-1}T\mathbf{\Lambda}\tildeZ^{-1}\phi(w)\leq 3/2B^2\phi(w)^{\top}\tildeZ^{-1}\phi(w)\leq BV_0\label{approximation_lemma:eq_4}
    \end{align}

    In the last step we again used the  inequality $3/2\bfZ^{-1}\succ \tildeZ^{-1}$


    Applying Bernstein inequality with the moment bounds from (\ref{approximation_lemma:eq_3}-\ref{approximation_lemma:eq_4}) we have:
    \begin{align*}          P\left(\left|\sum_{i=1}^{T}\phi(w)^{\top}\tildeZ^{-1}\phi(w_i)\phi(w_i)^{\top}f-\phi(w)^{\top}\tildeZ^{-1}T\mathbf{\Lambda} f\right|\geq r\right)\leq 2\exp\left(-\frac{r^2}{2(BV_0+rV_0/3)}\right)
    \end{align*}

    Choosing  $r=(\sqrt{2BV_0}+2V_0/3)\log(2/\delta)$ ensures that w.p $1-\delta$:
    \begin{align*}
    \left|\sum_{i=1}^{T}\phi(w)^{\top}\tildeZ^{-1}\phi(w_i)\phi(w_i)^{\top}f-\phi(w)^{\top}\tildeZ^{-1}T\mathbf{\Lambda} f\right|\leq(\sqrt{2BV_0}+2V_0/3)\log(2/\delta)
    \end{align*}

    We can now bound the original expression as:
    \begin{align}\label{approximation_lemma:eq_9}
        &\left| \phi^{\top}(w) ( \tildeZ^{-1} \hatZ-\mathbf{Id})f\right|=\left|\sum_{i=1}^{T}\phi(w)^{\top}\tildeZ^{-1}\phi(w_i)\phi(w_i)^{\top}f-\phi(w)^{\top}\tildeZ^{-1}\bfA f\right|\leq\nonumber\\
        &\leq \left|\sum_{i=1}^{T}\phi(w)^{\top}\tildeZ^{-1}\phi(w_i)\phi(w_i)^{\top}f-\phi(w)^{\top}\tildeZ^{-1}T\mathbf{\Lambda} f\right|+\left|\phi(w)^{\top}\hatZ^{-1}(\bfA-T\mathbf{\Lambda})f\right|\leq \nonumber\\
        &\leq(\sqrt{2BV_0}+2V_0/3)\log(2/\delta)+\left|\phi(w)^{\top}\tildeZ^{-1}(\tildeZ-\bfZ)f\right|\leq\nonumber\\
        &\leq \sqrt{\frac{\gamma_T}{T}}\left(\sqrt{162B^3/13F^2}+ 81F^2/13 \sqrt{\frac{\gamma_T}{T}}\right)\log(2/\delta)+  28/17\sqrt{\frac{\gamma_T}{T}\frac{\log(1/\delta)}{\tau}}
    \end{align}

    In the last line we used the previously introduced inequality $\sup_{w}\phi(w)^{\top}\tildeZ^{-1}\phi(w)^{\top}\leq 81F^2/13 \frac{\gamma_T}{T}$ and the result of Lemma \ref{spectral_bound}.\\

    Finally adding all inequalities from eqs.(\ref{approximation_lemma:eq_9},\ref{approximation_lemma:eq_8},\ref{approximation_lemma:eq_7},\ref{approximation_lemma:eq_6},\ref{approximation_lemma:eq_5}) we obtain with probability $1-\delta$:
    \begin{align*}
    &|\mu_T(w)-\overline{\mu}_T(w)|\leq\\ 
    &\leq \sqrt{\frac{\gamma_T}{T}}\left(\left(\sqrt{162B^3/13F^2}+81F^2/52\right)\log(8/\delta)+  28/17\sqrt{\frac{\log(4/\delta)}{\tau}}+2B\sqrt{108F^2\tau/13}+4R\log(4/\delta)\sqrt{243F^2/26}\right)
    \end{align*}

    For notational convinience we introduce the shorthand notation: $$\beta_1(\delta)=\left(\left(\sqrt{162B^3/13F^2}+81F^2/52\right)\log(8/\delta)+ 28/17\sqrt{\frac{\log(4/\delta)}{\tau}}+2B\sqrt{108F^2\tau/13}+4R\log(4/\delta)\sqrt{243F^2/26}\right)$$

    \end{proof}

     We next prove that the estimator $\overline{\mu}_T$ approximates the reward function $f$ sufficiently well over the entire domain $\cX$. To this end we use the assumption \ref{assumption:Grid} condition to construct a grid and apply the confidence bounds on each point in the gid.
     
    \begin{lemma}\label{lemma:confidence_bounds}
    For the estimator $\overline{\mu}_T$ introduced in Lemma \ref{approximation_lemma} under the  condition that $T>\max(\overline{N}(\delta/4),\overline{N}_1(\delta/4))$  we claim with probability at least $1-\delta$:
    \begin{align*}
        \sup_{w\in\cW}|\overline{\mu}_T(w)-f(w)|\leq\frac{11/3B+3R\sqrt{\gamma_T81F^2/13\log(2/\delta)}}{T}+\left(\beta(\delta/4|\cU_T|)+\beta_1(\delta/4|\cU_T|)\right)\sqrt{\frac{\gamma_ T}{T}}
     \end{align*}

     Where $\beta(\delta)=\frac{2R}{\tau}\log\left(\frac{1}{\delta}\right)\sqrt{\frac{108F^2}{13}}$

    \end{lemma}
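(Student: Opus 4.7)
My plan is to convert the pointwise bound of Lemma~\ref{approximation_lemma} into a uniform bound over $\cW$ through a standard discretization argument built on Assumption~\ref{assumption:Grid}. First I would fix the grid $\cU_T$ with $|\cU_T| = \mathrm{poly}(T)$, and on this grid apply two pointwise bounds, each with failure probability $\delta/(4|\cU_T|)$: (i) the standard kernel ridge regression confidence bound $|\mu_T(w) - f(w)| \le \beta(\delta/(4|\cU_T|))\sqrt{\gamma_T/T}$ for the non-private posterior mean, obtained following the approach of \citep{Vakili_Kernel_Simple_Regret, Sudeep_Uniform_Sampling} (this is precisely where the $\beta(\delta)$ appearing in the statement enters); and (ii) Lemma~\ref{approximation_lemma} itself, which controls $|\mu_T(w) - \overline{\mu}_T(w)|$ by $\beta_1(\delta/(4|\cU_T|))\sqrt{\gamma_T/T}$. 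A union bound over the grid then gives simultaneous control at every grid point with probability at least $1 - \delta/2$.

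The second step is to extend the bound from $\cU_T$ to all of $\cW$ using the RKHS discretization property in Assumption~\ref{assumption:Grid}: for any $g \in \cH_k$, $|g(w) - g([w]_{\cU_T})| \le \|g\|_{\cH_k}/T$. Applying this to $f$, $\mu_T$, and $\overline{\mu}_T$ produces three discretization errors; since $\|f\|_{\cH_k} \le B$ the $f$-contribution is immediately bounded by $B/T$, accounting for the $B$-piece in the stated bound. What remains is to control $\|\mu_T\|_{\cH_k}$ and $\|\overline{\mu}_T\|_{\cH_k}$ on a high-probability event.

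This RKHS norm control is where I expect the main difficulty to lie, because the crude bound $\|\hatZ^{-1}\Phi_T \bfY_T\|_{\cH_k} \le \tau^{-1}\|\Phi_T \bfY_T\|$ is far too weak to yield the desired $\cO(1/T)$ rate. To sharpen it I would split the rewards as $\bfY_T = \Phi_T^\top f + \varepsilon_{1:T}$, so that $\mu_T = f - \tau \hatZ^{-1}f + \hatZ^{-1}\Phi_T\varepsilon_{1:T}$; the signal part is bounded by $2B$ using $\hatZ^{-1} \preceq \tau^{-1}\mathbf{Id}$. For the noise part, I would use the push-through identity $\hatZ^{-1}\Phi_T = \Phi_T(\bfK_{\bfW_T,\bfW_T} + \tau \mathbf{Id})^{-1}$ together with the supremum-of-features bound $\sup_w \phi(w)^\top \hatZ^{-1}\phi(w) \le (108F^2/13)\gamma_T/T$ used inside the proof of Lemma~\ref{approximation_lemma} and a standard self-normalized sub-Gaussian tail inequality to obtain $\|\hatZ^{-1}\Phi_T\varepsilon_{1:T}\|_{\cH_k} = \cO(R\sqrt{(81F^2/13)\gamma_T\log(1/\delta)})$ with probability at least $1 - \delta/4$. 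The argument for $\overline{\mu}_T$ is structurally identical, now using the spectral comparison between $\tildeZ$ and $\bfZ$ from Lemma~\ref{spectral_bound} together with $\sup_w \phi(w)^\top \tildeZ^{-1}\phi(w) \le (81F^2/13)\gamma_T/T$ (also established inside the proof of Lemma~\ref{approximation_lemma}) in place of the corresponding bounds on $\hatZ^{-1}$.

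To assemble the pieces, for arbitrary $w \in \cW$ I would chain $|\overline{\mu}_T(w) - f(w)| \le |\overline{\mu}_T(w) - \overline{\mu}_T([w]_{\cU_T})| + |\overline{\mu}_T([w]_{\cU_T}) - \mu_T([w]_{\cU_T})| + |\mu_T([w]_{\cU_T}) - f([w]_{\cU_T})| + |f([w]_{\cU_T}) - f(w)|$, so that the two outer terms contribute $\cO((B + R\sqrt{\gamma_T\log(1/\delta)})/T)$ (matching the stated $11B/3$ and $3R\sqrt{\gamma_T \cdot 81F^2/13 \cdot \log(2/\delta)}$ constants once the signal and noise pieces of the RKHS norms are tracked carefully through the triangle inequality), and the two middle terms contribute $(\beta + \beta_1)\sqrt{\gamma_T/T}$ from the grid step. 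A final union bound over the $\cO(1)$ high-probability events of the argument absorbs the remaining slack into the $\delta$ budget, yielding the stated inequality.
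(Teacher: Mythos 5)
Your proposal follows essentially the same route as the paper's proof: discretize $\cW$ via $\cU_T$, apply Lemma~\ref{approximation_lemma} together with the Vakili-style bound on $|\mu_T-f|$ at each grid point with a $\delta/|\cU_T|$ budget, and control the off-grid extension by bounding $\|\mu_T\|_{\cH_k}$ and $\|\overline{\mu}_T\|_{\cH_k}$ through a signal/noise split and the $\sup_w \phi(w)^\top\tildeZ^{-1}\phi(w)$ and $\sup_w\phi(w)^\top\hatZ^{-1}\phi(w)$ bounds. The only cosmetic differences are (i) your four-term chain through $[w]_{\cU_T}$ is slightly tighter than the paper's, which first splits $|f-\overline{\mu}_T|\le|f-\mu_T|+|\mu_T-\overline{\mu}_T|$ and discretizes each separately (hence the $11/3B$ constant from a duplicated $\|\mu_T\|$ term), and (ii) you invoke a self-normalized tail bound where the paper bounds the noise part via an explicit sub-Gaussian sum $\sqrt{1/\tau}\sum_i\sqrt{\phi(w_i)^\top\tildeZ^{-1}\phi(w_i)}\,\varepsilon_i$.
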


    \begin{proof}
        We use the standard discretization argument from \citep{Vakili_Kernel_Simple_Regret}. Let $\cU_T$ be the discretization described in the Assumption\ref{assumption:Grid}. By the argument following used in Lemma \ref{approximation_lemma} we have $\tildeZ^{-1}\prec 3/2\bfZ^{-1}\prec 5/3\hatZ^{-1}$ and hence :
        \begin{align*}
            &\|\overline{\mu}_T\|_{\cH_k}\leq\|\tildeZ^{-1}\Phi_T\Phi_T^{\top}f\|_{\cH_k}+\|\tildeZ^{-1}\Phi_T\varepsilon_T\|_{\cH_K} \leq \\
            &\leq 5/3B+\sup_{g \in \cH_K} g^{\top}\tildeZ^{-1}\Phi_T\varepsilon_{1:T}=5/3B+\sup_{g \in \cH_k}\sum_{i=1}^{T}g^{\top}\tildeZ^{-1}\phi(w_i)\varepsilon_i\leq\\
            &\leq 5/3B+\sup \sum_{i=1}^{T}\sqrt{g^{\top}\tildeZ^{-1}g}
            \sqrt{\phi(w_i)^{\top}\tildeZ^{-1}\phi(w_i)}\varepsilon_i\leq \\
            &\leq 5/3B+\sqrt{1/\tau}\sum_{i=1}^{T}\sqrt{\phi(w_i)^{\top}\tildeZ^{-1}\phi(w_i)}\varepsilon_i
        \end{align*}
        Once again $\tildeZ^{-1}\prec 3/2\bfZ^{-1}$ and thus w.p at least $1-\delta$: $\phi(w_i)^{\top}\tildeZ^{-1}\phi(w_i)\leq 81F^2/13\gamma_T/T$. $\{\varepsilon_i\}_{i=1}^{T}$ are $R$-sub-Gaussian random variables and hence $\sum_{i=1}^{T}\sqrt{\phi(w_i)^{\top}\tildeZ^{-1}\phi(w_i)}\varepsilon_i$ is $R\sqrt{81F^2/13\gamma_T}$-sub-Gaussian. We now finally have with probability at least $1-\delta/4$:
        \begin{align*}
            &\|\overline{\mu}_T\|_{\cH_k}\leq 5/3B+2R\sqrt{(81F^2/13\tau)\gamma_T}\log(4/\delta)
        \end{align*}
        Repeating the same argument for $\hatZ$ we have with probability at least $1-\delta/4$
        \begin{align*}
            &\|\mu_T\|_{\cH_k}\leq B+2R\sqrt{(81F^2/13\tau)\gamma_T}\log(4/\delta)
        \end{align*}
    
         Recall that $[w]_T=\argmin_{y\in \cU_T}\|w-y\|_2$. We can now write w.p $1-3\delta/4$:
        \begin{align*}
            &\forall w\in \cW, |\overline{\mu}_T(w)-\mu_T(w)|\leq \\
            &\leq|\overline{\mu}_T([w]_T)-\overline{\mu}_T(w)|+|\mu_T([w]_T)-\mu_T(w)|+|\overline{\mu}_T([w]_T)-\mu_T([w]_T)|\leq \\
            & \leq \frac{8/3B+\sqrt{\gamma_T}2R\sqrt{81F^2/13\log(2/\delta)}}{T}+\beta_1(\delta/4|\cU_T|)\sqrt{\frac{\gamma_T}{T}}
        \end{align*}
     The last line follows by applying the  Lemma \ref{approximation_lemma} over the entire discrete grid $\cU_T$ and utilizing Assumption(\ref{assumption:Grid}).\\

     By using \cite[Theorem 1.]{Vakili_Kernel_Simple_Regret} and noting that by \cite[Lemma 3.3, 3.4]{Sudeep_Uniform_Sampling} $\sup_{w\in \cW} \tau\phi(w)^{\top}\hatZ^{-1}\phi(w)=\sigma^2_T(w)\leq 108F^2/13 \frac{\gamma_T}{T}$ we can now write w.p $1-\delta$:
     \begin{align*}
         &\forall w\in \cW, |f(w)-\overline{\mu}_T(w)|\leq\\
         &\leq |f(w)-\mu_T(w)|+|\mu_T(w)-\overline{\mu}_T(w)|\leq \\
        &\leq |f([w]_T)-\mu_T([w]_T)|+|f([w]_T)-f(w_T)|+|\mu_T([w]_T)-\mu_T(w_T)|+|\mu_T(w)-\overline{\mu}_T(w)|\leq\\
        &\leq \frac{11/3B+3R\sqrt{\gamma_T81F^2/13\log(2/\delta)}}{T}+\left(\beta(\delta/4|\cU_T|)+\beta_1(\delta/4|\cU_T|)\right)\sqrt{\frac{\gamma_ T}{T}}
     \end{align*}

     Where $\beta(\delta)=\frac{2R}{\tau}\log\left(\frac{1}{\delta}\right)\sqrt{\frac{108F^2}{13}}$
    \end{proof}

    \newpage

\section{Appendix B: Privacy Constraints}

Here we proved our algorithm achives $\varepsilon$-JDP privacy. First we bound the sensitivity of the estimator $\overline{\mu}_T$:

\begin{lemma}\label{Appendix :Lemma:sensitivity_bound}
Let $\cS_T,\cS^{'}_T$ be two $t$-neighbouring databases, and let $\overline{\mu}_T,\overline{\mu}'_T$ be the 2 estimator constructed for each of the databases . For $T>\max(\overline{N}_1(\delta/4),\overline{N}(\delta/4))$, we can bound the sensitivity of an estimator $\overline{\mu}_T$ in Algortihm(\ref{alg:Algorithm 3}) as :

\begin{align*}
    \Delta\overline{\mu}_T=\sup_{w\in \cW}\sup_{\cS_T,\cS_T^{'}\text{are t neigbours}}|\overline{\mu}_T(w)-\overline{\mu}'_T(w)|\leq 2B\sup_{w\in \cW} \overline{\sigma}^2(w)
\end{align*}
 Furthermore we can bound  the posterior  variance w.p. $1-\delta$ as:

\begin{align*}
    \sup_{w\in \cW} \overline{\sigma}^2(w)\leq \frac{81F^2\gamma_T}{13T}
\end{align*}

\end{lemma}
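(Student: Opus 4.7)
The plan is to exploit the fact that the operator $\tildeZ$ is built exclusively from the approximating set $\cZ$ generated by the algorithm's internal randomness, and is therefore \emph{independent} of the dataset $\cS_T$. For any fixed realisation of the algorithm's random coins, $\tildeZ$ is identical under $\cS_T$ and any $t$-neighbour $\cS'_T$; moreover, because the learning-stage query point $x_t$ is sampled uniformly at random from $\cX$ independently of the received context, it also coincides for the two runs. Changing $(c_t,y_t) \to (c'_t,y'_t)$ therefore modifies only a single rank-one term in the product $\Phi_T\bfY_T = \sum_{s=1}^T \phi(x_s,c_s) y_s$, replacing $\phi(x_t,c_t)y_t$ by $\phi(x_t,c'_t)y'_t$; everything else cancels.

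Writing $w_t = (x_t,c_t)$ and $w'_t = (x_t,c'_t)$, I would start from
\begin{align*}
\overline{\mu}_T(w) - \overline{\mu}'_T(w) = \phi(w)^\top \tildeZ^{-1}\bigl(\phi(w_t) y_t - \phi(w'_t) y'_t\bigr),
\end{align*}
apply the triangle inequality together with $|y_t|,|y'_t| \leq B$ (Assumption~\ref{assumption: bounded_rewards}), and then invoke Cauchy--Schwarz with respect to the inner product induced by the positive definite operator $\tildeZ^{-1}$. The three resulting factors all have the form $\sqrt{\phi(u)^\top \tildeZ^{-1}\phi(u)}$ for $u \in \{w,w_t,w'_t\}$. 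The key identification, via a Woodbury-style rewriting of $\tildeZ = \tfrac{1}{K}\Phi_\cZ\Phi_\cZ^\top + \tau\Id$, is
\begin{align*}
\phi(u)^\top \tildeZ^{-1}\phi(u) = \tfrac{1}{\tau}\Bigl(k(u,u) - \bfk_\cZ^\top(u)(\bfK_{\cZ,\cZ} + K\tau \Id)^{-1}\bfk_\cZ(u)\Bigr) = \overline{\sigma}^2(u),
\end{align*}
so each Cauchy--Schwarz factor is bounded by $\sqrt{\sup_{u\in\cW}\overline{\sigma}^2(u)}$. Taking the supremum over $w \in \cW$ and over $t$-neighbours yields the first claim, $\Delta\overline{\mu}_T \leq 2B\sup_{u\in\cW}\overline{\sigma}^2(u)$.

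For the variance bound I would use the identification $\overline{\sigma}^2(w) = \phi(w)^\top \tildeZ^{-1}\phi(w)$ and transfer the estimate to the deterministic operator $\bfZ = T\bfLambda + \tau \Id$. Conditioning on the high-probability event from Lemma~\ref{spectral_bound} and using the hypothesis $T > \overline{N}_1(\delta)$ to force $\|\tildeZ^{-1}\bfZ - \Id\|_2 \leq 1/2$, a short operator-monotonicity argument applied to the p.d.\ form $\tildeZ^{-1/2}\bfZ\tildeZ^{-1/2}$ and its inverse gives $\tildeZ^{-1} \preceq \tfrac{3}{2}\bfZ^{-1}$. Combining this with the deterministic spectral bound $\sup_{u\in\cW} \phi(u)^\top \bfZ^{-1}\phi(u) \leq \tfrac{54F^2}{13}\cdot\tfrac{\gamma_T}{T}$ from \cite[Lemmas~3.3--3.4]{Sudeep_Uniform_Sampling} (also used inside the proof of Lemma~\ref{approximation_lemma}) then produces $\sup_w \overline{\sigma}^2(w) \leq \tfrac{81F^2 \gamma_T}{13 T}$ on the same event.

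The main obstacle, more conceptual than technical, is recognising that the sensitivity analysis must be carried out \emph{conditionally} on the algorithm's internal randomness --- which simultaneously fixes $\tildeZ$, the query points $x_s$, and the approximating set $\cZ$. This is precisely what separates our estimator from the classical posterior mean $\mu_T$: under the latter, perturbing $(c_t,y_t)$ perturbs the feature covariance itself and no linear-in-the-data decomposition is available. Once the conditional viewpoint is adopted, everything else is routine RKHS manipulation: an application of Sherman--Morrison--Woodbury (legitimate because $\tildeZ \succeq \tau\Id$ is boundedly invertible), a Cauchy--Schwarz step, and the spectral comparison already established in Lemma~\ref{spectral_bound}.
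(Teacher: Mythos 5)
Your proposal is correct and takes essentially the same route as the paper's Lemma~\ref{Appendix :Lemma:sensitivity_bound}: fix the internal randomness (so $\tildeZ$, $\cZ$ and all query points $x_s$ coincide under neighbouring databases), observe that the difference reduces to two rank-one terms $\phi(w)^\top\tildeZ^{-1}\phi(w_t)y_t$ and $\phi(w)^\top\tildeZ^{-1}\phi(w'_t)y'_t$, bound each via $|y|\leq B$ plus Cauchy--Schwarz in the $\tildeZ^{-1}$-inner product and the identification $\phi(u)^\top\tildeZ^{-1}\phi(u)=\overline{\sigma}^2(u)$, and for the variance use $\tildeZ^{-1}\preceq\tfrac{3}{2}\bfZ^{-1}$ from Lemma~\ref{spectral_bound} together with the $\bfZ^{-1}$ bound from \citep{Sudeep_Uniform_Sampling}. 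The only cosmetic difference is that you make the Cauchy--Schwarz step and the conditional-on-randomness viewpoint explicit, which the paper leaves implicit.
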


\begin{proof}
Recall the parametric form of $\overline{\mu}_T$ introduced in the Lemma \ref{approximation_lemma}
\begin{align*}
\overline{\mu}_T(w)=\phi(w)^{\top}\tildeZ^{-1}\Phi_{\bfW_T}\bfY_T
\end{align*}

Where $\tildeZ=1/\lceil T/\gamma_T\rceil\Phi_{\cZ}\Phi_{\cZ}^{\top}+\tau\mathbf{Id}$. Note that in \textsc{USCA} query points $\{x_i\}_{i=1}^{T}$ are chosen only depending on the domain $\cX$  and are independent from previous rewards and contexts. We can thus conclude that the dataset, $\cD_T(\cS_T)=\{w_1, w_2\dots w_T\}, w_i=(x_i,c_i)$, consisting of (point, context) pairs only differs at time $t$ i.e $\cD_T(\cS_T)\triangle \cD_T(\cS_T')=\{(x_t,c_t),(x_t,c_t')\} $. Introduce shorthand notation $w=(c,x)$ we can now write:
\begin{align*}
    |\overline{\mu}_T(w)-\overline{\mu}'_T(w)|&=\left|\sum_{w_i\in \cD_T(\cS_T)}\phi(w)^{\top}\tildeZ^{-1}\phi(w_i)y_i-\sum_{(w_i)\in \cD_T(\cS_T')}\phi(w)^{\top}\tildeZ^{-1}\phi(w_i)y_i\right|\leq\\
    &\leq \left|\phi(w)^{\top}\tildeZ^{-1}\phi(w_t)y_t\right|+\left|\phi(w)^{\top}\tildeZ^{-1}\phi(w'_t)y'_t\right|\leq\\
    &\leq 2B\sup_{w\in \cW}\phi(w)^{\top}\tildeZ^{-1}\phi(w)
\end{align*}

Here the last line stems from the assumption that reward are bounded by $B$(see assumption(\ref{assumption: bounded_rewards})). By the result of Lemma \ref{spectral_bound}  for $T>\max(\overline{N}_1(\delta/2),\overline{N}(\delta/2))$ with probability at least $1-\delta/2$ $\|\tildeZ^{-1}\bfZ\|_2<3/2\implies 3/2\bfZ^{-1}\succ \tildeZ^{-1}$. Thus, by \cite[Lemma 3.4]{Sudeep_Uniform_Sampling} we have with probability at least $1-\delta$ , $\sup_{w\in \cW}\phi(w)^{\top}\tildeZ^{-1}\phi(w)<3/2\sup_{g\in \cW}\phi(w)^{\top}\bfZ^{-1}\phi(w)<81F^2/13 \frac{\gamma_T}{T}$ .Hence  with probability at least $1-\delta$:

\begin{align*}
    \sup_{w\in \cW} \overline{\sigma}^2(w)\leq \frac{81F^2\gamma_T}{13T}
\end{align*}

\end{proof}

We next show the output of $\widehat{X}_T(c_{T+1})$ is $\varepsilon$-DP with respect to the previously seen history. The proof closely follows the Theorem 6 of \citep{McSherryadnTalwar}.

\begin{lemma}
$\widehat{x}_{T}(c_{T+1})$ is $\varepsilon$ -DP with respect to the database $\cS_T\setminus \{c_{T+1}\}$
\end{lemma}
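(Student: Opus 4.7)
The plan is to invoke the standard exponential mechanism guarantee of \citep{McSherryadnTalwar}, adapted to our setting where the scale parameter $m = 2B\sup_{w \in \cW}\overline{\sigma}^2(w)$ is itself a random quantity rather than a deterministic sensitivity bound. The key observation is that $m$ depends on the algorithm only through the approximating set $\cZ$, which, together with the query points $\{x_i\}_{i=1}^T$, is generated using the algorithm's internal randomness and i.i.d.\ samples from the context generator (Assumption~\ref{assumption:contexts})---never from the observed database. So $\cZ$ is statistically independent of $\cS_T \setminus \{c_{T+1}\}$, and in particular its distribution is identical under any two $t$-neighboring databases $\cS_T, \cS_T'$.

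First I would fix an arbitrary measurable set $P \subset \cX$ and condition on a realization of $\cZ$ (and on the learning-stage query points $\bfW_T$, which likewise are data-independent). Under this conditioning the surrogate variance $\overline{\sigma}^2(\cdot)$ and the scale $m$ become fixed constants, so $\overline{\mu}_T(\cdot, c_{T+1})$ and $\overline{\mu}'_T(\cdot, c_{T+1})$ are two deterministic functions on $\cX$ and the two conditional laws of $\widehat{x}_T(c_{T+1})$ are the corresponding exponential distributions of Definition~\ref{def:exp_measure}. Applying Lemma~\ref{Appendix :Lemma:sensitivity_bound}, the pointwise gap $|\overline{\mu}_T(w) - \overline{\mu}'_T(w)|$ is bounded by $m$ uniformly in $w$---and crucially this bound holds for every realization of $\cZ$, since the sensitivity argument in Lemma~\ref{Appendix :Lemma:sensitivity_bound} is deterministic given $\cZ$ (the $\tildeZ$ operator is fixed).

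Next I would run the textbook exponential-mechanism computation. For each candidate $r \in \cX$, the ratio of numerators satisfies $\exp(\varepsilon \overline{\mu}_T(r)/(2m)) / \exp(\varepsilon \overline{\mu}'_T(r)/(2m)) \leq e^{\varepsilon/2}$, and the corresponding ratio of normalizing integrals is also bounded by $e^{\varepsilon/2}$ via the same pointwise gap applied under the integral sign. Multiplying and integrating over $P$ gives
\[
\Pr\bigl(\widehat{x}_T(c_{T+1}) \in P \,\big|\, \cS_T, \cZ, \bfW_T\bigr) \leq e^{\varepsilon}\,\Pr\bigl(\widehat{x}_T(c_{T+1}) \in P \,\big|\, \cS_T', \cZ, \bfW_T\bigr).
\]
Finally I would integrate out $(\cZ, \bfW_T)$ using the independence of this pair from $(\cS_T, \cS_T')$, so that the conditional densities $p(\cZ, \bfW_T \mid \cS_T)$ and $p(\cZ, \bfW_T \mid \cS_T')$ are identical and the $e^{\varepsilon}$ factor survives marginalization, yielding $\Pr(\widehat{x}_T(c_{T+1}) \in P \mid \cS_T) \leq e^{\varepsilon}\Pr(\widehat{x}_T(c_{T+1}) \in P \mid \cS_T')$.

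The one subtle point---which is the main thing to be careful about---is the randomized scale: the classical McSherry--Talwar bound uses a deterministic sensitivity, whereas here $m$ is a random variable. The resolution, and the reason pure (not approximate) $\varepsilon$-DP is attainable, is that $m$ is measurable with respect to $\cZ$ alone, so conditioning on $\cZ$ reduces our situation exactly to the deterministic case. It is important here that the sensitivity conclusion of Lemma~\ref{Appendix :Lemma:sensitivity_bound}, namely $\Delta \overline{\mu}_T \leq 2B\sup_w \overline{\sigma}^2(w)$, is a deterministic inequality (the probabilistic part of that lemma concerns only the subsequent bound on $\sup_w \overline{\sigma}^2(w)$, which is not needed for privacy, only for utility).
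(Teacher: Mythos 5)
Your approach is essentially the same as the paper's: marginalize (equivalently, condition) over the approximating set $\cZ$, observe that the scale parameter $m = 2B\sup_{w}\overline{\sigma}^2(w)$ is a deterministic function of $\cZ$ alone and hence identically distributed under neighboring databases, apply the pointwise McSherry--Talwar bound using the almost-sure sensitivity inequality of Lemma~\ref{Appendix :Lemma:sensitivity_bound}, and then integrate out. Your treatment of the randomized-scale subtlety is in fact spelled out more explicitly than the paper's (the paper states tersely ``$\overline{\sigma}^2$ is only a function of $\cZ$'' and invokes the probability-one sensitivity bound), but the logical content is the same, so this is the paper's route, not a different one.

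One small slip worth fixing: you write that you condition on ``the learning-stage query points $\bfW_T$, which likewise are data-independent.'' In the paper's notation $\bfW_T = \{(x_i,c_i)\}_{i=1}^T$ contains the context vectors $c_i$, which \emph{are} part of the database; conditioning on the full $\bfW_T$ would condition on different events under $\cS_T$ and $\cS_T'$ (since $c_t \neq c_t'$), and the claim that $p(\cZ,\bfW_T\mid\cS_T) = p(\cZ,\bfW_T\mid\cS_T')$ would be false. What you should condition on is $\cZ$ together with the raw action draws $\{x_i\}_{i=1}^T$, which are the quantities generated purely by the algorithm's random coin. With that substitution the argument is exactly right: the sensitivity bound in Lemma~\ref{Appendix :Lemma:sensitivity_bound} is a deterministic statement given $\cZ$ and the $x_i$'s, the scale $m$ is $\cZ$-measurable, and the conditional laws of $(\cZ,\{x_i\})$ agree under the two neighboring databases, so the $e^{\varepsilon}$ bound survives marginalization.
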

\begin{proof}
    As usual denote $\cS_T,\cS'_T$ to be two $t$-neighbouring databases. For the sake of space, introduce the shorthand notation $c_{T+1}\equiv c$. Recall that $\cZ$ is the aproximating set of the algorithm. We can now write:
    \begin{align*}
        \frac{P(\widehat{x}_{T}(c)=r)}{P(\widehat{x}'_{T}(c)=r)}=\frac{\int_{\cZ} \frac{\exp(\varepsilon \overline{\mu}_T(r,c)/(4B\sup\overline{\sigma}^2)}{\int_{\cX} \exp(\varepsilon \overline{\mu}_T(r,c)/(4B\sup\overline{\sigma}^2)\nu_0(dr) }d\cZ}{\int_{\cZ} \frac{\exp(\varepsilon \overline{\mu}'_T(r,c)/(4B\sup\overline{\sigma}^2)}{\int_{\cX} \exp(\varepsilon \overline{\mu}'_T(r,c)/(4B\sup\overline{\sigma}^2)\nu_0(dr) }d\cZ}
    \end{align*}

By using the definition of sensitivity we can bound the  ratio :

\begin{align*}
    \frac{ \frac{\exp(\varepsilon \overline{\mu}_T(r,c)/(4B\sup\overline{\sigma}^2)}{\int_{\cX} \exp(\varepsilon \overline{\mu}_T(r,c)/(4B\sup\overline{\sigma}^)\nu_0(dr) }}{\frac{\exp(\varepsilon \overline{\mu}'_T(r,c)/(4B\sup\overline{\sigma}^2)}{\int_{\cX} \exp(\varepsilon \overline{\mu}'_T(z,c)/(4B\sup\overline{\sigma}^2)\nu_0(dr) }}&\leq \exp(\varepsilon \Delta\overline{\mu}_T/(4B\sup\overline{\sigma}^2))\frac{\int_{\cX} \exp(\varepsilon \overline{\mu}'_T(r,c)/(4B\sup\overline{\sigma}^2)\nu_0(dr)}{\int_{\cX} \exp(\varepsilon \overline{\mu}_T(r,c)/(4B\sup\overline{\sigma}^2)\nu_0(dr)}\leq \\
    &\leq \exp(2\varepsilon\Delta\mu_T/(4B\sup\overline\sigma^2))
\end{align*}

Note that $\overline{\sigma}^2$ is only a function of $\cZ$ and does not depend on the database $\cS_T$. We can hence write:

\begin{align*}
        \frac{P(\widehat{x}_{T}(c)=r)}{P(\widehat{x}'_{T}(c)=r)}\leq \frac{\int_{\cZ} \exp(2\varepsilon\Delta \overline{\mu}_T/(4B\sup\overline{\sigma}^2)) \frac{\exp(\varepsilon \overline{\mu}'_T(r,c)/(4B\sup\overline{\sigma}^2)}{\int_{\cX} \exp(\varepsilon \overline{\mu}'_T(r,c)/(4B\sup\overline{\sigma}^2)\nu_0(dr) }d\cZ}{\int_{\cZ} \frac{\exp(\varepsilon \overline{\mu}'_T(r,c)/(4B\sup\overline{\sigma}^2)}{\int_{\cX} \exp(\varepsilon \overline{\mu}'_T(r,c)/(4B\sup\overline{\sigma}^2)\nu_0(dr) }d\cZ}
\end{align*}

Recall that lemma \ref{Appendix :Lemma:sensitivity_bound} ensures that $\Delta\overline{\mu}(T)\leq 2B\sup\overline{\sigma}^2$ with probability 1 over the randomness generated by $\cZ$. We can hence write:

\begin{align*}
    \frac{P(\widehat{x}_{T}(c)=r)}{P(\widehat{x}'_{T}(c)=r)}\leq \frac{\int_{\cZ} \exp(2\varepsilon\Delta\mu_T/(4B\sup\overline{\sigma}^2)) \frac{\exp(\varepsilon \overline{\mu}'_T(r,c)/(4B\sup\overline{\sigma}^2)}{\int_{\cX} \exp(\varepsilon \overline{\mu}'_T(r,c)/(4B\sup\overline{\sigma}^2)\nu_0(dr) }d\cZ}{\int_{\cZ} \frac{\exp(\varepsilon \overline{\mu}'_T(r,c)/(4B\sup\overline{\sigma}^2)}{\int_{\cX} \exp(\varepsilon \overline{\mu}'_T(r,c)/(4B\sup\overline{\sigma}^2)\nu_0(dr) }d\cZ}\leq \exp(\varepsilon)
\end{align*}

Which is what was originally claimed.

\end{proof}

\begin{theorem}\label{Appendix:JDP_guarantee}
    \textsc{USCA} is $\varepsilon$-JDP.
\end{theorem}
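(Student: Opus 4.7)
The plan is to decompose the output of \textsc{USCA} on the interval $(t, T+1]$ into two parts: the random queries $x_{t+1}, \ldots, x_T$ issued during the remainder of the learning stage, and the final prediction $\widehat{x}_T(c_{T+1})$ released by the exponential mechanism. I will handle the two parts separately and then combine them via conditioning.

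First I would observe the key structural property of USCA: for every $i \in \{t+1, \ldots, T\}$, the query $x_i$ is drawn uniformly at random from $\cX$ by the algorithm's private coin (line 6 of Algorithm~\ref{alg:Algorithm 3}) and is statistically independent of the database $\cS_T$. Consequently, if $\cS_T$ and $\cS_T'$ are any two $t$-neighboring databases, the joint distribution of $(x_{t+1}, \ldots, x_T)$ under $\sA(\cS_T)$ is identical to its distribution under $\sA(\cS_T')$, since neither the uniform sampling of the queries nor the construction of the approximating set $\cZ$ (lines 9--13) looks at the observed context--reward pairs. This step contributes a ratio of exactly $1$ to the privacy bound, i.e.\ no privacy is spent during the learning stage.

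Next I would invoke the preceding lemma, which establishes that the conditional distribution of $\widehat{x}_T(c_{T+1})$ given $\cZ$ is $\varepsilon$-differentially private with respect to $\cS_T \setminus \{c_{T+1}\}$, and crucially that this guarantee holds with probability $1$ over the randomness of $\cZ$ --- the sensitivity inequality $\Delta \overline{\mu}_T \leq 2B \sup \overline{\sigma}^2$ is a deterministic identity given $\cZ$ (by Lemma~\ref{Appendix :Lemma:sensitivity_bound}), and the exponential mechanism is calibrated exactly to $2B \sup \overline{\sigma}^2$. Combining this with the independence of $(x_{t+1}, \ldots, x_T)$ from the database, I would write, for any measurable $\cP_{>t} \subset \cX^{T-t+1}$,
\begin{align*}
\Pr(\sA(\cS_T) \in \cP_{>t}) = \int \Pr\bigl(\widehat{x}_T(c_{T+1}) \in \cP_{T+1} \mid \mathbf{x}_{t+1:T}, \cZ, \cS_T\bigr) \, d\mu(\mathbf{x}_{t+1:T}, \cZ, \cP_{T+1})
\end{align*}
where $\mu$ is the product law of $(\mathbf{x}_{t+1:T}, \cZ)$ (the same under $\cS_T$ and $\cS_T'$) times the slice of $\cP_{>t}$ at a fixed $\mathbf{x}_{t+1:T}$. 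Applying the $\varepsilon$-DP bound from the preceding lemma pointwise inside the integral and integrating yields $\Pr(\sA(\cS_T) \in \cP_{>t}) \leq e^{\varepsilon} \Pr(\sA(\cS_T') \in \cP_{>t})$, which is exactly the $\varepsilon$-JDP guarantee of Definition~\ref{Def:JDP}.

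The main subtlety I anticipate is making rigorous the claim that the sensitivity bound on $\overline{\mu}_T$ holds deterministically given $\cZ$, since the generic statement of Lemma~\ref{Appendix :Lemma:sensitivity_bound} mixes a high-probability variance bound with a pointwise sensitivity identity. The resolution is that the exponential mechanism in USCA uses the data-dependent scale $2B \sup \overline{\sigma}^2$ (not a fixed a~priori scale), so the calibration is always correct for the realized $\cZ$, and no failure event needs to be excluded. Everything else is bookkeeping: verifying that the output on $(t, T+1]$ decomposes as described and that Fubini applies to the integral above.
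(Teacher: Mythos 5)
Your proposal is correct and follows the same route as the paper: reduce to the $\varepsilon$-DP of $\widehat{x}_T(c_{T+1})$ established in the preceding lemma, observe that the uniform learning-stage queries $x_{t+1},\ldots,x_T$ and the approximating set $\cZ$ are data-independent and so contribute nothing to the privacy loss, and integrate over the algorithm's remaining coins. The paper makes the pointwise application of the DP bound rigorous by explicitly disintegrating over the first $t-1$ query points $\cM_{t-1}$ via the law of total probability before invoking the preceding lemma --- this is precisely the bookkeeping step you anticipate --- and your observation that $\Delta\overline{\mu}_T \le 2B\sup\overline{\sigma}^2$ holds deterministically given $\cZ$ (so no failure event needs to be carved out) matches the remark the paper makes in the unlabeled lemma.
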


\begin{proof}

By previos lemma $\widehat{x}_T(c_{T+1})$ is $\varepsilon$-DP wrt the database $\cS_T\setminus c_{T+1}$. To now formally show that \textsc{USCA} satisfies the  $\varepsilon$-JDP, fix two $t$-neighbouring databases $\cS_T,\cS_T$, i.e $\cS_T\triangle\cS_T^{'}=\{(c_t,y_t),(c'_t,y'_t)\}$.\\
If $t=T+1$ there is nothing to show , as before time $T+1$ the points are chosen non-adaptively so their distribution does not depend on the contexts or rewards. Assume $t\leq T$, we have:

\begin{align}\label{privacy:eq_1}
    &\frac{P\left((x_{t+1},x_{t+2},\dots x_T, \widehat{x}_{T}(c_{T+1})=(r_{t+1},r_{t+2}, \dots r_T,r_{T+1})\right)}{P\left((x'_{t+1},x'_{t+2},\dots x'_T,\widehat{x}'_{T}(c_{T+1}))=(r_{t+1},r_{t+2}, \dots r_T, r_{T+1})\right)}=\frac{P(\widehat{x}_{T}(c_{T+1})=r_{T+1}|\cap^{T}_{j=t+1}(x_j=r_j,y_j,c_j))}{P(\widehat{x}_T^{'}(c_{T+1})=r'_{T+1}|\cap^{T}_{j=t+1}(x_j=r_j,y_j,c_j)))}
\end{align}
 The second equality follows as before time $T$ al points are queried uniformly from the domain $\cX$. They thus have the same distribution, independent of previous contexts and rewards.\\
To utilize the previously established result on $\varepsilon$-DP , we need to first fix the previous points $\{x_i\}_{i=1}^{t-1}$ . We do this by applying the law of total probability to the numerator and the denominator of eq.(\ref{privacy:eq_1}):
\begin{align*}
    &\frac{P(\widehat{x}_{T}(c_{T+1})=r_{T+1}|\cap^{T}_{j=t+1}(x_j=r_j,y_j,c_j))}{P(\widehat{x}'_{T}(c_{T+1})=r_{T+1}|\cap^{T}_{j=t+1}(x_j=r_j,y_j,c_j)))}=\\
    &=\frac{\int_{\cM_{t-1}}P(x_{T}(c_{T+1})=r_{T+1}|\cap^{T}_{j=1,j\not=t}(x_j=r_j,y_j,c_j))P(\cap_{j=1}^{t-1}(x_j=r_j,y_j,c_j))}{\int_{\cM_{t-1}} P(x'_{T}(c_{T+1})=r_{T+1}|\cap^{T}_{j=1,j\not=t}(x_j=r_j,y_j,c_j))P(\cap_{j=1}^{t-1}(x_j=r_j,y_j,c_j))}
\end{align*}

Where the randomness in the integration the first $t-1$ query points which is succinctly denoted as $\cM_{t-1}$. Note that $x_{T+1}$ only depends on $\cS_T$ through previous (point, reward, context) triples. We can hence use the previously derived $\varepsilon$-DP of $\widehat{x}_T(c_{T+1})$ with respect to $\cS_T$ to now write:

\begin{align*}
    &  \frac{P(\widehat{x}_{T}(c_{T+1})=r_{T+1}|\cap^{T}_{j=t+1}(x_j=r_j,y_j,c_j))}{P(\widehat{x}'_{T}(c_T+1)=r_{T+1}|\cap^{T}_{j=t+1}(x_j=r_j,y_j,c_j)))}=\\
    &=\frac{\int_{\cM_{t-1}} P(\widehat{x}_{T}(c_{T+1})=r_{T+1}|\cap^{T}_{j=1,j\not=t}(x_j=r_j,y_j,c_j))P(\cap^{t-1}_{j=1}(x_j=r_j,y_j,c_j))}{\int_{\cM_{t-1}} P(\widehat{x}'_{T}(c_{T+1})=r_{T+1}|\cap^{T}_{j=1,j\not=t}(x_j=r_j,y_j,c_j))P(\cap_{j=1}^{t-1}(x_j=r_j,y_j,c_j))}\leq\\
    \leq &\frac{\int_{\cM_{t-1}} \exp(\varepsilon)P(\widehat{x}'_{T}(c_{T+1})=r_{T+1}|\cap^{T}_{j=1,j\not=t}(x_j=r_j,y_j,c_j))P(\cap_{j=1}^{t-1}(x_j=r_j,y_j,c_j))}{\mathop{\int}_{\cM_{t-1}} P(\widehat{x}'_{T}(c_{T+1})=r_{T+1}|\cap^{T}_{j=1,j\not=t}(x_j=r_j,y_j,c_j))P(\cap_{j=1}^{t-1}(x_j=r_j,y_j,c_j))} =\exp(\varepsilon)
\end{align*}

Plugging this result back into eq.(\ref{privacy:eq_1}) we finally have:
\begin{align*}
    &P\left((x_{t+1},x_{t+2},\dots \widehat{x}_{T}(c_{T+1}))=(r_{t+1},r_{t+2}, \dots r_{T+1})\right)\leq \\
&\leq\exp(\varepsilon)P\left((x'_{t+1},x'_{t+2},\dots \widehat{x}'_{T}(c_{T+1}))=(r_{t+1},r_{t+2}, \dots r_{T+1})\right)
\end{align*}

For every two neighbouring data-bases which is what was desired.
\end{proof}

\section{Appendix C. Utility Analysis}

The following Lemma will be useful in applying the Exponential mechanism:

\begin{lemma}\label{appendix:geometric_lemma}
Let $\nu_0$ be the Lebesgue measure on $\mathbb{R}^{d}$. Consider a convex set $\mathcal{X}\subset \mathbb{R}^{d}$ with  diameter bounded by $\mathrm{diam}\cX\leq D_0$ and a $L_g$-Lipschitz function $g:\cX\rightarrow \mathbb{R}$. For an arbitrary $r>0$, we have:
\begin{align*}
    \frac{\nu_0g^{-1}\left([g(x^*)-r, g(x^*)]\right)}{\nu_0\mathcal{X}}\geq  \min(1,(r/D_0L_g)^{d})
\end{align*}

Where $x^{*}=\mathrm{argsup}_{x\in \mathcal{X}} g(x)$.    
\end{lemma}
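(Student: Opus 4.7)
The plan is to exhibit a subset of $g^{-1}([g(x^*)-r, g(x^*)])$ whose volume is easy to lower-bound in terms of $\nu_0(\cX)$. The subset will be produced by a homothety centered at $x^*$, using convexity of $\cX$ to guarantee containment and Lipschitz continuity of $g$ to guarantee that the scaled set lies in the preimage.

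First, I would record the Lipschitz consequence: if $\|x - x^*\|_2 \leq r/L_g$, then
\begin{align*}
g(x^*) - g(x) \leq L_g \|x - x^*\|_2 \leq r,
\end{align*}
so the closed ball $B(x^*, r/L_g)$ intersected with $\cX$ is contained in $g^{-1}([g(x^*) - r, g(x^*)])$. If $r \geq L_g D_0$, then every $x \in \cX$ satisfies $\|x - x^*\|_2 \leq D_0 \leq r/L_g$, so the preimage equals $\cX$ and the ratio is $1$, matching the claim. Hence, from here on, assume $r < L_g D_0$ and set $\alpha := r/(L_g D_0) \in (0,1)$.

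Next, I would build the homothetic copy of $\cX$ anchored at $x^*$,
\begin{align*}
S_\alpha := x^* + \alpha(\cX - x^*) = \{x^* + \alpha(y - x^*) : y \in \cX\}.
\end{align*}
Since $\cX$ is convex and contains $x^*$, each segment from $x^*$ to $y \in \cX$ lies in $\cX$, which gives $S_\alpha \subseteq \cX$. Moreover, every point of $S_\alpha$ is within distance $\alpha \cdot \mathrm{diam}(\cX) \leq \alpha D_0 = r/L_g$ of $x^*$, so $S_\alpha \subseteq B(x^*, r/L_g) \cap \cX \subseteq g^{-1}([g(x^*) - r, g(x^*)])$. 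By the scaling property of Lebesgue measure in $\R^d$, $\nu_0(S_\alpha) = \alpha^d \nu_0(\cX) = (r/(L_g D_0))^d \nu_0(\cX)$, and combining with the trivial case gives the claimed $\min(1, (r/(L_g D_0))^d)$ bound.

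There is no real obstacle here: the only subtle ingredient is invoking convexity of $\cX$ together with $x^* \in \cX$ to ensure $S_\alpha \subseteq \cX$ (without convexity the homothety argument breaks). Everything else is a one-line Lipschitz inequality and the scaling rule for Lebesgue measure, so the proof should be short and self-contained.
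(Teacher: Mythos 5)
Your proof is correct and follows essentially the same route as the paper's: both handle the trivial case $r \geq L_g D_0$ separately, then construct the homothetic image of $\cX$ centered at $x^*$ with ratio $r/(L_g D_0)$, use convexity to show it lies in $\cX$, use Lipschitz continuity to show it lies in the preimage, and conclude by the $\alpha^d$ scaling of Lebesgue measure. The only differences are notational ($S_\alpha$ vs.\ $\cY$, $\alpha$ vs.\ $\eta_0$).
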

\begin{proof}
Note that if $r>D_0L_g$ then $r>\sup_{x,y\in \cX} \|g(x)-g(y)\|\leq L_g\sup\|x-y\|_2 \leq L_gD_0$ and thus $g^{-1}[g(x^*)-r, g(x^*)]\equiv \cX$, giving the desired inequality.

Assume now $r<D_0L_g$. Consider the image of $\cX$ under the homothety centered at $x^*$, $\bfH:x\rightarrow x^*(1-\eta_0)+x\eta_0$ where $\eta_0=(r/D_0L_g)$. Denote by $\cY=\bf{H}(\cX)$, we will show that $\cY\subseteq g^{-1}[g(x^*)-r, g(x^*)]$ from which the desired inequality will follow, as $\nu_0(\cY)/\nu_0(\cX)=\eta_0^d$.\\
Note that by convexity, clearly $\cY \subset \cX$. Fix an arbitrary point $z\in \cY$. Note that $\mathbf{H}$ scales distances by $\eta_0$. Indeed :
\begin{align*}
    \|\bfH(x_1)-\bfH(x_2)\|_2=\|\eta_0(x_1-x_2)\|_2=\eta_0\|x_1-x_2\|_2
\end{align*}

It thus follows that $\mathrm{diam} \cY=\eta_0\mathrm{diam}\cX\leq \eta_0D_0$. Hence we can bound $\|x^{*}-z\|_2\leq \eta_0D$, as $x^{*} \in  \cY$. By Lipschitz condition we can further write:
\begin{align*}
    |g(x^{*})-g(z)|\leq L_g\eta_0D_0=r
\end{align*}

Thus clearly $z\in g^{-1}[g(x^*)-r, g(x^*)]$. Note that this holds for all $z\in \cY$ and thus we have the desired $\cY\subseteq g^{-1}[g(x^*)-r, g(x^*)]$.
\end{proof}

We can now present a result characterizing the simple regret performance of \textsc{USCA}.

\begin{theorem}\label{appendix:theorem}

Assume the kernel function satisfies the polynomial eigen-decay condition(\ref{assumptio:polynomial_decay}) for $\beta_p>1$. For $L_f$-Lipshitz reward function and $T> \max(\overline{N}(\delta/4),\overline{N}_1(\delta/4))$ where $\overline{N}(\delta),\overline{N_1}(\delta)$ are $\delta$-dependant constants  introduced in \citep{Sudeep_Uniform_Sampling} and Lemma \ref{spectral_bound} respectively. We can bound the average simple regret of the output points $\widehat{x}_{T}(c_{T+1})$ w.p $1-\delta$ as :
\begin{align*}
    &\mathop{\E}_{c_{T+1} \sim \kappa}\left[\sup_{x \in \cX} f(x,c_{T+1}) - f(\widehat{x}_T(c_{T+1}),c_{T+1})\right]\leq \\
    &\leq 10\left(\frac{11/3B+3R\sqrt{\gamma_T81F^2/13\log(6/\delta)}}{T}+\beta_2(\delta/12|\cU_T|)\sqrt{\frac{\gamma_ T}{T}}\right)+\frac{\gamma_T}{T}\frac{1}{\varepsilon}\frac{648BF^2}{13}\left(d\log\left(\frac{2592BF^2TL_fD_0}{13\gamma_T}\varepsilon\right)+\log(3/\delta)\right) 
\end{align*}
Here $\beta_2(\delta)=\beta_1(\delta)+\beta(\delta)$ where $\beta(\delta)=\frac{2R}{\tau}\sqrt{\frac{108F^2}{13}}\log\left(\frac{1}{\delta}\right)$ and $\beta_1$ is a $\delta$-dependant constant introduced in Lemma \ref{approximation_lemma}. In the above expression the randomness is over contexts, rewards and random coins of the algorithm

\end{theorem}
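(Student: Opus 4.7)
The plan is to perform the standard simple-regret decomposition at the fixed context $c_{T+1}$ and then combine the uniform approximation bound for $\overline{\mu}_T$ with a utility analysis of the continuous exponential mechanism. Let $x^\star = \argmax_{x \in \cX} f(x, c_{T+1})$ and $x^\star_\mu = \argmax_{x \in \cX} \overline{\mu}_T(x, c_{T+1})$. Adding and subtracting $\overline{\mu}_T$ and using $\overline{\mu}_T(x^\star_\mu, c_{T+1}) \geq \overline{\mu}_T(x^\star, c_{T+1})$,
\begin{align*}
f(x^\star, c_{T+1}) - f(\widehat{x}_T, c_{T+1}) &\leq 2\sup_{w \in \cW}\bigl|\overline{\mu}_T(w) - f(w)\bigr| + \bigl[\overline{\mu}_T(x^\star_\mu, c_{T+1}) - \overline{\mu}_T(\widehat{x}_T, c_{T+1})\bigr].
\end{align*}
Lemma \ref{lemma:confidence_bounds} bounds the first summand by $\cO(\sqrt{\gamma_T/T})$ (with the constants involving $\beta_2$) uniformly in $c_{T+1}$ on an event of probability at least $1-\delta/3$, giving the first block of terms in the theorem; the uniformity lets the expectation over $c_{T+1}\sim\kappa$ be absorbed for free.

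For the residual term I would analyse the continuous exponential mechanism in the standard way. Put $\Delta_T := 2B\sup_w \overline{\sigma}^2(w)$, which by Lemma \ref{Appendix :Lemma:sensitivity_bound} is at most $162BF^2\gamma_T/(13T)$ with probability at least $1-\delta/3$, and set $s := \varepsilon/(2\Delta_T)$, so that $\widehat{x}_T(c_{T+1})$ has density proportional to $\exp(s\overline{\mu}_T(x,c_{T+1}))$ on $\cX$. For any $r>0$, bounding the numerator trivially and restricting the denominator to the high-utility set $A_{r/2} := \{x : \overline{\mu}_T(x, c_{T+1}) \geq \overline{\mu}_T(x^\star_\mu, c_{T+1}) - r/2\}$ yields
\begin{align*}
\Pr\!\bigl(\overline{\mu}_T(x^\star_\mu,c_{T+1}) - \overline{\mu}_T(\widehat{x}_T,c_{T+1}) > r\bigr) \leq \frac{\nu_0(\cX)}{\nu_0(A_{r/2})}\exp\!\bigl(-sr/2\bigr).
\end{align*}
The geometric Lemma \ref{appendix:geometric_lemma} applied to $g = \overline{\mu}_T(\cdot, c_{T+1})$ lower-bounds $\nu_0(A_{r/2})/\nu_0(\cX)$ by $\min(1, (r/(2D_0 L_{\overline{\mu}_T}))^d)$, where $L_{\overline{\mu}_T}$ is controlled through the RKHS-norm bound on $\overline{\mu}_T$ established inside the proof of Lemma \ref{lemma:confidence_bounds}. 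Equating the resulting upper bound to $\delta/3$ and solving for $r$ — a standard transcendental step — produces
\begin{align*}
r = \cO\!\left(\frac{1}{s}\Bigl(d\log(D_0 L_f s) + \log(1/\delta)\Bigr)\right) = \cO\!\left(\frac{\gamma_T}{T\varepsilon}\Bigl(d\log\!\bigl(T L_f D_0 \varepsilon/\gamma_T\bigr) + \log(1/\delta)\Bigr)\right),
\end{align*}
which matches the second block of terms in the theorem. A union bound over the three high-probability events (confidence bound, sensitivity bound, exponential mechanism utility), each at level $\delta/3$, closes the argument, and the uniformity of every intermediate bound in $c_{T+1}$ lets the expectation over $\kappa$ be taken at no cost.

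The main obstacle is the exponential mechanism utility step: the volume bound from the geometric lemma must be coupled to the transcendental inequality $\exp(-sr/2) \leq (\delta/3)(r/(2D_0 L_{\overline{\mu}_T}))^d$, while at the same time the random Lipschitz constant $L_{\overline{\mu}_T}$ must be controlled uniformly in $c_{T+1}$ on a high-probability event. The latter is handled by carrying through the intermediate bound $\|\overline{\mu}_T\|_{\cH_k} = \cO(B + \sqrt{\gamma_T\log(1/\delta)})$ established during the proof of Lemma \ref{lemma:confidence_bounds}, combined with the Lipschitz continuity of the kernel underlying Assumption \ref{assumption:Lipschitz}; the resulting Lipschitz constant depends only on $L_f$-type quantities and enters only inside a logarithm, so the leading $\gamma_T/(T\varepsilon)$ scaling survives.
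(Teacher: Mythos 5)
Your overall architecture matches the paper's: decompose the simple regret into a confidence-bound term plus a residual, invoke Lemma~\ref{lemma:confidence_bounds} and Lemma~\ref{Appendix :Lemma:sensitivity_bound} on high-probability events, then run a continuous exponential-mechanism utility argument in which the crux is lower-bounding the volume of the good set $\cA_{r/2}$. However, you apply the geometric Lemma~\ref{appendix:geometric_lemma} directly to $g=\overline{\mu}_T(\cdot,c_{T+1})$, whereas the paper applies it to $f(\cdot,c_{T+1})$, and that difference is where your proposal has a gap.

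The paper constructs an auxiliary sublevel set $\cA'$ defined in terms of the \emph{true} reward $f$,
\[
\cA'=\Bigl\{x\in\cX : f(x,c_{T+1})\geq \sup_{x}f(x,c_{T+1})-r/2+2\cdot(\text{confidence width})\Bigr\},
\]
uses the event $\Gamma_2$ (the uniform bound $\sup_w|\overline{\mu}_T(w)-f(w)|\leq\text{confidence width}$) to show $\cA'\subseteq\cA_{r/2}$, and then applies the geometric lemma to $f$ with the deterministic constant $L_f$ furnished by Assumption~\ref{assumption:Lipschitz}. This is why $L_f$ appears inside the logarithm of the theorem. Your route instead requires a Lipschitz constant $L_{\overline{\mu}_T}$ for the random estimator, which you propose to extract from the bound $\|\overline{\mu}_T\|_{\cH_k}=\cO(B+\sqrt{\gamma_T\log(1/\delta)})$ together with ``Lipschitz continuity of the kernel underlying Assumption~\ref{assumption:Lipschitz}.'' But Assumption~\ref{assumption:Lipschitz} asserts Lipschitzness only of the fixed $f$, not of arbitrary RKHS elements with bounded norm, and the paper never assumes the feature map $\phi$ (equivalently, the kernel) is Lipschitz on $\cW$. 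Without such an additional hypothesis the step $\|\overline{\mu}_T\|_{\cH_k}\Rightarrow L_{\overline{\mu}_T}$ is unsupported; and even with it, $L_{\overline{\mu}_T}$ would be a random quantity scaling like $\sqrt{\gamma_T}$, so the logarithmic factor you obtain would not reduce to the paper's $\log(2592BF^2TL_fD_0\varepsilon/(13\gamma_T))$ but would carry an extra $\gamma_T$- and $\delta$-dependence. The asymptotic $\cO(\gamma_T/(T\varepsilon))$ rate survives, as you note, but to recover the stated constants and to stay within the stated assumptions you should replace the volume bound on $\cA_{r/2}$ with the paper's detour through $\cA'$ and $f$.
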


\begin{proof}
We will first show that the sample $\widehat{x}_{T}(c_{T+1})\sim\cE(\overline{\mu}(  \cdot, c_{T+1}), \varepsilon, 2B\sup_{w \in \cW}\overline{\sigma}^2(w))$ is close to $\sup_{x\in \cX}\overline{\mu}_T(c_{T+1},x)$. The full argument will then follow from the confidence bounds derived for the estimator $\overline{\mu}_T$ in Lemma \ref{lemma:confidence_bounds}.\\
\noindent Define $\cA_r=\{x \in \cX|\overline{\mu}_T(x,c_{T+1})\geq \sup_{x\in \cX}\overline{\mu}_T(x,c_{T+1})-r\}$. Consider 2 events in the sigma-algebra spanned by $\{\bfW_T,\bfY_T, \cZ\}$:
\begin{align*}
    \Gamma_1&=\left\{\sup_{w \in \cW}\overline{\sigma}^2(w)\leq \frac{81F^2\gamma_T}{13T}\right\}\\
    \Gamma_2&=\left\{ \sup_{w\in\cW}|\overline{\mu}_T(w)-f(w)|\leq  \frac{11/3B+3R\sqrt{\gamma_T81F^2/13\log(2/\delta)}}{T}+\left(\beta(\delta/4|\cU_T|)+\beta_1(\delta/4|\cU_T|)\right)\sqrt{\frac{\gamma_ T}{T}}
    \right\}
\end{align*}

By Lemma \ref{lemma:confidence_bounds} $P(\Gamma_2)\geq 1- \delta$ and by Lemma \ref{spectral_bound} $P(\Gamma_1)\geq 1-\delta$ by union bound we have $P(\Gamma_1,\Gamma_2)\geq 1-2\delta$. Note that:
\begin{align}\label{utility:eq_2}
    P(\widehat{x}_{T}(c_{T+1})\in \overline{\cA}_r)\leq P(\widehat{x}_{T}(c_{T+1})\in \overline{\cA}_r|\Gamma_{1},\Gamma_{2})+2\delta\cdot 1 
\end{align}
We thus only need to bound $P(\widehat{x}_{T}(c_{T+1})\in \overline{\cA}_r|\Gamma_{1},\Gamma_{2})$. In further writing we drop the conditioning notation in the interest of space.  We can now write:
\begin{align*}
    &P(\widehat{x}_{T}(c_{T+1})\in \overline{\cA}_r)\leq \frac{P( \widehat{x}_{T}(c_{T+1})\in\overline{\cA}_r)}{P(\widehat{x}_{T}(c_{T +1})\in\cA_{r/2})}=\\
    &=\frac{\int_{\bfW_T,\bfY_T,\cZ}\int_{x\in \overline{\cA}_r} \frac{\exp(\varepsilon\overline{\mu}_T(x,c_{T+1})/4B\sup\overline{\sigma}^2)}{\int_{x\in \cX}\exp(\varepsilon\overline{\mu}_T(x,c_{T+1})/4B\sup\overline{\sigma}^2)}}{\int_{\bfW_T,\bfY_T,\cZ}\int_{x\in\cA_{r/2}} \frac{\exp(\varepsilon\overline{\mu}_T(x,c_{T+1})/4B\sup\overline{\sigma}^2)}{\int_{x\in \cX}\exp(\varepsilon\overline{\mu}_T(x,c_{T+1})/4B\sup\overline{\sigma}^2)}}
\end{align*}

We can bound the ratio inside the outer integral as:
\begin{align*}
&\frac{\int_{x\in \overline{\cA}_r} \frac{\exp(\varepsilon\overline{\mu}_T(x,c_{T+1})/4B\sup\overline{\sigma}^2)}{\int_{x\in \cX}\exp(\varepsilon\overline{\mu}_T(c,c_{T+1})/4B\sup\overline{\sigma}^2)}
}{\int_{x\in \cA_{r/2}} \frac{\exp(\varepsilon\overline{\mu}_T(x,c_{T+1})/4B\sup\overline{\sigma}^2)}{\int_{x\in \cX}\exp(\varepsilon\overline{\mu}_T(x,c_{T+1})/4B\sup\overline{\sigma}^2)}
}=\frac{\int_{x\in \overline{\cA}_r} \exp(\varepsilon\overline{\mu}_T(x,c_{T+1})/4B\sup\overline{\sigma}^2)
}{\int_{x\in \cA_{r/2}} \exp(\varepsilon\overline{\mu}_T(x,c_{T+1})/4B\sup\overline{\sigma}^2}\leq \\
&\leq \frac{\exp(\varepsilon(\sup\overline{\mu}_T(x,c_{T+1})-r)/4B\sup\overline{\sigma}^2\nu_0(\overline{\cA}_r)}{\exp(\varepsilon(\sup\overline{\mu}_T(x,c_{T+1})-r/2)/4B\sup\overline{\sigma}^2\nu_0(\cA_{r/2})}\leq \exp(-\varepsilon r/(8B\sup\overline{\sigma}^2))\frac{\nu_0(\overline{\cA}_r)}{\nu_0(\cA_{r/2})}\leq \\
&\leq  \exp(-\varepsilon r/(8B\sup\overline{\sigma}^2))\frac{\nu_0(\cX)}{\nu_0(\cA_{r/2})}\leq
\end{align*}
Plugging this back into the previous equation we have:
\begin{align}\label{utility:eq:1}
     &P(\widehat{x}_{T}(c_{T+1})\in \overline{\cA}_r)\leq \frac{\int_{\bfW_T,\bfY_T,\cZ} \exp(-\varepsilon r/(8B\sup\overline{\sigma}^2))\frac{\nu_0(\cX)}{\nu_0(\cA_{r/2})}\int_{x\in \cA_{r/2}} \frac{\exp(\varepsilon\overline{\mu}_T(x,c_{T+1})/4B\sup\overline{\sigma}^2)}{\int_{x\in \cX}\exp(\varepsilon\overline{\mu}_T(x,c_{T+1})/4B\sup\overline{\sigma}^2)}}{\int_{\bfW_T,\bfY_T,\cZ}\int_{x\in \cA_{r/2}} \frac{\exp(\varepsilon\overline{\mu}_T(x,c_{T+1})/4B\sup\overline{\sigma}^2)}{\int_{x\in \cX}\exp(\varepsilon\overline{\mu}_T(x,c_{T+1})/4B\sup\overline{\sigma}^2)}}
\end{align}

We now bound the ratio $\frac{\nu_0(\cX)}{\nu_0(\cA_{r/2})}$ conditioned on $\Gamma_{1},\Gamma_2$. To this end we use Lemma \ref{appendix:geometric_lemma} along with confidence bounds in Lemma \ref{lemma:confidence_bounds}
, implied by the event $\Gamma_2$.  Consider the set:
$$
\cA'=\left\{x\in\cX|f(x,c_{T+1})\geq \sup_{x\in \cX} f(x,c_{T+1})-r/2+2\left(\frac{11/3B+3R\sqrt{\gamma_T81F^2/13\log(2/\delta)}}{T}+\beta_2(\delta/4|\cU_T|)\sqrt{\frac{\gamma_ T}{T}}\right)\right\}
$$

Fix an arbitrary $z\in \cA'$. By the definition of the event $\Gamma_2$ and the definition of $\cA'$ we can now write:
\begin{align*}
    &\sup_{x\in \cX}\overline{\mu}_T(x,c_{T+1})-\overline{\mu}_T(z,c_{T+1})\leq \\
    &\leq f(x^{*}_{T+1},c_{T+1})-f(z,c_{T+1})+2\left(\frac{11/3B+3R\sqrt{\gamma_T81F^2/13\log(2/\delta)}}{T}+\beta_2(\delta/4|\cU_T|)\sqrt{\frac{\gamma_ T}{T}}\right)=\\
    &=\sup_{x\in \cX}f(x,c_{T+1})-f(z,c_{T+1})+2\left(\frac{11/3B+3R\sqrt{\gamma_T81F^2/13\log(2/\delta)}}{T}+\beta_2(\delta/4|\cU_T|)\sqrt{\frac{\gamma_ T}{T}}\right)\leq r/2
\end{align*}

It thus follows that $\forall z
\in \cA'$ we have $z\in \cA_{r/2}$ and thus $ \cA^{'}\subseteq \cA_{r/2}$. We can now directly bound  $\nu_0(\cA_{r/2})/\nu_0(\cX)$ from Lemma \ref{appendix:geometric_lemma} :
\begin{align*}
    \frac{\nu_0(\cA_{r/2})}{\nu_0(\cX)}\geq\frac{\nu_0(\cA')}{\nu_0(\cX)}\geq\min\left(1,\left(r/2-2\left(\frac{11/3B+3R\sqrt{\gamma_T81F^2/13\log(2/\delta)}}{T}+\beta_2(\delta/4|\cU_T|)\sqrt{\frac{\gamma_ T}{T}}\right)\right)^d(1/D_0L_f)^{d}\right)
\end{align*}
From the event $\Gamma_1$ we also know $\sup_{w \in \cW}\overline{\sigma}^2(w)\leq \frac{81F^2\gamma_T}{13T}$. Plugging both of these bound in eq.(\ref{utility:eq:1})and integrating  out the $\{\bfW_T,\bfY_T,\cZ\}$ we have:
\begin{align*}
    &P(\widehat{x}_T(c_{T+1})\in \overline{\cA}_r|\Gamma_1,\Gamma_2)\leq \frac{\exp\left(-\frac{13rT}{648F^2\gamma_T}\right)}{\min\left(1,\left(r/2-2\left(\frac{11/3B+3R\sqrt{\gamma_T81F^2/13\log(2/\delta)}}{T}+\beta_2(\delta/4|\cU_T|)\sqrt{\frac{\gamma_ T}{T}}\right)\right)^d(1/(D_0L_f))^{d}\right)}
\end{align*}

By choosing:
$$r=8\left(\frac{11/3B+3R\sqrt{\gamma_T81F^2/13\log(2/\delta)}}{T}+\beta_2(\delta/2|\cU_T|)\sqrt{\frac{\gamma_ T}{T}}\right)+\frac{\gamma_T}{T}\frac{1}{\varepsilon}\frac{648BF^2}{13}\left(d\log\left(\frac{2592BF^2TL_fD_0}{13\gamma_T}\varepsilon\right)+\log(1/\delta)\right)$$
and using  eq.(\ref{utility:eq_2})
we have with probability at least $1-3\delta$:
\begin{align}\label{Theorem_regret:eq_2}
&\overline{\mu}_T(x_{T+1},c_{T+1})\geq \sup_{x\in \cX}\overline{\mu}_T(x,c_{T+1})-\\
&-8\left(\frac{11/3B+3R\sqrt{\gamma_T81F^2/13\log(2/\delta)}}{T}+\beta_2(\delta/4|\cU_T|)\sqrt{\frac{\gamma_ T}{T}}\right)-\frac{\gamma_T}{T}\frac{1}{\varepsilon}\frac{648BF^2}{13}\left(d\log\left(\frac{2592BF^2TL_fD_0}{13\gamma_T}\varepsilon\right)+\log(1/\delta)\right)\nonumber   
\end{align}
Recall the notation $x^*_{T+1}=\mathrm{argsup}_{x\in \cX}f(x,c_{T+1})$. To finish the proof we once again use  Lemma \ref{Lemma:sensitivity_bound} to guarantee w.p at least $1-\delta$
\begin{align*}
    &f(x^{*}_{T+1},c_{T+1})-f(x_{T+1},c_{T+1})=\\
    &=f(x^{*}_{T+1},c_{T+1})-\overline{\mu}_T(x^{*}_{T+1},c_{T+1})-(f(x_{T+1},c_{T+1})-\overline{\mu}_T(x_{T+1},c_{T+1}))+(\overline{\mu}_T(x^{*}_{T+1},c_{T+1})-\overline{\mu}_T(x_{T+1},c_{T+1}))\leq\\
    &\leq  2\sup_{w\in\cW}|f(w)-\overline{\mu}_T(w)|+\sup_{x\in \cX}\overline{\mu}_T(x,c_{T+ 1})-\overline{\mu}_T(x_{T+1},c_{T+1})\leq \\
    &\leq 10\left(\frac{11/3B+3R\sqrt{\gamma_T81F^2/13\log(6/\delta)}}{T}+\beta_2(\delta/12|\cU_T|)\sqrt{\frac{\gamma_ T}{T}}\right)+\frac{\gamma_T}{T}\frac{1}{\varepsilon}\frac{648BF^2}{13}\left(d\log\left(\frac{2592BF^2TL_f}{13\gamma_T}\varepsilon\right)+\log(3/\delta)\right) 
\end{align*}
Note that in the above expression  the randomness is over the previous contexts,rewards and random coins of the algorithm and not over the final context $c_{T+1}$. It thus follows that the claim holds uniformly over the entire context set $\cC$. We can thus write:
\begin{align*}
    &\mathop{\E}_{c_{T+1} \sim \kappa}\left[\sup_{x \in \cX} f(x,c_{T+1}) - f(\widehat{x}_T(c_{T+1}), c_{T+1})\right]\leq \\
    &\leq 10\left(\frac{11/3B+3R\sqrt{\gamma_T81F^2/13\log(6/\delta)}}{T}+\beta_2(\delta/12|\cU_T|)\sqrt{\frac{\gamma_ T}{T}}\right)+\frac{\gamma_T}{T}\frac{1}{\varepsilon}\frac{648BF^2}{13}\left(d\log\left(\frac{2592BF^2TL_fD_0}{13\gamma_T}\varepsilon\right)+\log(3/\delta)\right) 
\end{align*}

\end{proof}

\section{Appendix D. Kernel Trick}

IIn this section we present an efficient way to calculate the posterior statistics $\overline{\mu}_T, \overline{\sigma}$.

\begin{lemma}\label{appendix:kernel_trick}
    The estimator given in parametric form in Lemma\ref{approximation_lemma} can be equivalently written as:
    \begin{align}\label{appendixalgo:estimator_calc}
    \overline{\mu}_T(w)&=\frac{1}{\tau}\bfk^{\top}_{\mathbf{W}_T}(w)\bfY_T-\frac{1}{\tau}\bfk^{\top}_{\cZ}(w)\left(\bfK_{\cZ, \cZ}+K\tau\mathbf{I}_{\cZ}\right)^{-1}\bfK_{\cZ,\bfW_T}\bfY_T\\
    \overline{\sigma}(w)^2&=\frac{1}{\tau}\left(k(w,w)-\bfk^{\top}_{\cZ}(w)\left(K_{\cZ,\cZ}+K\tau\mathbf{I}_{\cZ}\right)^{-1}k_{\cZ}(w)\right)
    \end{align}

    Where $\bfK_{\cZ,\bfW_T}=\{k(a,b)\}_{a\in \cZ,b\in \bfW_T}, \bfK_{\cZ,\cZ}=\{k(a,b)\}_{(a,b)\in \cZ^2}$ and $K=\lceil T/\gamma_T\rceil$ .
\end{lemma}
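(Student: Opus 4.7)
The plan is to derive both identities by applying the matrix inversion (Woodbury) lemma to $\tildeZ$, which collapses an operator acting on the potentially infinite-dimensional RKHS into a computation involving only kernel evaluations on the finite sets $\bfW_T$ and $\cZ$. Starting from the parametric form established in Lemma~\ref{approximation_lemma}, namely $\overline{\mu}_T(w)=\phi(w)^\top\tildeZ^{-1}\Phi_{\bfW_T}\bfY_T$ and $\overline{\sigma}^2(w)=\phi(w)^\top\tildeZ^{-1}\phi(w)$, where $\tildeZ=\tfrac{1}{K}\Phi_{\cZ}\Phi_{\cZ}^\top+\tau\Id$, the goal is simply to rewrite $\tildeZ^{-1}$ in a form that replaces the outer product $\Phi_{\cZ}\Phi_{\cZ}^\top$ (which lives on $\cH_k$) with the Gram matrix $\bfK_{\cZ,\cZ}=\Phi_{\cZ}^\top\Phi_{\cZ}$ (which lives on $\mathbb{R}^{KT}$).

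First I would invoke the Woodbury identity with $A=\tau\Id$, $U=\Phi_{\cZ}$, $C=\tfrac{1}{K}\mathbf{I}_{KT}$, $V=\Phi_{\cZ}^\top$, yielding
\begin{align*}
\tildeZ^{-1}=\frac{1}{\tau}\Id-\frac{1}{\tau}\Phi_{\cZ}\bigl(\bfK_{\cZ,\cZ}+K\tau\,\mathbf{I}_{KT}\bigr)^{-1}\Phi_{\cZ}^\top.
\end{align*}
This is the only nontrivial algebraic step; it is valid in the RKHS setting because $\Phi_{\cZ}$ is a bounded operator with finite-dimensional range $\mathrm{span}\{\phi(z_i)\}_{i=1}^{KT}$, so all inverses are well defined and the standard finite-dimensional derivation of Woodbury carries over verbatim.

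Next I would substitute this expression into $\phi(w)^\top\tildeZ^{-1}\Phi_{\bfW_T}\bfY_T$ and identify the resulting inner products as kernel quantities: $\phi(w)^\top\Phi_{\bfW_T}=\bfk_{\bfW_T}^\top(w)$, $\phi(w)^\top\Phi_{\cZ}=\bfk_{\cZ}^\top(w)$, and $\Phi_{\cZ}^\top\Phi_{\bfW_T}=\bfK_{\cZ,\bfW_T}$. This immediately gives eq.~(\ref{appendixalgo:estimator_calc}). The variance formula follows by the same substitution into $\phi(w)^\top\tildeZ^{-1}\phi(w)$, using $\phi(w)^\top\phi(w)=k(w,w)$ and $\Phi_{\cZ}^\top\phi(w)=\bfk_{\cZ}(w)$.

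I do not anticipate any real obstacle: once the Woodbury identity is written down, the rest is bookkeeping. The only point to treat with mild care is verifying that the Woodbury identity applies in the (generally infinite-dimensional) RKHS, which is immediate from the finite-rank structure of $\Phi_{\cZ}\Phi_{\cZ}^\top$. I would state this briefly and proceed with the substitution.
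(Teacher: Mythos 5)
Your proposal is correct and leads to the same result by essentially the same route: the paper obtains the Woodbury-type formula for $\tildeZ^{-1}$ by first expanding $\tildeZ^{-1}$ via the add-and-subtract decomposition $\tildeZ^{-1}=\frac{1}{\tau}\Id-\frac{1}{\tau}(\Phi_{\cZ}\Phi_{\cZ}^{\top}+K\tau\Id)^{-1}\Phi_{\cZ}\Phi_{\cZ}^{\top}$ and then applying the push-through identity $(\Phi_{\cZ}\Phi_{\cZ}^{\top}+K\tau\Id)^{-1}\Phi_{\cZ}=\Phi_{\cZ}(\Phi_{\cZ}^{\top}\Phi_{\cZ}+K\tau\Id)^{-1}$, which is exactly an inline derivation of the Woodbury expression you invoke by name; your caveat about the identity being justified in the RKHS because $\Phi_{\cZ}$ has finite rank matches the implicit assumption in the paper.
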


\begin{proof}
    We introduce the shorthand notation $K= \lceil T/\gamma_T\rceil$. The parametric expression for $\overline{\mu}_T$ expression can be re-written as:
    \begin{align*}
        \overline{\mu}_T(w)&=\phi(w)^{\top}\tildeZ^{-1}\Phi_{\bfW_T}\bfY_T=K\phi(w)^{\top}(\Phi_{\cZ}\Phi_{\cZ}^{\top}+K\tau\mathbf{Id})^{-1}\Phi_{\bfW_T}\bfY_T=\\
        &= \frac{1}{\tau}\phi(w)^{\top}\left(\Phi_{\cZ}\Phi_{\cZ}^{\top}+K\tau\mathbf{Id}\right)^{-1}(\Phi_{\cZ}\Phi_{\cZ}^{\top}+K\tau\mathbf{Id}-\Phi_{\cZ}\Phi_{\cZ}^{\top})\Phi_{\bfW_T}\bfY_T=\\
        &=\frac{1}{\tau}k_{\mathbf{W}_T}(w)^{\top}\bfY_T-\frac{1}{\tau}\phi(w)^{\top}\left(\Phi_{\cZ}\Phi_{\cZ}^{\top}+K\tau\mathbf{Id}\right)^{-1}\Phi_{\cZ}\Phi_{\cZ}^{\top}\Phi_{\bfW_T}\bfY_T
    \end{align*}

    Where The second line follows from the reproducing property.  We next utilize a commonly applied identity \citep{Valko_et_al_2013,Vakili_Kernel_Simple_Regret} $\left(\Phi_{\cZ}\Phi_{\cZ}^{\top}+K\tau\mathbf{Id}\right)^{-1}\Phi_{\cZ}=\Phi_{\cZ}\left(\Phi_{\cZ}^ {\top}\Phi_{\cZ}+K\tau\mathbf{Id}\right)^{-1}$ . We can now write:
    \begin{align*}
        \overline{\mu}_T(w)&=\frac{1}{\tau}k_{\mathbf{W}_T}(g^{\top})\bfY_T-\frac{1}{\tau}\phi(w)^{\top}\Phi_{\cZ}\left(\Phi_{\cZ}^{\top}\Phi_{\cZ}+K\tau\mathbf{Id}\right)^{-1}\Phi_{\cZ}^{\top}\Phi_{\bfW_T}\bfY_T=\\
        &=\frac{1}{\tau}k_{\mathbf{W}_T}(w)^{\top}\bfY_T-\frac{1}{\tau}\phi(w)^{\top}\Phi_{\cZ}\left(\Phi_{\cZ}^{\top}\Phi_{\cZ}+K\tau\mathbf{Id}\right)^{-1}\Phi_{\cZ}^{\top}\Phi_{\bfW_T}\bfY_T=\\
        &=\frac{1}{\tau}k_{\mathbf{W}_T}(w)^{\top}\bfY_T-\frac{1}{\tau}k_ {\cZ}(w)^{\top}\left(\bfK_{\cZ, \cZ}+K\tau\mathbf{I}_{\cZ}\right)^{-1}\bfK_{\cZ,\bfW_T}\bfY_T
    \end{align*}

    Where $\bfK_{\cZ,\bfW_T}=\{k(a,b)\}_{a\in \cZ,b\in \bfW_T}$ and $\bfK_{\cZ,\cZ}=\{k(a,b)\}_{(a,b)\in \cZ^2}$.  We use a similar approach in calculating $\overline{\sigma}(w)$:
    \begin{align*}
        \overline{\sigma}^2(w)&=\phi(w)^{\top}\tildeZ^{-1}\phi(w)=\\
        &=\frac{1}{\tau}\phi(w)^{\top}\left(\Phi_{\cZ}\Phi_{\cZ}^{\top}+K\tau\mathbf{Id}\right)^{-1}(\Phi_{\cZ}\Phi_{\cZ}^{\top}+K\tau\mathbf{Id}-\Phi_{\cZ}\Phi_{\cZ}^{\top})\phi(w)=\\
        &=\frac{1}{\tau}k(w,w)-\frac{1}{\tau}\phi(w)^{\top}\left(\Phi_{\cZ}\Phi_{\cZ}^{\top}+K\tau\mathbf{Id}\right)^{-1}\Phi_{\cZ}\Phi_{\cZ}^{\top}\phi(w)=\\
        &=\frac{1}{\tau}k(w,w)-\frac{1}{K^2\tau}\phi(w)^{\top}\Phi_{\cZ}\left(\Phi_{\cZ}^{\top}\Phi_{\cZ}+K\tau\mathbf{Id}\right)^{-1}\Phi_{\cZ}^{\top}\phi(w)=\\
        &=\frac{1}{\tau}\left(k(w,w)-k_{\cZ}(w)^{\top}\left(K_{\cZ,\cZ}+K\tau\mathbf{I}_{\cZ}\right)^{-1}k_{\cZ}(w)\right)
    \end{align*}
\end{proof}

\end{document}